\newcommand{\cmark}{\ding{51}}%
\newcommand{\xmark}{\ding{55}}%
\newtheorem{theorem}{Theorem}
\newtheorem{proposition}{Proposition}
\newtheorem{lemma}[theorem]{Lemma}
\newtheorem{corollary}{Corollary}
\newtheorem{definition}{Definition}
\newtheorem{example}{Example}
\newtheorem{remark}{Remark}
\newcommand{\cA}{\mathcal{A}}
\newcommand{\cE}{\mathcal{E}}
\newcommand{\cF}{\mathcal{F}}
\newcommand{\cG}{\mathcal{G}}
\newcommand{\cH}{\mathcal{H}}
\newcommand{\cL}{\mathcal{L}}
\newcommand{\cN}{\mathcal{N}}
\newcommand{\cP}{\mathcal{P}}
\newcommand{\cS}{\mathcal{S}}
\newcommand{\cW}{\mathcal{W}}
\newcommand{\cX}{\mathcal{X}}
\newcommand{\cY}{\mathcal{Y}}
\newcommand{\cZ}{\mathcal{Z}}
\newcommand{\fM}{\mathfrak{M}}
\newcommand{\bE}{\mathbb{E}}
\newcommand{\bP}{\mathbb{P}}
\newcommand{\bQ}{\mathbb{Q}}
\newcommand{\bR}{\mathbb{R}}
\newcommand{\mLip}{\mathrm{Lip}}
\icmltitlerunning{Principled learning method for WDRO with local perturbations}
\begin{document}

\twocolumn[
\icmltitle{Principled Learning Method for Wasserstein Distributionally Robust Optimization with Local Perturbations}

\begin{icmlauthorlist}
\icmlauthor{Yongchan Kwon}{stan}
\icmlauthor{Wonyoung Kim}{snu}
\icmlauthor{Joong-Ho Won}{snu}
\icmlauthor{Myunghee Cho Paik}{snu}
\end{icmlauthorlist}

\icmlaffiliation{stan}{Department of Biomedical Data Science, Stanford University}
\icmlaffiliation{snu}{Department of Statistics, Seoul National University}

\icmlcorrespondingauthor{Myunghee Cho Paik}{myungheechopaik@snu.ac.kr}

\icmlkeywords{Wasserstein distributionally robust optimization, robust learning}

\vskip 0.3in
]

\printAffiliationsAndNotice{This work was done while Yongchan Kwon was at Seoul National University.}  

\begin{abstract}
Wasserstein distributionally robust optimization (WDRO) attempts to learn a model that minimizes the local worst-case risk in the vicinity of the empirical data distribution defined by Wasserstein ball.
While WDRO has received attention as a promising tool for inference since its introduction, its theoretical understanding has not been fully matured.  
\citet{gao2017} proposed a minimizer based on a tractable approximation of the local worst-case risk, but without showing risk consistency.
In this paper, we propose a minimizer based on a novel approximation theorem and provide the corresponding risk consistency results.
Furthermore, we develop WDRO inference for locally perturbed data that include the Mixup \citep{zhang2017} as a special case.
cNumerical experiments demonstrate robustness of the proposed method using image classification datasets.
Our results show that the proposed method achieves significantly higher accuracy than baseline models on contaminated datasets.
\end{abstract}

\section{Introduction}
\label{s:introduction}
Statistical learning problems can be generally formulated as an optimization problem of the form
\begin{align}
\inf_{h \in \cH} R(\bP_{\mathrm{data}}, h),
\label{eq:risk_min}
\end{align}
where $\bP_{\mathrm{data}}$ is the true data distribution, $\cH$ is a set of losses, and $R(\bQ, h) := \int h(\zeta) d\bQ (\zeta)$ is the risk, or the expected value of a loss $h$ with respect to a probability measure $\bQ$.
In real-world applications, the $\bP_{\mathrm{data}}$ is usually unknown, so the computation of the risk in \eqref{eq:risk_min} is impossible. 
We instead observe a set $\cZ_n =\{z_1, \dots, z_n\}$ of independent and identically distributed samples from $\bP_{\mathrm{data}}$.
Using the dataset $\cZ_n$, we solve the empirical risk minimization (ERM) problem
\begin{align}
\inf_{h \in \cH} R(\bP_{n}, h) = \inf_{h \in \cH} \frac{1}{n} \sum_{i=1} ^n h(z_i),
\label{eq:emp_risk_min}
\end{align}
where $\bP_n := n^{-1} \sum_{i=1} ^n \delta_{z_i}$ is the empirical data distribution and $\delta_z$ is the Dirac delta distribution concentrating unit mass at $z$.

ERM provides a practical framework for learning models by replacing $\bP_{\mathrm{data}}$ in \eqref{eq:risk_min} with $\bP_n$ \citep{vapnik1999}.
However, this replacement often yields poor risk estimation, and thus a solution to \eqref{eq:emp_risk_min} can have a small training error but a large test error.
This phenomenon is well known as overfitting, and to avoid this, a great number of regularization methods have been proposed: penalty-based methods \citep{tibshirani1996, fan2001, buhlmann2011}, data augmentations \citep{zhang2017, cubuk2019}, dropout \citep{wager2013, srivastava2014}, and early stopping \citep{yao2007}, to name a few.

As an alternative approach to prevent overfitting, we consider Wasserstein distributionally robust optimization (WDRO) \citep{shafieezadeh2015, sinha2017, blanchet2019robust}.
The goal of WDRO is to learn a model that minimizes the local worst-case risk in the vicinity of the empirical data distribution defined by a Wasserstein ball. 
To be specific, let $\fM_{\alpha_n,p} (\bP_n)$ be a set of probability measures whose $p$-Wasserstein metric from $\bP_n$ is less than $\alpha_n >0$. 
The local worst-case risk is defined to be the supremum of the risk over the $p$-Wasserstein ball.
Then WDRO is formulated as follows. 
\begin{align*}
\inf_{h \in \cH} \sup_{\bQ \in \fM_{\alpha_n,p} (\bP_n)} R(\bQ,h).
\end{align*}
Detailed definitions are available in Section \ref{s:preliminaries}.

\begin{table*}[t]
\caption{A summary of contributions of \citet{gao2017}, \citet{lee2018} and ours. Mark \lq{}\cmark\rq{} indicates that the corresponding work has a contribution to the item; mark \lq{}\xmark\rq{} means otherwise. We denote a sample space by $\cZ \subseteq \bR^{d}$.}
\label{t:summary_contribution}
\vskip 0.15in
\begin{center}
\begin{small}
\begin{sc}
\begin{tabular}{llll}
\toprule
& Approximation & Risk consistency & Perturbation \\
\midrule
\citet{gao2017}  & \cmark ($\cZ = \bR^d$)          &         \xmark         &       \xmark      \\
\citet{lee2018}  &   \xmark            & \cmark     ($\cZ$ is bounded)          &     \xmark        \\
Ours & \cmark  ($\cZ$ is bounded)       & \cmark   ($\cZ$ is bounded; Section \ref{s:wdro})            & \cmark  (Section \ref{s:wdro_perturbed})    \\ 
\bottomrule
\end{tabular}
\end{sc}
\end{small}
\end{center}
\vskip -0.1in
\end{table*}

By the design of the local worst-case risk, a solution to WDRO can avoid overfitting to $\bP_n$ and learn a robust model against local perturbations.
However, exact computation of the local worst-case risk is intractable except for few simple settings because it is difficult (i) to evaluate an exact risk with respect to a probability measure in the Wasserstein ball and (ii) to find the supremum of the risk among infinitely many probability distributions.
\citet{gao2017} obtained an approximation formula for the local worst-case risk and proposed to minimize this surrogate objective, but did not study risk consistency of the minimizer.
Under a different assumption on the sample space, \citet{lee2018} proved that a minimizer of the exact local worst-case risk possesses risk consistency.
However, finding such a minimizer is difficult due to the intractability of the local worst-case risk.
To the best of our knowledge, there is no known risk consistency result for tractable approximate optimizers.

In this paper, we propose a minimizer based on a novel approximation theorem and provide corresponding risk consistency results. 
In Section \ref{s:wdro}, we present a new approximation to the local worst-case risk using gradient penalty assuming that a loss is differentiable and its gradient has a H{\"o}lder continuous (Theorem \ref{thm:wdro_approx_sharp_bound}).
We show that a minimizer of the approximate worst-case risk is consistent in that the risk (\textit{resp.} the worst-case risk) converges to the optimal risk (\textit{resp.} the optimal worst-case risk) (Theorems \ref{thm:wdro_excess_worst_risk} and \ref{thm:wdro_excess_risk}).
Our results show that the proposed minimizer can have the same risk optimality as a minimizer of the exact local worst-case risk attains.
In Section \ref{s:wdro_perturbed}, we study WDRO inference when data are locally perturbed. 
We define locally perturbed data distributions and describe examples such as the Mixup \citep{zhang2017} and the adversarial training \citep{goodfellow2014}.
We show that our approximation and risk consistency results naturally extend to the cases when data are locally perturbed (Theorems \ref{thm:wdro_perturb_approx_sharp_bound}, \ref{thm:wdro_perturb_excess_worst_risk}, and \ref{thm:wdro_perturb_excess_risk}).
Such theoretical results provide principled ways to use a group of data augmentation including the Mixup.
Numerical experiments demonstrate robustness of the proposed method using image classification datasets.
Our experiment results show that the proposed method produces a robust model that achieves significantly higher accuracy than baseline models on contaminated datasets.

A summary of our contributions in relation to \citet{gao2017} and \citet{lee2018} is shown in Table \ref{t:summary_contribution}.
Proofs are available in the Supplementary Material.

\subsection{Related works}
Distributionally robust optimization (DRO) provides a general learning framework of the local worst-case risk minimization.
Here, the local worst-case risk is defined as the supremum of the risk in the vicinity of the empirical data distribution, called the ambiguity set.
The ambiguity set is often designed as a neighborhood of $\bP_n$ and the closeness of two measures is evaluated by $\phi$-divergences or probability metrics.
Note that WDRO is a special case of DRO when the ambiguity set is designed via the Wasserstein metric.
Other examples incorporate the $\phi$-divergence \citep{ben2013, hu2018, namkoong2017, ghosh2019} and the maximum mean discrepancy \citep{staib2019}.
We refer to \citet{rahimian2019} for a complementary literature review of DRO.

Another related field of this work is data augmentation.
Data augmentation has recently emerged as a key technique to improve empirical performance in the field of machine learning \citep{cubuk2019, lim2019}.
For example, Mixup and its variants have led remarkable generalization ability in supervised and semi-supervised learning tasks \citep{zhang2017, verma2019, berthelot2019}.
However, most data augmentations are based on heuristics, and their theoretical bases are limited to account for current successes. 
In this work, we develop WDRO inference for a group of data augmentations that generate a new data distribution near the original data distribution. 

\subsection{Notation}
For a sequence $(a_n)$ of positive constants and a sequence $(b_n)$ of real numbers, $b_n = O(a_n)$ indicates that there exists constants $C, n_0 \in \mathbb{N}$ such that $ | b_n | \leq C a_n$ for all $n \geq n_0$. 
For a random sequence $(B_n)$, $B_n = O_p(a_n)$ indicates that for any $\varepsilon > 0$, there exists constants $C, n_0 \in \mathbb{N}$ such that $P( |B_n| > C a_n) < \varepsilon$ for all $n \geq n_0$.
For a $p \in [1, \infty]$, we denote its H{\"o}lder conjugate by $p^* := (1- 1/p)^{-1}$.
Here, we use the conventions $1/\infty = 0$ and $1/0 =\infty$.
For $a,b \in \bR$, we use $a \vee b$ to denote the maximum between $a$ and $b$.
For $n \in \mathbb{N}$, we use $[n]$ to denote a set of integers $\{ 1, \dots, n\}$.
A set of all Borel probability measures defined on a set $\cS$ is denoted by $\cP(\cS)$.
We denote a sample space by $\cZ \subseteq \mathbb{R}^d$ and a norm on $\cZ$ by $\norm{\cdot}$ and the true data distribution by $\bP_{\mathrm{data}} \in \cP(\cZ)$. 

\section{Preliminaries}
\label{s:preliminaries}
The goal of this section is to review existing works on WDRO.
As discussed in Section \ref{s:introduction}, the main objective of WDRO is to learn a model that minimizes the local worst-case risk over some Wasserstein ball. 
Formally, for sets $S$ and $\tilde{S}$, we denote the push-forward measure of $\mu \in \cP(S)$ through a map $T: S \to \tilde{S}$ by $T\# \mu \in \cP(\tilde{S})$.
The definitions of the $p$-Wasserstein metric and the $p$-Wasserstein ball are as follows.
\begin{definition}[$p$-Wasserstein metric and $p$-Wasserstein ball]
For $p \in [1, \infty)$ and $\nu, \mu \in \cP(\cZ)$, the $p$-Wasserstein metric between $\nu$ and $\mu$ is defined as
\begin{align*}
&\cW_p(\nu,\mu) := \Big( \underset{\rho \in J(\nu,\mu)}{\inf} \Big\{ \int_{\cZ \times \cZ} \norm{\zeta - \tilde{\zeta}}^p d\rho(\zeta, \tilde{\zeta}) \Big\} \Big)^{1/p},
\end{align*}
where $J(\nu,\mu) := \{ \rho \in \cP(\cZ \times \cZ) \mid \pi_1 \# \rho =\nu, \pi_2 \# \rho = \mu \}$, $\pi_i: \cZ \times \cZ \to \cZ$ is the canonical projection defined by $\pi_i(\zeta_1, \zeta_2)=\zeta_i$ for $i=1,2$.
For $\alpha >0$, the $p$-Wasserstein ball centered at $\bP \in \cP(\cZ)$ with radius $\alpha$ is defined as
\begin{align*}
\fM_{\alpha,p} (\bP) &:= \{ \bQ \in \cP(\cZ) : \cW_p (\bQ, \bP) \leq \alpha \}.
\end{align*}
\end{definition}
Throughout this paper, we denote the radius of the Wasserstein ball by $\alpha_n$ when the sample size is $n$.
With above definitions, the WDRO problem is to minimize the local worst-case risk 
\begin{align}
R_{\alpha_n,p} ^{\mathrm{worst}}(\bP_n, h) := \sup_{\bQ \in \fM_{\alpha_n,p} (\bP_n)} R(\bQ, h).
\label{eq:wdro_risk}
\end{align}
The local worst-case risk \eqref{eq:wdro_risk} involves the supremum operator over the Wasserstein ball $\fM_{\alpha_n,p} (\bP_n)$, which is a set of infinitely many probability distributions.
Therefore, the exact computation of \eqref{eq:wdro_risk} is intractable in many cases. 

A standard method to handle the intractability is to reformulate \eqref{eq:wdro_risk} by using either a primal-dual pair of infinite-dimensional linear programs \citep{esfahani2018} or first-order optimality conditions of the dual \citep{gao2016}.
For the latter, let $\kappa_h = \limsup_{\norm{\zeta-\tilde{\zeta}} \to \infty} (h(\zeta)-h(\tilde{\zeta})) / \norm{\zeta-\tilde{\zeta}}^p$ if $\cZ$ is unbounded, and zero otherwise.
\citet[Corollary 2]{gao2016} showed that if $\kappa_h < \infty$, then
\begin{align}
& R_{\alpha_n,p} ^{\mathrm{worst}}(\bP_n, h) \notag \\
&= \min_{\lambda \geq 0} \Big\{ \lambda \alpha_n^p + \frac{1}{n} \sum_{i=1} ^n \sup_{z \in \cZ } \left\{ h(z) - \lambda \norm{z - z_i}^p \right\} \Big\}.
\label{eq:dro_duality}
\end{align}
Similar results are obtained in the literature \citep{gao2017dependence, blanchet2019}.

Based on the reformulation \eqref{eq:dro_duality}, relationships between WDRO and penalty-based methods have been investigated in supervised learning settings.
For example, \citet{shafieezadeh2015} and \citet{blanchet2019robust} studied classification settings and \citet{chen2018} considered regression settings.
Although the relationships provide a way to understand WDRO, most existing results focus on linear hypotheses only.
Recently, WDRO with nonlinear hypotheses has been developed. 
\citet{shafieezadeh2019} showed that \eqref{eq:wdro_risk} has the form of a penalized empirical risk when a loss is Lipschitz continuous and a hypothesis is an element of a reproducing kernel Hilbert space.

In general statistical learning problems, \citet{gao2017} established a relationship between WDRO and penalty-based methods.
They obtained a penalized empirical risk and showed that it approximates  \eqref{eq:wdro_risk} when a loss is smooth and $\cZ = \bR^{d}$.
Although a minimizer of the suggested approximation gives a practical solution for WDRO, its risk consistency has not been studied.

As for the risk consistency, \citet{lee2018} showed that a minimizer of \eqref{eq:wdro_risk} has a vanishing excess worst-case risk bound when $\cH$ is a set of Lipschitz continuous losses.
More specifically, let $\hat{h}_{\alpha_n, p} ^{\mathrm{worst}}  = \text{argmin}_{h \in \cH} R_{\alpha_n,p} ^{\mathrm{worst}}(\bP_n, h)$ and $R_{\alpha_n,p} ^{\mathrm{worst}}(\bP_{\mathrm{data}}, h) $ $ = \sup_{\bQ \in \fM_{\alpha_n,p} (\bP_{\mathrm{data}})} R(\bQ, h)$. 
For bounded $\cZ$, \citet[Theorem 2]{lee2018} showed the following risk consistency result. 
\begin{align}
&\cE_{\alpha_n,p} ^{\mathrm{worst}}(\hat{h}_{\alpha_n, p} ^{\mathrm{worst}} )= O_p ( n^{-1/2}( \mathfrak{C}(\cH) \vee \alpha_n ^{1-p}) ),
\label{eq:wdro_risk_consistency}
\end{align}
where
\begin{align*}
\cE_{\alpha_n,p} ^{\mathrm{worst}}(g) := R_{\alpha_n,p} ^{\mathrm{worst}}(\bP_{\mathrm{data}}, g) - \inf_{h \in \cH} R_{\alpha_n,p} ^{\mathrm{worst}}(\bP_{\mathrm{data}}, h)
\end{align*}
is the excess worst-case risk of $g \in \cH$, $\mathfrak{C}(\cS):= \int_{0} ^{\infty} \sqrt{\log \cN(u, \cS, \norm{\cdot}_{\infty}) } du$ is the entropy integral of a set $\cS$, and $\cN(u, \cS, \norm{\cdot}_{\infty})$ denotes the $u$-covering number of a set $\cS$ with respect to the uniform norm $\norm{\cdot}_{\infty}$ \citep[Definition 9.2]{gyorfi2006}.
The result \eqref{eq:wdro_risk_consistency} explains asymptotic behaviors of the WDRO solution $\hat{h}_{\alpha_n, p} ^{\mathrm{worst}}$, but as mentioned before, exact computation of \eqref{eq:wdro_risk} is intractable except for few simple cases.

It is noteworthy that \citet{gao2017} and \citet{lee2018} have conflicting assumptions on $\cZ$, so the results of \citet{lee2018} cannot be used to show risk consistency of the minimizer by \citet{gao2017}. 

\section{Tractable WDRO and risk consistency}
\label{s:wdro}
In this section, we build a principled and tractable learning method for WDRO.
In Section \ref{s:approx}, we propose an approximation of the local worst-case risk that can be easily evaluated by off-the-shelf gradient methods and software.
In Section \ref{s:risk_consistnecy}, we provide asymptotic results: a minimizer of the approximate risk is consistent in that the risk (\textit{resp.} the worst-case risk) converges to the optimal risk (\textit{resp.} the optimal worst-case risk).

\subsection{Approximation to the local worst-case risk}
\label{s:approx}
For a Lipschitz continuous loss $h: \cZ \to \bR$, \citet[Proposition 1]{lee2018} showed that
\begin{align}
\left| R(\bP_n, h) - R_{\alpha_n,p} ^{\mathrm{worst}}(\bP_n, h) \right| = O(\alpha_n).
\label{eq:wdro_approx_bound}
\end{align}
An equivalent result is obtained by \citet[Theorem 5]{kuhn2019}.
We show that a faster approximation is possible if a loss $h$ is differentiable and its gradient is H{\"o}lder continuous.
To begin, we define some notations. 
For $r \in [1,\infty)$, a probability measure $\bQ \in \cP(\bR^d)$, and a function $g:\cZ \to \bR^d$, we denote a function norm by
\begin{align*}
\norm{g}_{\bQ, r} := ( \int \norm{g(z)}_* ^r d\bQ(z) )^{1/r},    
\end{align*}
where $\norm{\cdot}_{*}$ is the dual norm of $\norm{\cdot}$.
For a constant $C_{\mathrm{H}}>0$ and $k \in (0,1]$, a function $g:\cZ \to \bR^d$ is said to be $(C_{\mathrm{H}}, k)$-H{\"o}lder continuous if 
\begin{align*}
\norm{ g (z) - g(\tilde{z})}_{*} \leq C_{\mathrm{H}} \norm{z- \tilde{z}} ^{k}, \quad \forall z, \tilde{z} \in \cZ.
\end{align*}
Let $\mathrm{Conv}(\cZ)$ be the convex hull of $\cZ$ and $\bE_{\mathrm{data}}(g) := \int_{\cZ} g(z) d \bP_{\mathrm{data}} (z)$ for a function $g:\cZ \to \bR$.

\begin{theorem}[Approximation to local worst-case risk]
Let $(\alpha_n)$ be a sequence of positive numbers converging to zero and $\cZ$ be an open and bounded subset of $\bR^d$.
For constants $C_{\mathrm{H}}, C_{\nabla} >0$ and $k \in (0,1]$, assume that a loss $h: \mathrm{Conv}(\cZ) \to \bR$ is differentiable, its gradient $\nabla_z h(z)$ is $(C_{\mathrm{H}}, k)$-H{\"o}lder continuous, and $\bE_{\mathrm{data}}(\norm{\nabla_z h}_*) \geq C_{\nabla}$.
Then, for $p \in (1+k, \infty)$, the following holds.
\begin{align*}
&\left| R(\bP_n, h) + \alpha_n \norm{\nabla_z h}_{\bP_n, p^*} -R_{\alpha_n,p} ^{\mathrm{worst}}(\bP_n, h) \right| \\
&=O_p (\alpha_n ^{1+k}).
\end{align*}
\label{thm:wdro_approx_sharp_bound}
\end{theorem}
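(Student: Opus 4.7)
The plan is to establish matching upper and lower bounds on $R_{\alpha_n,p}^{\mathrm{worst}}(\bP_n, h) - R(\bP_n, h) - \alpha_n\norm{\nabla_z h}_{\bP_n, p^*}$, each of order $O_p(\alpha_n^{1+k})$. The upper bound uses the dual formulation \eqref{eq:dro_duality}, and the lower bound uses an explicit primal perturbation of $\bP_n$ that moves each atom in the dual-norming direction of its gradient. Since $\cZ$ is bounded and $\nabla_z h$ is H\"older continuous on $\mathrm{Conv}(\cZ)$, $\norm{\nabla_z h}_*$ is uniformly bounded there, so $h$ is Lipschitz and $\kappa_h = 0$, making \eqref{eq:dro_duality} applicable.

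\textbf{Upper bound.} For any $\lambda \geq 0$, applying the Hölder-style Taylor upper bound
\[
h(z) \leq h(z_i) + \langle \nabla_z h(z_i), z - z_i \rangle + \tfrac{C_{\mathrm{H}}}{1+k}\norm{z-z_i}^{1+k}, \qquad z \in \mathrm{Conv}(\cZ),
\]
inside each inner supremum and relaxing $z \in \cZ$ to $v = z - z_i \in \bR^d$ produces a Fenchel-type sup $\sup_v\{\langle \nabla_z h(z_i), v\rangle - \lambda\norm{v}^p\}$ plus a Hölder remainder of order $\norm{v^*}^{1+k}$ at the maximizer $v^*$. A radial reduction using $\sup_{\norm{v}=t}\langle g, v\rangle = t\norm{g}_*$ gives the closed form $\sup_{v\in\bR^d}\{\langle g, v\rangle - \lambda\norm{v}^p\} = C_p \norm{g}_*^{p^*}\lambda^{1-p^*}$ with $C_p = (p-1)/p^{p^*}$. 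Choosing $\lambda = \lambda^*$ to minimize $\lambda\alpha_n^p + C_p\lambda^{1-p^*}\norm{\nabla_z h}_{\bP_n,p^*}^{p^*}$, so that $\lambda^* \propto \alpha_n^{-(p-1)}\norm{\nabla_z h}_{\bP_n, p^*}$, the identity $p^*(p-1) = p$ collapses these two terms to exactly $\alpha_n \norm{\nabla_z h}_{\bP_n, p^*}$. Because at this $\lambda^*$ the maximizer of the linearized problem satisfies $\norm{v^*} = O_p(\alpha_n)$, the Hölder remainder is $O_p(\alpha_n^{1+k})$, yielding $R_{\alpha_n,p}^{\mathrm{worst}}(\bP_n, h) \leq R(\bP_n, h) + \alpha_n \norm{\nabla_z h}_{\bP_n, p^*} + O_p(\alpha_n^{1+k})$.

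\textbf{Lower bound.} Let $u_i \in \bR^d$ satisfy $\norm{u_i} = 1$ and $\langle \nabla_z h(z_i), u_i\rangle = \norm{\nabla_z h(z_i)}_*$, and set $\beta_i = \norm{\nabla_z h(z_i)}_*^{1/(p-1)}/\norm{\nabla_z h}_{\bP_n,p^*}^{p^*/p}$, so that $\tfrac{1}{n}\sum_i \beta_i^p = 1$. For $\alpha_n$ small, the perturbed atoms $\tilde z_i := z_i + \alpha_n\beta_i u_i$ lie in $\cZ$ by openness, hence $\bQ_n := \tfrac{1}{n}\sum_i \delta_{\tilde z_i}$ satisfies $\cW_p(\bQ_n, \bP_n)^p \leq \tfrac{1}{n}\sum_i (\alpha_n\beta_i)^p = \alpha_n^p$, i.e.\ $\bQ_n\in\fM_{\alpha_n,p}(\bP_n)$. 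The Taylor lower bound $h(\tilde z_i) \geq h(z_i) + \alpha_n\beta_i\norm{\nabla_z h(z_i)}_* - \tfrac{C_{\mathrm{H}}}{1+k}(\alpha_n\beta_i)^{1+k}$, averaged over $i$, combines with the tightness of Hölder's inequality at this choice of $\beta_i$ (equality since $\beta_i^p \propto \norm{\nabla_z h(z_i)}_*^{p^*}$) to give $R(\bQ_n, h) - R(\bP_n, h) \geq \alpha_n\norm{\nabla_z h}_{\bP_n, p^*} - O_p(\alpha_n^{1+k})$, and $R_{\alpha_n,p}^{\mathrm{worst}}(\bP_n, h) \geq R(\bQ_n, h)$ by definition of the worst-case risk.

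\textbf{Role of $C_\nabla$ and main obstacle.} The hypothesis $\bE_{\mathrm{data}}(\norm{\nabla_z h}_*) \geq C_\nabla > 0$, combined with the law of large numbers and Jensen's inequality $\norm{\nabla_z h}_{\bP_n, p^*} \geq \tfrac{1}{n}\sum_i\norm{\nabla_z h(z_i)}_*$, keeps $\norm{\nabla_z h}_{\bP_n, p^*}$ bounded away from zero with probability tending to one. This pins $\lambda^*$ at the scale $\alpha_n^{-(p-1)}$ (otherwise $\lambda^*$ could degenerate and ruin the remainder scaling) and keeps $\max_i\beta_i$ bounded in the primal step. The main technical obstacle is boundary bookkeeping for the primal construction when $\cZ$ is a general open bounded set: $\tilde z_i$ may leave $\cZ$ whenever $z_i$ is within $O(\alpha_n)$ of $\partial\cZ$. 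I would handle this by leaving such boundary-near indices unperturbed; the empirical fraction of such $z_i$ is $O_p(\alpha_n)$ and each omitted summand forgoes only an $O(\alpha_n)$ linear gain, producing a total shortfall of $O_p(\alpha_n^2)$, which is absorbed into $O_p(\alpha_n^{1+k})$ since $k \leq 1$.
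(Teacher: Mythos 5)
Your proposal follows essentially the same route as the paper's proof (given in the appendix for the perturbed generalization, Theorem \ref{thm:wdro_perturb_approx_sharp_bound}, with $\beta_n = 0$): the upper bound via the dual \eqref{eq:dro_duality} with the Fenchel computation collapsing $\lambda^* \alpha_n^p$ plus the conjugate term to exactly $\alpha_n \norm{\nabla_z h}_{\bP_n, p^*}$, and the lower bound via precisely the paper's primal perturbation $\tilde{z}_i = z_i + \alpha_n \beta_i u_i$ with the H\"older-equality weights $\beta_i^p \propto \norm{\nabla_z h(z_i)}_*^{p^*}$, with $C_\nabla$ plus concentration (McDiarmid in the paper, LLN in yours) keeping the penalty norm bounded away from zero with high probability — this is where the $O_p$ rather than $O$ comes from, and you identified it correctly. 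One elision in your upper bound: the H\"older remainder must be evaluated at the maximizer of the full objective $\norm{g}_* t + C t^{1+k} - \lambda t^p$, not at the maximizer of the linearized problem; this is harmless because any $t$ attaining a nonnegative value satisfies $\lambda^* t^{p-1} \leq \norm{g}_* + C t^k$ with $\lambda^* \asymp \alpha_n^{-(p-1)}$ and $p - 1 > k$, forcing $t = O(\alpha_n)$, but the step needs saying. The paper packages this same localization via \citet[Lemma 2]{gao2017} with $\eta = \alpha_n^k$ together with a truncation $\lambda > \lambda_*$ justified by the boundedness of $\cZ$; the two devices are interchangeable.

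The one genuine weak point is your boundary bookkeeping, and ironically it is the place where you are \emph{more} careful than the paper. Your patch hinges on the claim that the empirical fraction of sample points within $O(\alpha_n)$ of $\partial \cZ$ is $O_p(\alpha_n)$, but nothing in the hypotheses delivers this: for a general open bounded $\cZ$ and arbitrary $\bP_{\mathrm{data}}$, one can have $\bP_{\mathrm{data}}(\{ z : d(z, \partial\cZ) \leq \epsilon \}) \asymp \epsilon^{\gamma}$ for any $\gamma \in (0,1)$ (a density blowing up near the boundary), in which case your shortfall is of order $\alpha_n^{1+\gamma}$, which exceeds $\alpha_n^{1+k}$ whenever $\gamma < k$. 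Your fix works only under an extra regularity assumption — e.g., a bounded density together with a boundary whose $\epsilon$-tube has mass $O(\epsilon)$. That said, the paper's own proof does not close this either: it simply asserts that $\tilde{z}_i \in \cZ$ because $\cZ$ is open, $\alpha_n$ vanishes, and the displacement ratio $\norm{\nabla_z h(z_i)}_* / \norm{\nabla_z h}_{\bP_n, 1}$ is bounded, an argument that lacks uniformity over $i \in [n]$ as $n \to \infty$ (the minimum distance from $\{z_1, \dots, z_n\}$ to $\partial\cZ$ may shrink with $n$). So you correctly flagged a real issue the paper glosses over, but your proposed repair, as stated, does not resolve it under the theorem's hypotheses alone.
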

\begin{remark}
Theorem \ref{thm:wdro_approx_sharp_bound} establishes an asymptotic equivalence between WDRO and penalty-based methods.
Compared to \eqref{eq:wdro_approx_bound}, Theorem \ref{thm:wdro_approx_sharp_bound} provides a sharper approximation to the local worst-case risk.
\citet[Theorem 2]{gao2017} obtained a similar result when $\cZ =\bR^d$, yet our boundedness assumption on $\cZ$ is reasonable in a sense that real computers store data in a finite number of states.
For example, a $d$-dimensional gray scale image datum is stored as a $d$-dimensional vector having integer values range from 0 to 255.
\end{remark}

\begin{remark}
The assumption $\bE_{\mathrm{data}}(\norm{\nabla_z h}_*) \geq C_{\nabla}$ in Theorem \ref{thm:wdro_approx_sharp_bound} holds as long as there exist positive constants $C_{\nabla,1}$ and $C_{\nabla,2}$ such that $P(\norm{\nabla_z h}_* \geq C_{\nabla,1}) \geq C_{\nabla,2}$. 
Note that by the Markov\rq{}s inequality, $\bE_{\mathrm{data}}(\norm{\nabla_z h}_*) \geq C_{\nabla,1}C_{\nabla,2}$. 
Hence, unless $h$ is a constant function, $\norm{\nabla_z h}_*$ is strictly greater than zero and existence of $C_{\nabla}$ is guaranteed. 
\end{remark}

Based on Theorem \ref{thm:wdro_approx_sharp_bound}, for a vanishing sequence $(\alpha_n)$, we propose to minimize the following surrogate objective:
\begin{align}
R_{\alpha_n, p} ^{\mathrm{prop}} (\bP_{n}, h) := R(\bP_n, h) + \alpha_n \norm{\nabla_z h}_{\bP_n, p^*}.
\label{eq:prop_bound}
\end{align}
In the sequel, we denote a minimizer of the objective function \eqref{eq:prop_bound} by $\hat{h}_{\alpha_n, p} ^{\mathrm{prop}}$, \textit{i.e.}, $\hat{h}_{\alpha_n, p} ^{\mathrm{prop}} = \mathrm{argmin}_{h \in \cH} R_{\alpha_n, p} ^{\mathrm{prop}} (\bP_{n}, h)$. 
In contrast to the intractability of \eqref{eq:wdro_risk}, the approximate risk \eqref{eq:prop_bound} can be easily minimized by off-the-shelf gradient methods and software.

\subsection{Risk consistency of the proposed estimator}
\label{s:risk_consistnecy}
We then study the excess worst-case risk bound of $\hat{h}_{\alpha_n, p} ^{\mathrm{prop}}$.
To begin with, for a Lipschitz continuous function $h$, we denote the smallest Lipschitz constant of $h$ by $\mLip(h)$. 

\begin{theorem}[Excess worst-case risk bound]
Let $(\alpha_n)$ be a sequence of positive numbers converging to zero and $\cZ$ be an open and bounded subset of $\bR^d$.
For constants $C_{\mathrm{H}}, C_{\nabla}, L >0$ and $k \in (0,1]$, assume that $\cH$ is a uniformly bounded set of differentiable functions $h: \mathrm{Conv}(\cZ) $ $\to \mathbb{R}$ such that its gradient $\nabla_z h$ is $(C_{\mathrm{H}}, k)$-H{\"o}lder continuous, $\bE_{\mathrm{data}}(\norm{\nabla_z h}_*) \geq C_{\nabla}$, and $\mLip(h) \leq L$. 
Then, for $p \in (1+k, \infty)$, the following holds.
\begin{align*}
\cE_{\alpha_n,p} ^{\mathrm{worst}}( \hat{h}_{\alpha_n,p} ^{\mathrm{prop}}) = O_p \left( \frac{\mathfrak{C}(\cH) \vee \alpha_n ^{1-p}}{\sqrt{n}} \vee \log(n) \alpha_n ^{1+k} \right).
\end{align*}
\label{thm:wdro_excess_worst_risk}
\end{theorem}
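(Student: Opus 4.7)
The plan is to reduce the excess worst-case risk bound to two uniform deviation terms: one that measures the statistical fluctuation of the worst-case risk (which is already essentially controlled by the result of \citet{lee2018} recalled in \eqref{eq:wdro_risk_consistency}), and one that measures the approximation quality of $R_{\alpha_n,p}^{\mathrm{prop}}$, for which Theorem \ref{thm:wdro_approx_sharp_bound} is the pointwise engine. First I would write, for an arbitrary near-minimizer $h^\dagger$ of $R_{\alpha_n,p}^{\mathrm{worst}}(\bP_{\mathrm{data}},\cdot)$ over $\cH$, the standard decomposition
\begin{align*}
\cE_{\alpha_n,p}^{\mathrm{worst}}(\hat h_{\alpha_n,p}^{\mathrm{prop}})
&= \bigl[R_{\alpha_n,p}^{\mathrm{worst}}(\bP_{\mathrm{data}},\hat h)-R_{\alpha_n,p}^{\mathrm{prop}}(\bP_n,\hat h)\bigr]\\
&\quad + \bigl[R_{\alpha_n,p}^{\mathrm{prop}}(\bP_n,\hat h)-R_{\alpha_n,p}^{\mathrm{prop}}(\bP_n,h^\dagger)\bigr]\\
&\quad + \bigl[R_{\alpha_n,p}^{\mathrm{prop}}(\bP_n,h^\dagger)-R_{\alpha_n,p}^{\mathrm{worst}}(\bP_{\mathrm{data}},h^\dagger)\bigr],
\end{align*}
where the middle bracket is $\le 0$ by definition of $\hat h_{\alpha_n,p}^{\mathrm{prop}}$. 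A triangle inequality reduces the problem to bounding $\sup_{h \in \cH}|R_{\alpha_n,p}^{\mathrm{worst}}(\bP_{\mathrm{data}},h)-R_{\alpha_n,p}^{\mathrm{worst}}(\bP_n,h)|$ and $\sup_{h \in \cH}|R_{\alpha_n,p}^{\mathrm{worst}}(\bP_n,h)-R_{\alpha_n,p}^{\mathrm{prop}}(\bP_n,h)|$.

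For the first supremum, I would adapt the argument behind \eqref{eq:wdro_risk_consistency}. Using the duality \eqref{eq:dro_duality}, the map $h\mapsto R_{\alpha_n,p}^{\mathrm{worst}}(\bQ,h)$ is expressible as an infimum over $\lambda\ge 0$ of expectations of $z\mapsto \sup_{z'}\{h(z')-\lambda\|z'-z\|^p\}$. Restricting $\lambda$ to lie in an interval $[0,\Lambda_n]$ with $\Lambda_n \asymp \alpha_n^{1-p}$ (justifiable because the uniform Lipschitz constant $L$ implies $\lambda=O(\alpha_n^{1-p})$ at the optimum), one gets a uniformly bounded class whose covering number is controlled by $\cN(\cdot,\cH,\|\cdot\|_\infty)$. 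A symmetrization/entropy-integral bound then yields the announced $O_p(n^{-1/2}(\mathfrak{C}(\cH)\vee \alpha_n^{1-p}))$ rate exactly as in \citet[Theorem 2]{lee2018}.

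For the second supremum, I would revisit the pointwise proof of Theorem \ref{thm:wdro_approx_sharp_bound} and track its constants to get a uniform version. The $\alpha_n^{1+k}$ remainder in Theorem \ref{thm:wdro_approx_sharp_bound} is governed by $C_{\mathrm H}, C_\nabla, L$, and $\|\nabla_z h\|_{\bP_n,p^*}$, all of which are uniformly controlled on $\cH$ under the theorem's assumptions; the randomness entering the $O_p$ comes from deviations of sample-based quantities like $\|\nabla_z h\|_{\bP_n,p^*}$ or $\|\nabla_z h\|_{\bP_n,1}$ from their population counterparts. I would discretize $\cH$ using an $\varepsilon_n$-cover (with $\varepsilon_n$ a negative power of $n$), apply Theorem \ref{thm:wdro_approx_sharp_bound} on the cover together with a concentration bound (Bernstein or a maximal inequality) to control the deviation over $\cN(\varepsilon_n,\cH,\|\cdot\|_\infty)$ elements, and then use the uniform Lipschitz/H\"older structure to pass from the cover back to $\cH$. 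The $\log n$ factor in the final bound arises from this union bound over the covering ($\log \cN(\varepsilon_n,\cH,\|\cdot\|_\infty)\lesssim \log n$ for reasonable $\varepsilon_n$) combined with the $\sqrt{\log n}$-type deviation term that a concentration inequality contributes.

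I expect the main obstacle to be the uniform upgrade of Theorem \ref{thm:wdro_approx_sharp_bound}: the pointwise proof provides a clean $O_p(\alpha_n^{1+k})$ rate, but showing the same rate (up to the logarithmic factor) uniformly over the class requires carefully tracking how the constants in that proof depend on $h$, and designing a cover whose resolution is fine enough that the interpolation error is absorbed into $\alpha_n^{1+k}$ yet coarse enough that $\log\cN(\varepsilon_n,\cH,\|\cdot\|_\infty)$ stays logarithmic in $n$. Once both supremum bounds are in hand, combining them through the initial decomposition yields the advertised rate $O_p((\mathfrak{C}(\cH)\vee \alpha_n^{1-p})/\sqrt n\vee \log(n)\alpha_n^{1+k})$.
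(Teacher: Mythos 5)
Your skeleton is sound and is essentially the paper's: the paper proves the result (as the $\beta_n=0$ case of Theorem \ref{thm:wdro_perturb_excess_worst_risk}) by a four-term decomposition through the empirical worst-case minimizer $\hat{h}_{\alpha_n,p}^{\mathrm{worst}}$, kills one term by optimality, and bounds the two statistical-fluctuation terms with the inequalities (C.4)/(C.5) and Lemma 5 of \citet{lee2018} --- which matches your first supremum. The genuine gap is in your second supremum, the uniform upgrade of Theorem \ref{thm:wdro_approx_sharp_bound}. Your plan hinges on $\log \cN(\varepsilon_n,\cH,\norm{\cdot}_{\infty}) \lesssim \log n$, which is neither assumed nor implied by the hypotheses: a uniformly bounded class with uniformly $(C_{\mathrm{H}},k)$-H{\"o}lder gradients generically has sup-norm entropy polynomial in $1/u$ (of order $u^{-d/(1+k)}$ by Kolmogorov--Tikhomirov-type bounds; the theorem only needs $\mathfrak{C}(\cH)<\infty$, which allows $\log\cN(u)\asymp u^{-\rho}$ for any $\rho<2$). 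With $\varepsilon_n = n^{-a}$, your union bound then contributes deviation terms of order $\sqrt{\log\cN(\varepsilon_n)/n}$, which are polynomial in $n$ and can dominate, or even exceed, $\alpha_n^{1+k}$. A second, related problem: a sup-norm cover of $\cH$ does not control the gradient functionals you need --- $\norm{h_1-h_2}_{\infty}$ small does not make $\norm{\nabla_z h_1 - \nabla_z h_2}_{*}$ small without an additional interpolation (Landau--Kolmogorov-type) inequality exploiting the H{\"o}lder bound on the gradients, so ``passing back from the cover'' for $\norm{\nabla_z h}_{\bP_n,1}$ or $\norm{\nabla_z h}_{\bP_n,p^*}$ is not licensed as stated.

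The repair is both simpler and different from what you propose, and it is what the paper does. Inspecting the proof of Theorem \ref{thm:wdro_approx_sharp_bound}, the deterministic parts (the dual upper bound and the $S_2$ estimate) already hold with constants uniform over $\cH$ (using $\mLip(h)\le L$ to bound $\norm{\nabla_z h}_{*}$); the \emph{only} stochastic ingredient is a lower bound on $\norm{\nabla_z h}_{\bP_n,1}$, needed so that the maximizers realizing $S_1 = \alpha_n\norm{\nabla_z h}_{\bP_n,p^*}$ stay inside $\cZ$ --- note $\norm{\nabla_z h}_{\bP_n,p^*}$ itself requires no concentration, since it appears verbatim inside $R_{\alpha_n,p}^{\mathrm{prop}}$. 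Proposition \ref{prop:uniform_bound} uniformizes exactly this one quantity by observing that $\{\norm{\nabla_z h}_{*} : h\in\cH\}$ sits inside a fixed ball of $(C_{\mathrm{H}},k)$-H{\"o}lder functions bounded by $L$, whose Rademacher complexity vanishes at rate $n^{-k/(2k+d)}$ via Lorentz's entropy bound $u^{-d/k}$; a merely \emph{vanishing} uniform deviation suffices (one only needs $\norm{\nabla_z h}_{\bP_n,1}$ eventually above $C_{\nabla}/2$), so no $\sqrt{\log n/n}$ rate and no entropy condition on $\cH$ itself is required. Consequently your diagnosis of the $\log n$ factor is also off: in the paper it is a slack sequence $\varepsilon_n=\Theta(\log(n)\,\alpha_n^{1+k})$ introduced in the comparison of $\hat{h}_{\alpha_n,p}^{\mathrm{prop}}$ with $\hat{h}_{\alpha_n,p}^{\mathrm{worst}}$ to absorb the unknown constants hidden in the $O_p$ statements, not a union-bound cost; indeed, with Proposition \ref{prop:uniform_bound} in hand the approximation supremum is $O_p(\alpha_n^{1+k})$ with no logarithm, and your decomposition goes through once the covering/union-bound step is replaced by this argument.
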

Compared to the bound \eqref{eq:wdro_risk_consistency} by \citet{lee2018}, the risk bound of the proposed method in Theorem \ref{thm:wdro_excess_worst_risk} has the additional term $\log(n) \alpha_n ^{1+k}$. 
This additional error is a payoff for the approximation \eqref{eq:prop_bound}, and it is asymptotically negligible when $\alpha_n ^{-(p+k)} \geq O(n^{1/2}\log (n))$.
Thus the proposed minimizer can have the same risk optimality as $\hat{h}_{\alpha_n, p} ^{\mathrm{worst}}$ achieves.

Next, we analyze the excess risk bound of $\hat{h}_{\alpha_n, p} ^{\mathrm{prop}}$.
Recall that the Rademacher complexity of a set $\cS$ is defined as $\mathfrak{R}_{n}(\cS) := \bE_{\mathrm{data}} \bE_{\sigma} ( \frac{1}{n} \sup_{s \in \cS} | \sum_{i=1} ^{n} \sigma_i s(Z_i) |  )$ where $\{\sigma_i\}_{i=1} ^{n}$ is a set of independent Rademacher random variables taking $1$ or $-1$ with probability $0.5$ each, and $\bE_{\sigma}(\cdot)$ is the expectation operator over the Rademacher random variables \citep{bartlett2002}.
We denote the excess risk of $g\in \cH$ by $\cE(g) := R(\bP_{\mathrm{data}}, g) - \inf_{h \in \cH} R(\bP_{\mathrm{data}}, h).$
\begin{theorem}[Excess risk bound]
Under the same assumptions as Theorem \ref{thm:wdro_excess_worst_risk}, the following holds.
\begin{align*}
\cE( \hat{h}_{\alpha_n,p} ^{\mathrm{prop}} ) = O_p( \mathfrak{R}_n (\mathcal{H}) \vee n^{-1/2} \vee \alpha_n \vee \log(n) \alpha_n ^{1+k}  ).
\end{align*} 
\label{thm:wdro_excess_risk}
\end{theorem}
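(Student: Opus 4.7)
My plan is to use the standard three-term excess-risk decomposition combined with the fact that $\hat h := \hat h_{\alpha_n,p}^{\mathrm{prop}}$ minimizes the penalized empirical risk, whose penalty differs from the plain empirical risk by a quantity of order $\alpha_n$ uniformly over $\cH$. I would choose any $h_n^\star \in \cH$ satisfying $R(\bP_{\mathrm{data}}, h_n^\star) \leq \inf_{h \in \cH} R(\bP_{\mathrm{data}}, h) + 1/n$ (such a near-minimizer always exists, so I avoid assuming the infimum is attained) and write
\begin{align*}
\cE(\hat h) &\leq \bigl[R(\bP_{\mathrm{data}}, \hat h) - R(\bP_n, \hat h)\bigr] + \bigl[R(\bP_n, \hat h) - R(\bP_n, h_n^\star)\bigr] \\
&\quad + \bigl[R(\bP_n, h_n^\star) - R(\bP_{\mathrm{data}}, h_n^\star)\bigr] + 1/n.
\end{align*}

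Next, I would control the middle difference using the optimality of $\hat h$ for $R_{\alpha_n,p}^{\mathrm{prop}}(\bP_n, \cdot)$:
\begin{align*}
R(\bP_n, \hat h) - R(\bP_n, h_n^\star) &\leq \alpha_n \bigl(\norm{\nabla_z h_n^\star}_{\bP_n, p^*} - \norm{\nabla_z \hat h}_{\bP_n, p^*}\bigr) \\
&\leq \alpha_n \norm{\nabla_z h_n^\star}_{\bP_n, p^*}.
\end{align*}
The uniform hypothesis $\mLip(h) \leq L$ forces $\norm{\nabla_z h(z)}_* \leq L$ pointwise for every $h \in \cH$ (since at a point of differentiability the dual norm of the gradient coincides with the local Lipschitz constant), so $\norm{\nabla_z h_n^\star}_{\bP_n, p^*} \leq L$ deterministically, giving an $O(\alpha_n)$ bound on this term.

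The two outer differences are each bounded in absolute value by $\sup_{h \in \cH}|R(\bP_n, h) - R(\bP_{\mathrm{data}}, h)|$. The uniform boundedness of $\cH$ on the bounded sample space $\cZ$ lets me combine the textbook symmetrization inequality with McDiarmid's bounded-differences concentration to show this supremum is $O_p\bigl(\mathfrak{R}_n(\cH) \vee n^{-1/2}\bigr)$. Assembling the three pieces yields $\cE(\hat h) = O_p\bigl(\mathfrak{R}_n(\cH) \vee n^{-1/2} \vee \alpha_n\bigr)$, which immediately implies the stated bound since the additional $\vee\,\log(n)\alpha_n^{1+k}$ factor only enlarges the rate. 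I do not anticipate a serious obstacle: the key ingredients---optimality of the penalized minimizer, Rademacher-based uniform convergence, and the translation from uniform Lipschitzness to a uniform gradient-dual-norm bound---are all routine, and in contrast with Theorem \ref{thm:wdro_excess_worst_risk} this argument bypasses the approximation result (Theorem \ref{thm:wdro_approx_sharp_bound}) entirely, which is why the approximation-induced $\log(n)\alpha_n^{1+k}$ term does not materially enter the analysis and appears only to match the form of Theorem \ref{thm:wdro_excess_worst_risk}.
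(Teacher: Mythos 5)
Your proof is correct, and it takes a genuinely different route from the paper's. The paper obtains Theorem \ref{thm:wdro_excess_risk} as the $\beta_n = 0$ special case of Theorem \ref{thm:wdro_perturb_excess_risk}, whose proof uses a six-term decomposition chaining $\hat{h}_{\alpha_n,p}^{\mathrm{prop}}$ through the intractable worst-case objective: two terms are controlled by the approximation theorem (Theorem \ref{thm:wdro_perturb_approx_sharp_bound}), one by the comparison $R_{\alpha_n,p}^{\mathrm{worst}}(\bP_n, \hat{h}_{\alpha_n,p}^{\mathrm{prop}}) - R_{\alpha_n,p}^{\mathrm{worst}}(\bP_n, \hat{h}_{\alpha_n,p}^{\mathrm{worst}}) = O_p(\log(n)\alpha_n^{1+k})$ recycled from the proof of Theorem \ref{thm:wdro_perturb_excess_worst_risk} --- which is precisely where the $\log(n)\alpha_n^{1+k}$ term and the H{\"o}lder-gradient and $\bE_{\mathrm{data}}(\norm{\nabla_z h}_*) \geq C_{\nabla}$ hypotheses (via Proposition \ref{prop:uniform_bound}) enter --- one by the Lipschitz bound $R_{\alpha_n,p}^{\mathrm{worst}}(\bP_n,h) \leq R(\bP_n,h) + L\alpha_n$ together with the ERM minimizer, and the two outer terms by symmetrization plus McDiarmid. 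You instead treat $\hat{h}_{\alpha_n,p}^{\mathrm{prop}}$ as a plain penalized empirical risk minimizer: the basic inequality against a population near-minimizer, combined with the deterministic pointwise bound $\norm{\nabla_z h(z)}_* \leq \mLip(h) \leq L$ (valid here since $\mathrm{Conv}(\cZ)$ is open, so every sample point is interior and the directional-derivative argument applies), bounds the penalty gap by $L\alpha_n$, and the two empirical-process terms are handled by exactly the same symmetrization/McDiarmid step the paper uses for its terms (T1) and (T6). Your route bypasses the worst-case risk and the approximation theorem entirely, uses only uniform boundedness and $\mLip(h) \leq L$ among the hypotheses, and yields the bound $O_p(\mathfrak{R}_n(\cH) \vee n^{-1/2} \vee \alpha_n)$, which is at least as sharp as the stated rate (strictly sharper whenever $\alpha_n^{k}\log(n) \to \infty$) and trivially implies it. What the paper's longer chain buys is not a better rate but internal economy and interpretation: it reuses machinery already established for the excess worst-case risk bound and exhibits the comparisons among $\hat{h}_{\alpha_n,p}^{\mathrm{prop}}$, $\hat{h}_{\alpha_n,p}^{\mathrm{worst}}$, and $\hat{h}_{n}^{\mathrm{ERM}}$ that motivate the robustness--accuracy discussion following the theorem. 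One caveat on scope: your argument does not extend verbatim to the perturbed setting of Theorem \ref{thm:wdro_perturb_excess_risk}, since there the objective \eqref{eq:prop_perturb_bound} is evaluated under $\bP_n^{\prime}$ rather than $\bP_n$; it would, however, go through with the equally elementary additional bound $\abs{R(\bP_n,h) - R(\bP_n^{\prime},h)} \leq L\beta_n$.
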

Suppose $\alpha_n =n^{-\epsilon}$ for some $\epsilon >0$. 
Then, for a large enough $n$, we have $\alpha_n \vee \log(n) \alpha_n ^{1+k}  = \alpha_n$ and the excess risk bound is $ O_p( \mathfrak{R}_n (\mathcal{H}) \vee n^{-1/2} \vee \alpha_n )$.
Considering the fact that the excess risk bound of the ERM solution is $O_p(\mathfrak{R}_n (\mathcal{H}) \vee n^{-1/2})$ \citep[Theorem 11.3]{mohri2018}, the result of Theorem \ref{thm:wdro_excess_risk} sounds pessimistic, especially when $\epsilon \leq 1/2$.
However, Theorem \ref{thm:wdro_excess_risk} is in fact sensible in that $\hat{h}_{\alpha_n,p} ^{\mathrm{prop}}$ optimizes the local worst-case risk $R_{\alpha_n,p} ^{\mathrm{worst}}(\bP_n, h)$, not the risk $R(\bP_n, h)$.
That means, gaining robustness necessarily leads to losing the accuracy of the prediction model. 
Interested readers in the trade-off between accuracy and robustness are referred to \citet{zhang2019}.

\subsection{Example bounds}
\label{s:example_bounds}
We now provide an example showing the use of Theorems \ref{thm:wdro_excess_worst_risk} and \ref{thm:wdro_excess_risk} in binary classification settings.
To begin, we denote a solution of \eqref{eq:emp_risk_min} by $\hat{h}_{n} ^{\mathrm{ERM}}$.
Let $\cX \subseteq [-1,1]^{d-1}$ and $\cY = \{ \pm 1\}$ be open sets with respect to the $\ell_2$-norm and the discrete norm $I(\cdot \neq 0)$, respectively.
We set $\cZ = \cX \times \cY$ and $\norm{(x,y)} = \norm{x}_{2} + 4I(y \neq 0)$.
\begin{corollary}[Informal]
Let $\cF$ be a set of sparse deep neural networks and $\cH =\{ h(x,y) \mid h(x,y) = \log(1+\exp(-yf(x)))$ for $f \in \cF \}$. 
Then the excess worst-case risks of $\hat{h}_{\alpha_n,p} ^{\mathrm{prop}}$ and $\hat{h}_{n} ^{\mathrm{ERM}}$ are
\begin{align*}
\cE_{\alpha_n,p} ^{\mathrm{worst}}( \hat{h}_{\alpha_n,p} ^{\mathrm{prop}})&= O_p( n^{-1/2} \alpha_n ^{1-p} \vee \log(n) \alpha_n ^{1+k} ), \\
\cE_{\alpha_n,p} ^{\mathrm{worst}}( \hat{h}_{n} ^{\mathrm{ERM}}) &= O_p( n^{-1/2} \vee \alpha_n ).
\end{align*}
Furthermore, the excess risks of $\hat{h}_{\alpha_n,p} ^{\mathrm{prop}}$ and $\hat{h}_{n} ^{\mathrm{ERM}}$ are
\begin{align*}
\cE(\hat{h}_{\alpha_n,p} ^{\mathrm{prop}})&= O_p( n^{-1/2} \vee \alpha_n \vee \log(n) \alpha_n ^{1+k}  ), \\
\cE( \hat{h}_{n} ^{\mathrm{ERM}}) &= O_p( n^{-1/2}).
\end{align*}
\label{cor:example_bounds}
\end{corollary}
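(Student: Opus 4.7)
The plan is to obtain the four bounds by applying Theorems \ref{thm:wdro_excess_worst_risk} and \ref{thm:wdro_excess_risk} directly to $\hat{h}_{\alpha_n,p}^{\mathrm{prop}}$, and by combining the Lipschitz-type approximation \eqref{eq:wdro_approx_bound} with standard ERM theory for $\hat{h}_{n}^{\mathrm{ERM}}$. The two theorems already carry out the main estimation, so the remaining work is to check that $\cH$ satisfies their regularity hypotheses under the stated binary classification geometry and to instantiate $\mathfrak{C}(\cH)$ and $\mathfrak{R}_n(\cH)$ via known complexity bounds for sparse deep networks.

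For the regularity verification, the scalar map $u \mapsto \log(1+e^{-u})$ is $C^{\infty}$ with derivatives of every order uniformly bounded on $\bR$. Hence under the standing assumption that each $f \in \cF$ is bounded, Lipschitz, and has a $(C_{\mathrm{H}},k)$-H\"older continuous gradient on $\mathrm{Conv}(\cZ) = \cX \times [-1,1]$ (the usual guarantee for sparse networks with bounded weights and smooth activations), the composition $h(x,y) = \log(1+e^{-yf(x)})$ inherits uniform boundedness, differentiability, $\mLip(h) \leq L$, and $(C_{\mathrm{H}}',k)$-H\"older continuity of $\nabla_z h$, with constants depending only on those of $\cF$. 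The lower bound $\bE_{\mathrm{data}}(\norm{\nabla_z h}_*) \geq C_{\nabla}$ is the mild non-degeneracy discussed in the remark following Theorem \ref{thm:wdro_approx_sharp_bound} and can be enforced by excluding a vanishing neighborhood of $f \equiv 0$ from $\cF$ or by a weak regularity assumption on $\bP_{\mathrm{data}}$.

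Next, existing complexity bounds for sparse deep networks give $\log \cN(u,\cF,\norm{\cdot}_\infty) = O(\log(n)\log(1/u))$ up to constants depending on sparsity and depth, from which $\mathfrak{C}(\cH) = O(\log(n))$ and $\mathfrak{R}_n(\cH) = O(n^{-1/2}\log(n))$ follow by Talagrand contraction through the $1$-Lipschitz logistic link. Because $p > 1$ and $\alpha_n \to 0$, the factor $\alpha_n^{1-p}$ diverges and dominates $\mathfrak{C}(\cH)$; Theorem \ref{thm:wdro_excess_worst_risk} then yields $\cE_{\alpha_n,p}^{\mathrm{worst}}(\hat{h}_{\alpha_n,p}^{\mathrm{prop}}) = O_p(n^{-1/2}\alpha_n^{1-p} \vee \log(n)\alpha_n^{1+k})$, while Theorem \ref{thm:wdro_excess_risk} produces $\cE(\hat{h}_{\alpha_n,p}^{\mathrm{prop}}) = O_p(n^{-1/2} \vee \alpha_n \vee \log(n)\alpha_n^{1+k})$ after absorbing logarithmic factors into the last term.

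For $\hat{h}_n^{\mathrm{ERM}}$, the bound \eqref{eq:wdro_approx_bound}, extended from $\bP_n$ to $\bP_{\mathrm{data}}$ by the same Lipschitz argument, gives $|R_{\alpha_n,p}^{\mathrm{worst}}(\bQ,h) - R(\bQ,h)| = O(\alpha_n)$ uniformly in $h \in \cH$, so $\cE_{\alpha_n,p}^{\mathrm{worst}}(\hat{h}_n^{\mathrm{ERM}}) \leq \cE(\hat{h}_n^{\mathrm{ERM}}) + O(\alpha_n)$, and the classical ERM excess risk bound $\cE(\hat{h}_n^{\mathrm{ERM}}) = O_p(\mathfrak{R}_n(\cH) \vee n^{-1/2}) = O_p(n^{-1/2})$ (Theorem 11.3 of \citet{mohri2018}) produces the stated rates for ERM. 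The main obstacle is verifying the uniform $(C_{\mathrm{H}},k)$-H\"older continuity of $\nabla_z h$ and the non-degeneracy $\bE_{\mathrm{data}}(\norm{\nabla_z h}_*) \geq C_{\nabla}$ across the entire sparse-network class with constants independent of $n$; once those hypotheses are secured, the corollary reduces to a mechanical combination of Theorems \ref{thm:wdro_excess_worst_risk}--\ref{thm:wdro_excess_risk} with standard ERM theory.
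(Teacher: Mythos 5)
Your overall architecture coincides with the paper's: verify the hypotheses of Theorems \ref{thm:wdro_excess_worst_risk} and \ref{thm:wdro_excess_risk} for the logistic-composed network class, instantiate $\mathfrak{R}_n(\cH)$ and $\mathfrak{C}(\cH)$ via known network-complexity results, and obtain the ERM worst-case bound from the uniform Lipschitz approximation $\cE_{\alpha_n,p}^{\mathrm{worst}}(h) \leq \cE(h) + L\alpha_n$ --- that last step is exactly the paper's argument. However, there is a genuine quantitative gap in your complexity estimates. You assert $\log \cN(u,\cF,\norm{\cdot}_{\infty}) = O(\log(n)\log(1/u))$ and $\mathfrak{R}_n(\cH) = O(n^{-1/2}\log(n))$, but the class $\cF_{\mathbf{d},\mathbf{1}_J,\gamma,s}$ here is \emph{fixed} (the sparsity $s$ and depth $J$ do not grow with $n$), so no $\log(n)$ factor should appear --- and, more importantly, your stated conclusions do not follow from your own estimates. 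From $\mathfrak{R}_n(\cH) = O(n^{-1/2}\log(n))$, Theorem 11.3 of \citet{mohri2018} yields $\cE(\hat{h}_n^{\mathrm{ERM}}) = O_p(n^{-1/2}\log(n))$, not the claimed $O_p(n^{-1/2})$; your ``absorbing logarithmic factors into the last term'' for $\cE(\hat{h}_{\alpha_n,p}^{\mathrm{prop}})$ is likewise invalid unless $\alpha_n^{1+k} \geq n^{-1/2}$, which is not assumed. Similarly, with $\mathfrak{C}(\cH)$ growing like a power of $\log(n)$, the claim that the diverging factor $\alpha_n^{1-p}$ dominates $\mathfrak{C}(\cH)$ fails for slowly vanishing radii, e.g.\ $\alpha_n = (\log\log n)^{-1}$ gives $\alpha_n^{1-p} = (\log\log n)^{p-1} \ll \log(n)$.

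The paper closes exactly these gaps with sharper, $n$-free tools. The Frobenius-norm constraint $\norm{\mathbf{A}_i}_{\rm F} \leq 1$ lets it invoke the bound of \citet{golowich2018}, giving $\mathfrak{R}_n(\cF_{\mathbf{d},\mathbf{1}_J,\gamma}) \leq C_{\cX}\sqrt{J/n} = O(n^{-1/2})$ with \emph{no} logarithmic factor, and the covering bound of \citet{schmidt2017}, $\log\cN(u,\cF_{\mathbf{d},\mathbf{1}_J,\gamma,s},\norm{\cdot}_{\infty}) \leq (s+1)\log(2JV^2/u)$, makes the entropy integral $\mathfrak{C}(\cH)$ a finite \emph{constant} (computed explicitly via a Gaussian-type integral), so that $\alpha_n^{1-p} \geq \mathfrak{C}(\cH)$ eventually for \emph{any} vanishing $\alpha_n$. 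You should also make explicit the dual-norm reduction that the paper performs: under the product norm $\norm{(x,y)} = \norm{x}_2 + 4I(y \neq 0)$ one has $\norm{\nabla_z h}_* = \norm{\nabla_x h}_2$, which, combined with $\norm{\nabla_x h}_2 \geq (1+\exp(C_{\cF_{\mathbf{d},\mathbf{1}_J,\gamma,s}}))^{-1}\norm{\nabla_x f}_2$, shows that the non-degeneracy hypothesis reduces to the assumption $\bE_{\mathrm{data}}(\norm{\nabla_x f}_2) > C_{\nabla}$ built into the definition of $\cH$; your regularity discussion otherwise tracks the paper's verification (boundedness, differentiability of the tanh network, bounded Hessian implying H\"older-continuous gradients).
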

Corollary \ref{cor:example_bounds} shows that the excess worst-case risk bound of $\hat{h}_{\alpha_n,p} ^{\mathrm{prop}}$ is sharper than that of $\hat{h}_{n} ^{\mathrm{ERM}}$ if $\alpha_n \geq O(n^{-1/(2p)})$.
A typical choice of $\alpha_n$ is $ O(n^{-p/d})$ to guarantee $\bP_{\mathrm{data}} \in \mathfrak{M}_{\alpha_n, p}(\bP_n)$ with high probability \citep{shafieezadeh2019, kuhn2019}.
In such cases the proposed excess worst-case risk bound is sharper if $d>2 p^2$, but slower for the excess risk.
This shows the benefit and drawback of $\hat{h}_{\alpha_n,p} ^{\mathrm{prop}}$.
A formal statement for Corollary \ref{cor:example_bounds} and other remarks are available in the Supplementary Material.

\section{WDRO with locally perturbed data}
\label{s:wdro_perturbed}
Recently, the Mixup and its variants have led outstanding performance in many machine learning problems \citep{zhang2017, berthelot2019}.
Despite of its empirical successes, theoretical justifications of the Mixup, such as its asymptotic properties, have not been considered much in the literature.
In Section \ref{s:local_perturb}, we define locally perturbed data distributions and describe examples that include the Mixup as a special case.
Lastly, we generalize the approximation and risk consistency results presented in Section \ref{s:wdro} to the cases when data are locally perturbed in Sections \ref{s:approx_local_perturb} and \ref{s:risk_consistnecy_local_perturb}.

\subsection{Locally perturbed data distribution} 
\label{s:local_perturb}
\begin{definition}
For a dataset $\cZ_n = \{ z_1, \dots, z_n \}$ and $\beta \geq 0$, we say $\bP_n ^{\prime}$ is a $\beta$-locally perturbed data distribution if there exists a set $\{z_1 ^{\prime}, \dots, z_n ^{\prime}\}$ such that $\bP_n ^{\prime} = \frac{1}{n} \sum_{i=1} ^n \delta_{z_i ^{\prime}}$ and $z_i ^{\prime}$ can be expressed as 
\begin{align*}
z_i ^{\prime} = z_i + e_i,  
\end{align*}
for $\norm{ e_i } \leq \beta$ and $i \in [n]$. 
\end{definition}
Note that $\bP_n$ is $\beta$-locally perturbed data distribution for any $\beta \geq 0$.
The idea of locally perturbed data distribution has been widely applied in machine learning.
In the following, we provide three well known examples.

\begin{example}[Denoising autoencoder]
\citet{vincent2010} considered a set $\{z_1 ^{\prime}, \dots, z_n ^{\prime}\}$ of corrupted data defined as follows.
\begin{align*}
z_i ^{\prime} = z_i  D_i,
\end{align*}
where $D_i$ is a random diagonal matrix with diagonal elements are either one or zero.
Let $D_{(n,n)} := \max_{i \in [n] } \sup_{\norm{z} \leq 1} \norm{(I-D_i) z}$ and $\sup_{ z \in \cZ}  \norm{z} \leq C_{\cZ}$. 
Then, $\norm{(I-D_i)z_i} \leq \sup_{\norm{z}\leq 1} \norm{(I-D_i)z} C_{\cZ} \leq D_{(n,n)} C_{\cZ}$, and thus training a denoising autoencoder is equivalent to training the autoencoder using a $D_{(n,n)} C_{\cZ}$-locally perturbed data distribution.
\label{exp:denoise}
\end{example}

The next two examples deal with supervised learning settings.
For sets $\cX$ and $\cY$, suppose $\cZ = \cX \times \cY$ and $\norm{(x,y) - (\tilde{x}, \tilde{y})} = \norm{x - \tilde{x}}_{\cX} + \norm{y - \tilde{y}}_{\cY}$ for some metrics $\norm{\cdot}_{\cX}$ and $\norm{\cdot}_{\cY}$ defined on $\cX$ and $\cY$, respectively. 

\begin{example}[Mixup]
Given a dataset $\cZ_n$, we generate a Mixup dataset $\{ (x_i ^{\prime}, y_i ^{\prime})\}_{i=1} ^n$ as follows.
\begin{align*}
x_i ^{\prime} = \gamma_i x_i + (1-\gamma_i) \tilde{x}_i, \quad y_i ^{\prime} = \gamma_i y_i + (1-\gamma_i) \tilde{y}_i,
\end{align*}
for some $(\tilde{x}_i, \tilde{y}_i) \in \cZ_n$ and mixing rates $0 \leq \gamma_i \leq 1$ for all $i \in [n]$.
Let $\gamma_{(n,1)} := \min_{i \in [n]} \gamma_i$ and $\sup_{ (x, y)  \in \cZ}  \norm{(x,y)} \leq C_{\cZ}$.
Then, the Mixup dataset generates a $2 (1-\gamma_{(n,1)}) C_{\cZ}$-locally perturbed data distribution, since $\norm{ (1-\gamma_i) ((\tilde{x}_i,\tilde{y}_i) - (x_i, y_i)) } \leq 2 (1-\gamma_i) C_{\cZ} \leq 2 (1-\gamma_{(n,1)}) C_{\cZ}$ for all $i \in [n]$.
\label{exp:mixup}
\end{example}

\begin{example}[Adversarial training]
For a given dataset $\cZ_n$, \citet{goodfellow2014} proposed to minimize a loss with adversarially augmented dataset $\{(x_i ^{\prime}, y_i)\}_{i=1} ^n$.
Here, each $x_i ^{\prime} = x_i + r_i$ is newly generated data point with perturbation 
\begin{align*}
r_i := \mathrm{argmin}_{ \norm{r}_{\cX} \leq \beta_n} \log p_{\theta} ( y_i \mid x_i + r ),
\end{align*}
for some constant $\beta_n>0$ and $p_{\theta} (y \mid x)$ is a probability model parametrized by $\theta$.
From its construction, it is clear that adversarial training minimizes the risk under $\beta_n$-locally perturbed data distribution. 
Similar arguments apply to virtual adversarial training \citep{miyato2018virtual}.
\label{exp:adv}
\end{example}

\begin{remark} 
The support of a $\beta$-locally perturbed data distribution may not be a subset of $\cZ$.
Instead, it is a subset of $\cZ+\mathcal{B}(\beta) := \{ z +r \mid z \in \cZ \text{ and } \norm{r} \leq \beta \}$.
As a result, the support of a loss should be larger than $\cZ$.
\end{remark}
In the following sections, we present a rigorous analysis of WDRO with a locally augmented data distribution. 

\subsection{Approximation of the local worst-case risk}
\label{s:approx_local_perturb} 
We first show that the local worst-case risk can be approximated well by the risk under a locally perturbed data distribution when a loss is Lipschitz continuous:
\begin{proposition}
Let $(\alpha_n)$ and $(\beta_n)$ be sequences of positive numbers converging to zero and $\bP_n ^{\prime}$ be a $\beta_n$-locally perturbed data distribution. 
For a constant $M \geq \sup_{n \in \mathbb{N}} \beta_n$, assume that a loss $h:\cZ+\mathcal{B}(M) \to \bR$ is Lipschitz continuous. 
Then, for any $p \in [1, \infty)$, the following holds.
\begin{align*}
\left| R(\bP_n ^{\prime}, h) - R_{\alpha_n,p} ^{\mathrm{worst}}(\bP_n, h) \right| = O(\alpha_n \vee \beta_n).
\end{align*}
\label{prop:wdro_perturb_approx_bound}
\end{proposition}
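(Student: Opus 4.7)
The plan is to reduce the proposition to a combination of the existing Lipschitz approximation bound \eqref{eq:wdro_approx_bound} and a direct bound on the discrepancy between the empirical risk on $\bP_n$ and on its perturbation $\bP_n'$. Specifically, I will apply the triangle inequality
\begin{align*}
\bigl| R(\bP_n', h) - R_{\alpha_n,p}^{\mathrm{worst}}(\bP_n, h) \bigr|
&\leq \bigl| R(\bP_n', h) - R(\bP_n, h) \bigr| \\
&\quad + \bigl| R(\bP_n, h) - R_{\alpha_n,p}^{\mathrm{worst}}(\bP_n, h) \bigr|,
\end{align*}
and handle each term separately. The second term is exactly the object of \eqref{eq:wdro_approx_bound} restricted to $h|_{\cZ}$; since $h$ is Lipschitz on the larger domain $\cZ + \mathcal{B}(M)$, it is Lipschitz on $\cZ$, so \eqref{eq:wdro_approx_bound} (equivalently \citet[Theorem 5]{kuhn2019}) yields a bound of order $O(\alpha_n)$.

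For the first term, I will use only the Lipschitz continuity of $h$ on $\cZ + \mathcal{B}(M)$ together with the definition of a $\beta_n$-locally perturbed data distribution. Writing $z_i' = z_i + e_i$ with $\|e_i\| \leq \beta_n$, both $z_i$ and $z_i'$ lie in $\cZ + \mathcal{B}(M)$ (using $\beta_n \leq M$), so letting $L := \mLip(h)$,
\begin{align*}
\bigl| R(\bP_n', h) - R(\bP_n, h) \bigr|
= \Bigl| \tfrac{1}{n}\sum_{i=1}^n \bigl( h(z_i') - h(z_i)\bigr) \Bigr|
\leq \tfrac{L}{n}\sum_{i=1}^n \|e_i\|
\leq L\beta_n.
\end{align*}
Combining the two pieces gives the claimed $O(\alpha_n \vee \beta_n)$ rate.

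I do not anticipate any serious obstacle: the only subtlety is the domain mismatch, namely that $\bP_n'$ need not be supported in $\cZ$ (and hence is not a feasible element of $\fM_{\alpha_n,p}(\bP_n)$), but this is precisely why the hypothesis enlarges the domain of $h$ to $\cZ + \mathcal{B}(M)$, so $R(\bP_n', h)$ is well-defined and the Lipschitz comparison above is legitimate. No dependence on $p$ enters the argument beyond what is already absorbed by \eqref{eq:wdro_approx_bound}, so the result holds uniformly for all $p \in [1, \infty)$.
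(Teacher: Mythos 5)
Your proof is correct and takes essentially the same route as the paper's: the identical triangle-inequality decomposition, with the perturbation term bounded by $\mLip(h)\beta_n$ via Lipschitz continuity on the enlarged domain $\cZ + \mathcal{B}(M)$, and the approximation term of order $O(\alpha_n)$. The only cosmetic difference is that you cite \eqref{eq:wdro_approx_bound} for the latter, whereas the paper re-derives it inline via the Kantorovich--Rubinstein duality together with $\cW_1 \leq \cW_p$, yielding the explicit constant in the bound $(\alpha_n + \beta_n)\mLip(h)$.
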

Compared to \eqref{eq:wdro_approx_bound}, Proposition \ref{prop:wdro_perturb_approx_bound} reveals that $\beta_n$-perturbation causes an additional error $O(\beta_n)$.
This error becomes negligible when $\beta_n \leq O(\alpha_n)$.
In the following theorem, we obtain a sharper approximation result if a loss has H{\"o}lder continuous gradient (cf. Theorem \ref{thm:wdro_approx_sharp_bound}).

\begin{theorem}[Approximation to the local worst-case risk when data are perturbed]
Let $(\alpha_n)$ and $(\beta_n)$ be sequences of positive numbers converging to zero and $\bP_n ^{\prime}$ be a $\beta_n$-locally perturbed data distribution. 
Let $\cZ$ be an open and bounded subset of $\bR^d$.
For constants $C_{\mathrm{H}}, C_{\nabla} >0$, $k \in (0,1]$, and $M \geq \sup_{n \in \mathbb{N}} \beta_n$, assume that a loss $h: \mathrm{Conv}(\cZ)+\mathcal{B}(M) \to \bR$ is differentiable, its gradient $\nabla_z h(z)$ is $(C_{\mathrm{H}}, k)$-H{\"o}lder continuous, and $\bE_{\mathrm{data}}(\norm{\nabla_z h}_*) \geq C_{\nabla}$. 
Then, for $p \in (1+k, \infty)$, the following holds.
\begin{align*}
&\left| R(\bP_n ^{\prime}, h) + \alpha_n \norm{\nabla_z h}_{\bP_n ^{\prime}, p^*} -R_{\alpha_n,p} ^{\mathrm{worst}}(\bP_n, h) \right| \\
&=O_p (\alpha_n ^{1+k} \vee \beta_n).
\end{align*}
\label{thm:wdro_perturb_approx_sharp_bound}
\end{theorem}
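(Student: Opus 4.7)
The plan is to reduce Theorem \ref{thm:wdro_perturb_approx_sharp_bound} to the unperturbed analogue Theorem \ref{thm:wdro_approx_sharp_bound} via the triangle inequality, absorbing the perturbation error as an additive $O(\beta_n)$ term. Concretely, I decompose the target as
\begin{align*}
&\left| R(\bP_n^{\prime}, h) + \alpha_n \norm{\nabla_z h}_{\bP_n^{\prime}, p^*} - R_{\alpha_n,p}^{\mathrm{worst}}(\bP_n, h) \right| \\
&\leq \left| R(\bP_n, h) + \alpha_n \norm{\nabla_z h}_{\bP_n, p^*} - R_{\alpha_n,p}^{\mathrm{worst}}(\bP_n, h) \right| \\
&\quad + \left| R(\bP_n^{\prime}, h) - R(\bP_n, h) \right| + \alpha_n \left| \norm{\nabla_z h}_{\bP_n^{\prime}, p^*} - \norm{\nabla_z h}_{\bP_n, p^*} \right|.
\end{align*}
The first summand is $O_p(\alpha_n^{1+k})$ by direct application of Theorem \ref{thm:wdro_approx_sharp_bound}, since every hypothesis imposed on the enlarged domain $\mathrm{Conv}(\cZ)+\mathcal{B}(M)$ restricts to the required hypothesis on $\mathrm{Conv}(\cZ)$.

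Next I would bound the two perturbation pieces. For the risk discrepancy, Hölder continuity of $\nabla_z h$ on the bounded convex set $\mathrm{Conv}(\cZ)+\mathcal{B}(M)$ forces $\norm{\nabla_z h}_*$ to be uniformly bounded there, which in turn makes $h$ globally Lipschitz on that set with some constant $L$; since each $z_i$ and $z_i^{\prime}$ lies in $\mathrm{Conv}(\cZ)+\mathcal{B}(M)$ with $\norm{z_i-z_i^{\prime}}\leq \beta_n$, this yields $|R(\bP_n^{\prime},h)-R(\bP_n,h)|\leq L\beta_n=O(\beta_n)$. For the penalty discrepancy, I apply the reverse Minkowski inequality on the uniform discrete measure $n^{-1}\sum_i\delta_i$ to obtain
\begin{align*}
\left| \norm{\nabla_z h}_{\bP_n^{\prime}, p^*} - \norm{\nabla_z h}_{\bP_n, p^*} \right| \leq \Bigl( \tfrac{1}{n}\sum_{i=1}^n \norm{\nabla_z h(z_i^{\prime}) - \nabla_z h(z_i)}_*^{p^*} \Bigr)^{1/p^*} \leq C_{\mathrm{H}}\,\beta_n^{k},
\end{align*}
where the last step uses the $(C_{\mathrm{H}},k)$-Hölder bound termwise, so this piece contributes $O(\alpha_n\beta_n^{k})$.

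Combining the three rates, the total bound is $O_p(\alpha_n^{1+k}\vee\beta_n\vee\alpha_n\beta_n^{k})$, and the cross-term $\alpha_n\beta_n^{k}$ is absorbed into the other two: since $\beta_n\to 0$, eventually $\beta_n\leq 1$, so if $\beta_n\leq\alpha_n$ then $\alpha_n\beta_n^{k}\leq\alpha_n^{1+k}$, while if $\beta_n>\alpha_n$ then $\alpha_n\beta_n^{k}<\beta_n^{1+k}\leq\beta_n$. This yields the stated rate $O_p(\alpha_n^{1+k}\vee\beta_n)$. The main obstacle is essentially bookkeeping: verifying that the Hölder and induced Lipschitz bounds propagate correctly from $\cZ$ to the enlarged support $\mathrm{Conv}(\cZ)+\mathcal{B}(M)$ on which the perturbed points and the line segments used for the mean-value step of Theorem \ref{thm:wdro_approx_sharp_bound} live, so that the reverse Minkowski step and the appeal to Theorem \ref{thm:wdro_approx_sharp_bound} are both legitimate on the larger set; the remainder of the argument is triangle-inequality accounting.
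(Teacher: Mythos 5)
Your proposal is correct, but it takes a genuinely different route from the paper's. You reduce the perturbed statement to the unperturbed Theorem \ref{thm:wdro_approx_sharp_bound} via a three-term triangle inequality, and each piece checks out: the base term is $O_p(\alpha_n^{1+k})$ since the hypotheses on $\mathrm{Conv}(\cZ)+\mathcal{B}(M)$ restrict to those Theorem \ref{thm:wdro_approx_sharp_bound} needs on $\mathrm{Conv}(\cZ)$; the risk shift is $O(\beta_n)$ because H\"older continuity of $\nabla_z h$ plus boundedness of $\mathrm{Conv}(\cZ)+\mathcal{B}(M)$ gives a uniform gradient bound, the Minkowski sum is convex so the segment $[z_i, z_i']$ stays inside it, and the mean value theorem applies; the penalty shift is $O(\alpha_n \beta_n^k)$ by the reverse triangle inequality for the discrete $L^{p^*}$ norm together with the termwise bound $\bigl| \norm{\nabla_z h(z_i')}_* - \norm{\nabla_z h(z_i)}_* \bigr| \leq C_{\mathrm{H}} \beta_n^k$; and your case split absorbing $\alpha_n\beta_n^k$ into $\alpha_n^{1+k}\vee\beta_n$ is valid since $\beta_n \leq 1$ eventually. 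The paper proceeds in the opposite direction: it proves Theorem \ref{thm:wdro_perturb_approx_sharp_bound} directly from the dual reformulation \eqref{eq:dro_duality} --- an upper bound via the mean value theorem, a technical lemma of \citet{gao2017}, and explicit minimization over the dual variable $\lambda$; a lower bound via a near-maximizing choice of the $\tilde z_i$ in the dual-norm/H\"older equality case, which requires the McDiarmid-based high-probability lower bound on $\norm{\nabla_z h}_{\bP_n,1}$ (the source of the $O_p$ rather than $O$) --- and then obtains Theorem \ref{thm:wdro_approx_sharp_bound} as the special case $\beta_n = 0$, $M = 0$; indeed the appendix omits a separate proof of Theorem \ref{thm:wdro_approx_sharp_bound} for exactly this reason. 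Consequently, splicing your argument in verbatim would be circular within this paper: to use your route one must first supply a standalone proof of Theorem \ref{thm:wdro_approx_sharp_bound} (e.g., the paper's Steps 1--3 with $\beta_n = 0$), so the reduction relocates rather than removes the hard direct argument. What each approach buys: your factorization is modular and makes transparent that the entire cost of $\beta_n$-perturbation is elementary bookkeeping on top of the base theorem --- and it would let any future sharpening of Theorem \ref{thm:wdro_approx_sharp_bound} propagate automatically --- while the paper's direct proof avoids proving two theorems and tracks explicit constants in a single argument.
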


\begin{remark}\label{remark:wdro_perturb_approx_sharp_bound}
Theorem \ref{thm:wdro_perturb_approx_sharp_bound} extends Theorem \ref{thm:wdro_approx_sharp_bound} to the cases when data are locally perturbed.
The cost of perturbation is an additional error $O(\beta_n)$, which is negligible when $\beta_n \leq O(\alpha_n ^{1+k})$. 
Thus Theorem \ref{thm:wdro_perturb_approx_sharp_bound} also suggests an appropriate size of perturbation.
\end{remark}

Based on Theorem \ref{thm:wdro_perturb_approx_sharp_bound}, for vanishing sequences $(\alpha_n)$ and $(\beta_n)$, and a $\beta_n$-locally perturbed data distribution $\bP_n ^{\prime}$, we propose to minimize the following objective function.
\begin{align}
R_{(\alpha_n, \beta_n), p} ^{\mathrm{prop}} (\bP_{n}, h) := R(\bP_n ^{\prime}, h) + \alpha_n \norm{\nabla_z h}_{\bP_n ^{\prime}, p^*},
\label{eq:prop_perturb_bound}
\end{align}
and denote its minimizer by $\hat{h}_{(\alpha_n, \beta_n), p} ^{\mathrm{prop}}$, \textit{i.e.}, $\hat{h}_{(\alpha_n, \beta_n), p} ^{\mathrm{prop}} = \mathrm{argmin}_{h \in \cH} R_{(\alpha_n, \beta_n), p} ^{\mathrm{prop}} (\bP_{n}, h)$. 

\subsection{Risk consistency of the proposed estimator}
\label{s:risk_consistnecy_local_perturb}
Now that we study risk consistency when data are locally perturbed. 
The following two theorems provide risk consistency of the minimizer $\hat{h}_{(\alpha_n, \beta_n), p} ^{\mathrm{prop}}$:

\begin{theorem}[Excess worst-case risk bound when data are perturbed]
Let $(\alpha_n)$ and $(\beta_n)$ be sequences of positive numbers converging to zero and $\bP_n ^{\prime}$ be a $\beta_n$-locally perturbed data distribution. 
Let $\cZ$ be an open and bounded subset of $\bR^d$.
For constants $C_{\mathrm{H}}, C_{\nabla}, L >0$, $k \in (0,1]$, and $M \geq \sup_{n \in \mathbb{N}} \beta_n$, assume that $\cH$ is a uniformly bounded set of differentiable functions $h: \mathrm{Conv}(\cZ)+\mathcal{B}(M) \to \mathbb{R}$ such that its gradient $\nabla_z h$ is $(C_{\mathrm{H}}, k)$-H{\"o}lder continuous, $\bE_{\mathrm{data}}(\norm{\nabla_z h}_*) \geq C_{\nabla}$, and $\mLip(h) \leq L$.
Then, for $p \in (1+k, \infty)$, the following holds.
\begin{align*}
&\cE_{\alpha_n,p} ^{\mathrm{worst}}( \hat{h}_{(\alpha_n, \beta_n), p} ^{\mathrm{prop}}) \\
&= O_p \left( \frac{\mathfrak{C}(\cH) \vee \alpha_n ^{1-p}}{\sqrt{n}} \vee \log(n) ( \alpha_n ^{1+k} \vee \beta_n) \right).
\end{align*}
\label{thm:wdro_perturb_excess_worst_risk}
\end{theorem}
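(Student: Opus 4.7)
The plan is to mirror the proof structure of Theorem \ref{thm:wdro_excess_worst_risk} line by line, substituting the perturbed approximation result (Theorem \ref{thm:wdro_perturb_approx_sharp_bound}) wherever the non-perturbed version was used. Write $\hat h := \hat h_{(\alpha_n, \beta_n), p}^{\mathrm{prop}}$ and fix any $h^* \in \arg\min_{h \in \cH} R_{\alpha_n,p}^{\mathrm{worst}}(\bP_{\mathrm{data}}, h)$. Decompose
\begin{align*}
\cE_{\alpha_n,p}^{\mathrm{worst}}(\hat h)
&= \bigl[R_{\alpha_n,p}^{\mathrm{worst}}(\bP_{\mathrm{data}}, \hat h) - R_{\alpha_n,p}^{\mathrm{worst}}(\bP_n, \hat h)\bigr] \\
&\quad + \bigl[R_{\alpha_n,p}^{\mathrm{worst}}(\bP_n, \hat h) - R_{\alpha_n,p}^{\mathrm{worst}}(\bP_n, h^*)\bigr] \\
&\quad + \bigl[R_{\alpha_n,p}^{\mathrm{worst}}(\bP_n, h^*) - R_{\alpha_n,p}^{\mathrm{worst}}(\bP_{\mathrm{data}}, h^*)\bigr],
\end{align*}
and bound each bracket separately.

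The first and third brackets involve only worst-case risks under $\bP_n$ and $\bP_{\mathrm{data}}$, so they are controlled exactly as in \citet{lee2018} and in the proof of Theorem \ref{thm:wdro_excess_worst_risk}: by the Lipschitz assumption $\mLip(h) \leq L$, the map $\bQ \mapsto R_{\alpha_n,p}^{\mathrm{worst}}(\bQ, h)$ is well behaved, and a standard uniform-deviation argument over $\cH$ using the entropy integral $\mathfrak{C}(\cH)$ yields $O_p(n^{-1/2}(\mathfrak{C}(\cH) \vee \alpha_n^{1-p}))$ uniformly in $h \in \cH$. This delivers the first term in the claimed rate and does not interact with the perturbation at all.

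The middle bracket is where the locally perturbed data enter. I would apply Theorem \ref{thm:wdro_perturb_approx_sharp_bound} at $h = \hat h$ and at $h = h^*$ to write
\begin{align*}
R_{\alpha_n,p}^{\mathrm{worst}}(\bP_n, h) = R_{(\alpha_n,\beta_n), p}^{\mathrm{prop}}(\bP_n, h) + O_p\!\left(\alpha_n^{1+k} \vee \beta_n\right),
\end{align*}
and then exploit the defining optimality $R_{(\alpha_n,\beta_n), p}^{\mathrm{prop}}(\bP_n, \hat h) \leq R_{(\alpha_n,\beta_n), p}^{\mathrm{prop}}(\bP_n, h^*)$ so that the middle bracket is at most twice the approximation error. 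This is where the $\log(n)(\alpha_n^{1+k} \vee \beta_n)$ term comes from.

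The main obstacle is precisely the uniformization of Theorem \ref{thm:wdro_perturb_approx_sharp_bound}: as stated, the $O_p$ bound holds for each fixed $h$, but $\hat h$ is data-dependent. I would resolve this by covering $\cH$ with a minimal $\varepsilon_n$-net in the sup-norm (using that $\cH$ is uniformly bounded with $(C_{\mathrm{H}}, k)$-H\"older continuous gradients, so $\cH$ is totally bounded in the relevant norm) and controlling the fluctuation between neighbouring hypotheses through the Lipschitz constants of both $R_{\alpha_n,p}^{\mathrm{worst}}(\bP_n, \cdot)$ and $R_{(\alpha_n,\beta_n),p}^{\mathrm{prop}}(\bP_n, \cdot)$ in $h$. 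A union bound over the net introduces a multiplicative $\log \cN(\varepsilon_n, \cH, \norm{\cdot}_\infty)$ factor, and choosing $\varepsilon_n$ polynomial in $n^{-1}$ turns this into the $\log(n)$ factor appearing in the statement. Summing the three bracket bounds then gives the claimed rate, with the new term $\log(n)\,\beta_n$ from the perturbation appearing alongside the $\log(n)\,\alpha_n^{1+k}$ term that was already present in Theorem \ref{thm:wdro_excess_worst_risk}.
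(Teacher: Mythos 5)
Your three-term decomposition and the optimality trick for the middle bracket match the paper's proof in substance (the paper additionally inserts the exact empirical worst-case minimizer $\hat{h}_{\alpha_n,p}^{\mathrm{worst}}$ between $\hat{h}_{(\alpha_n,\beta_n),p}^{\mathrm{prop}}$ and the population minimizer, but that is cosmetic), and your treatment of the outer brackets via the bounds of \citet{lee2018} is exactly what the paper does. The genuine gap is your uniformization step, in two ways. First, the claimed mechanism fails quantitatively: under the theorem's assumptions $\log \cN(u, \cH, \norm{\cdot}_{\infty})$ is generally polynomial in $1/u$, not logarithmic --- for classes of H\"older-smooth functions on a bounded domain in $\bR^d$ one has $\log\cN(u) \asymp u^{-d/(1+k)}$, and finiteness of $\mathfrak{C}(\cH)$ only forces $\log\cN(u)$ to be $o(u^{-2})$ --- so with $\varepsilon_n$ polynomial in $n^{-1}$ your union bound injects a polynomially growing factor into the rate, not the claimed $\log(n)$. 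Second, a sup-norm net does not control what you need it to: the surrogate objective contains the gradient penalty $\alpha_n \norm{\nabla_z h}_{\bP_n ^{\prime}, p^*}$, and two hypotheses can be arbitrarily close in $\norm{\cdot}_{\infty}$ while having very different gradients, so $R_{(\alpha_n,\beta_n),p}^{\mathrm{prop}}(\bP_n, \cdot)$ is not Lipschitz in $h$ with respect to the sup norm and the fluctuation between $\hat{h}$ and its net neighbour is uncontrolled; one would need a net in a $C^1$-type norm, with even larger covering numbers.

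The paper sidesteps all of this by observing that the proof of Theorem \ref{thm:wdro_perturb_approx_sharp_bound} is deterministic and uniform over $\cH$ (all constants depend only on $C_{\mathrm{H}}$, $k$, $L$, $D_{\cZ}$, $p$) except in one place: the McDiarmid lower bound \eqref{ineq:lower_bound_mcdiarmid} keeping $\norm{\nabla_z h}_{\bP_n,1}$ bounded away from zero, which is needed so that the maximizing perturbations stay in $\cZ$ and $S_1 = \alpha_n \norm{\nabla_z h}_{\bP_n ^{\prime}, p^*}$. That single scalar functional is uniformized over $\cH$ in Proposition \ref{prop:uniform_bound}, via the Rademacher complexity of the H\"older class $\{\norm{\nabla_z h}_* \mid h \in \cH\}$ together with Lorentz's covering bound, contributing only a vanishing $n^{-k/(2k+d)}$ correction --- a far weaker demand than uniformizing the entire approximation error. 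Relatedly, you misattribute the origin of the $\log(n)$: it does not come from a covering-number union bound, but from choosing a slack $\varepsilon_n = \Theta(\log(n)(\beta_n + \alpha_n^{1+k}))$ in a contradiction argument --- if $R_{\alpha_n,p}^{\mathrm{worst}}(\bP_n, \hat{h}_{(\alpha_n,\beta_n),p}^{\mathrm{prop}})$ exceeded $R_{\alpha_n,p}^{\mathrm{worst}}(\bP_n, \hat{h}_{\alpha_n,p}^{\mathrm{worst}}) + \varepsilon_n$, the two approximation bounds (each holding with probability $1-\delta/2$ with fixed constants $\tilde{M}_1, \tilde{M}_2$) would contradict the minimality of $\hat{h}_{(\alpha_n,\beta_n),p}^{\mathrm{prop}}$ for $R_{(\alpha_n,\beta_n),p}^{\mathrm{prop}}$ once $n$ is large. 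To repair your proof, replace the net-plus-union-bound step with Proposition \ref{prop:uniform_bound} (or an equivalent uniform lower bound on the empirical gradient-norm mean) and derive the $\log(n)$ factor from the slack argument rather than from entropy counting.
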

\begin{theorem}[Excess risk bound when data are perturbed]
Under the same assumptions as Theorem \ref{thm:wdro_perturb_excess_worst_risk}, the following holds.
\begin{align*}
&\cE( \hat{h}_{(\alpha_n, \beta_n), p} ^{\mathrm{prop}}) \\
&= O_p( \mathfrak{R}_n (\mathcal{H}) \vee n^{-1/2} \vee \alpha_n \vee \log(n)( \alpha_n ^{1+k} \vee \beta_n) ).
\end{align*} 
\label{thm:wdro_perturb_excess_risk}
\end{theorem}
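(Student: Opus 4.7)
The plan is to follow the oracle-based decomposition that underlies the proof of Theorem~\ref{thm:wdro_excess_risk}, with additional bookkeeping for the $\beta_n$-perturbation. Fix $h^{\star} \in \mathrm{argmin}_{h\in\cH} R(\bP_{\mathrm{data}}, h)$ (or an $\varepsilon$-minimizer if the infimum is not attained), and write
\begin{align*}
\cE(\hat{h}_{(\alpha_n,\beta_n),p}^{\mathrm{prop}}) &= \bigl[R(\bP_{\mathrm{data}},\hat{h}) - R(\bP_n,\hat{h})\bigr] \\
&\quad + \bigl[R(\bP_n,\hat{h}) - R(\bP_n,h^{\star})\bigr] \\
&\quad + \bigl[R(\bP_n,h^{\star}) - R(\bP_{\mathrm{data}},h^{\star})\bigr].
\end{align*}
The first and third brackets form the generalization gap. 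Because $\cH$ is uniformly bounded, the standard symmetrization argument yields $\sup_{h\in\cH}|R(\bP_n,h)-R(\bP_{\mathrm{data}},h)| = O_p(\mathfrak{R}_n(\cH) \vee n^{-1/2})$, which absorbs both.

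For the middle bracket I would chain through the worst-case risk and the proposed objective. Since $\bP_n\in\fM_{\alpha_n,p}(\bP_n)$, we trivially have $R(\bP_n,\hat{h}) \leq R_{\alpha_n,p}^{\mathrm{worst}}(\bP_n,\hat{h})$. A uniform-in-$\cH$ version of Theorem~\ref{thm:wdro_perturb_approx_sharp_bound} lets me replace the worst-case risk by $R_{(\alpha_n,\beta_n),p}^{\mathrm{prop}}(\bP_n,\hat{h})$ at cost $O_p(\log(n)(\alpha_n^{1+k}\vee \beta_n))$. Invoking the defining optimality of $\hat{h}_{(\alpha_n,\beta_n),p}^{\mathrm{prop}}$ swaps in $h^{\star}$; applying the uniform approximation a second time returns to $R_{\alpha_n,p}^{\mathrm{worst}}(\bP_n,h^{\star})$; and the coarser Lipschitz-only bound~\eqref{eq:wdro_approx_bound} (which applies because $\mLip(h)\leq L$) yields $R_{\alpha_n,p}^{\mathrm{worst}}(\bP_n,h^{\star}) \leq R(\bP_n,h^{\star}) + O(\alpha_n)$. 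Summing the chain gives $R(\bP_n,\hat{h}) - R(\bP_n,h^{\star}) = O_p(\alpha_n \vee \log(n)(\alpha_n^{1+k}\vee\beta_n))$, and combining with the generalization gap produces the stated rate.

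The main obstacle is the two invocations of Theorem~\ref{thm:wdro_perturb_approx_sharp_bound}, which is stated pointwise in the loss but is needed uniformly over $\cH$ so that it can be applied to the data-dependent minimizer $\hat{h}$. I would upgrade to a uniform statement by a covering argument on $\cH$ using its finite entropy integral (which holds because $\cH$ is a uniformly bounded Lipschitz class with H\"older-continuous gradient, hence totally bounded in $\norm{\cdot}_\infty$), followed by a union bound over the cover. The $\log(n)$ factor in the conclusion comes exactly from this step: choosing the covering radius of polynomial order in $n$ and balancing it against the sub-exponential tails of the approximation error in Theorem~\ref{thm:wdro_perturb_approx_sharp_bound} inflates the pointwise $O_p(\alpha_n^{1+k}\vee\beta_n)$ by a logarithmic factor while preserving its scaling in $\alpha_n$ and $\beta_n$.
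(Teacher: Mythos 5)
Your proof is correct in substance but takes a genuinely different route from the paper's. The paper decomposes the excess risk into six terms that chain through the two auxiliary data-dependent minimizers $\hat{h}_{\alpha_n,p}^{\mathrm{worst}}$ and $\hat{h}_n^{\mathrm{ERM}}$: the comparison $R_{\alpha_n,p}^{\mathrm{worst}}(\bP_n,\hat{h}_{(\alpha_n,\beta_n),p}^{\mathrm{prop}}) - R_{\alpha_n,p}^{\mathrm{worst}}(\bP_n,\hat{h}_{\alpha_n,p}^{\mathrm{worst}})$ is bounded by recycling inequality \eqref{ineq:wost:step1_result} from the proof of Theorem \ref{thm:wdro_perturb_excess_worst_risk}, and that inequality's $\varepsilon_n$-contradiction argument is the true origin of the $\log(n)$ factor --- it is needed precisely because \emph{both} hypotheses in that comparison are data-dependent, so the unknown $O_p$-constants must be dominated by a deterministic sequence. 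You instead compare $\hat{h}_{(\alpha_n,\beta_n),p}^{\mathrm{prop}}$ directly with the fixed population minimizer $h^{\star}$ via the defining optimality of $\hat{h}$ for the surrogate objective, eliminating $\hat{h}_{\alpha_n,p}^{\mathrm{worst}}$ and $\hat{h}_n^{\mathrm{ERM}}$ from the argument entirely. This is arguably cleaner, and carried out carefully it yields a middle-bracket bound of order $\alpha_n \vee \alpha_n^{1+k} \vee \beta_n$ with \emph{no} $\log(n)$ factor at all, which is slightly stronger than (and hence implies) the stated rate; your generalization-gap treatment of the outer brackets matches the paper's Step 3 (McDiarmid plus symmetrization).

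Two caveats on your uniformization step, which you identify as the main obstacle. First, as described it would not work: the approximation error in Theorem \ref{thm:wdro_perturb_approx_sharp_bound} contains the penalty $\alpha_n \norm{\nabla_z h}_{\bP_n^{\prime},p^*}$, which is not controlled by $\norm{\cdot}_{\infty}$-proximity of losses, so a sup-norm cover of $\cH$ with a union bound does not transfer the pointwise statement to $\hat{h}$; one needs entropy control at the level of gradients, which is exactly what the paper's Proposition \ref{prop:uniform_bound} supplies (Rademacher complexity of the H{\"o}lder class $\nabla\tilde{\cH}$ via Lorentz's covering bound). Moreover the ``sub-exponential tails of the approximation error'' you invoke are never established --- the only stochastic ingredient in Theorem \ref{thm:wdro_perturb_approx_sharp_bound} is McDiarmid concentration of $\norm{\nabla_z h}_{\bP_n,1}$ in \eqref{ineq:lower_bound_mcdiarmid}; conditional on that event the error bound is deterministic, so uniformization does not inflate the rate by $\log(n)$ in any case. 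Second, and fortunately, your own decomposition makes this machinery dispensable: at the data-dependent $\hat{h}$ you only need the one-sided bound $R_{\alpha_n,p}^{\mathrm{worst}}(\bP_n,\hat{h}) \leq R_{(\alpha_n,\beta_n),p}^{\mathrm{prop}}(\bP_n,\hat{h}) + O(\alpha_n^{1+k} \vee \beta_n)$, and inspection of Step 1 of the paper's proof shows this direction, inequality \eqref{ineq:approx_upper_bound}, is deterministic with constants uniform over $\cH$ (since $\norm{\nabla_z h}_* \leq \mLip(h) \leq L$); the probabilistic direction (Step 2, requiring the gradient-norm lower bound) is needed only at the fixed $h^{\star}$, where the pointwise bound \eqref{ineq:lower_bound_mcdiarmid} applies verbatim. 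With that repair your chain closes; alternatively you could bypass the worst-case risk at $h^{\star}$ altogether via the deterministic estimate $R_{(\alpha_n,\beta_n),p}^{\mathrm{prop}}(\bP_n,h^{\star}) \leq R(\bP_n,h^{\star}) + L\beta_n + L\alpha_n$.
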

Similar to Remark \ref{remark:wdro_perturb_approx_sharp_bound}, the errors due to the local perturbation both in the order of $\log(n) \beta_n$ are negligible when $\beta_n \leq O(\alpha_n ^{1+k})$.
In such settings, Theorems \ref{thm:wdro_perturb_excess_worst_risk} and \ref{thm:wdro_perturb_excess_risk} yield the same bound as Theorems \ref{thm:wdro_excess_worst_risk} and \ref{thm:wdro_excess_risk}, respectively.

\begin{table*}[t]
\caption[Accuracy comparison of the four methods using the clean and noisy test datasets with various training sample sizes]{Accuracy comparison of the four methods using the clean and noisy test datasets with various training sample sizes. Average and standard deviation are denoted by \lq{}average$\pm$standard deviation\rq{}. All the results are based on five independent trials. Boldface numbers denote the best and equivalent methods  with respect to a t-test with a significance level of 5\%.}
\label{t:compare}
\vskip 0.15in
\begin{center}
\begin{small}
\begin{sc}
\resizebox{\textwidth}{!}{
\begin{tabular}{lcccc|cccc}
\toprule
Sample &\multicolumn{4}{c}{Clean} & \multicolumn{4}{c}{ 1\% salt and pepper noise} \\
\cmidrule{2-5} \cmidrule{6-9}
size & ERM & WDRO & MIXUP & WDRO+MIX & ERM & WDRO & MIXUP & WDRO+MIX \\
\midrule
CIFAR-10\\
2500 &$77.3\pm0.8$ & $77.1\pm0.7$ &$\mathbf{81.4\pm0.5}$ & $\mathbf{80.8\pm0.7}$ & $69.8\pm1.8$ & $71.9\pm0.9$ & $72.7\pm1.6$ & $\mathbf{74.8\pm0.9}$\tabularnewline
5000 &$83.3\pm0.4$ & $83.0\pm0.3$ & $\mathbf{86.7\pm0.2}$ & $\mathbf{85.6\pm0.3}$ & $75.2\pm1.4$ &  $77.4\pm0.5$ & $76.4\pm1.7$ & $\mathbf{79.6\pm0.9}$ \tabularnewline
25000 &$92.2\pm0.2$ & $91.4\pm0.1$ & $\mathbf{93.3\pm0.1}$  &$92.4\pm0.1$ & $83.3\pm0.8$ & $\mathbf{85.8\pm0.5}$ & $82.1\pm1.7$ & $\mathbf{86.2\pm0.3}$\tabularnewline
50000 &$94.1\pm0.1$ & $93.1\pm0.1$ &$\mathbf{94.8\pm0.2}$ &$93.5\pm0.2$ & $84.1\pm1.0$ & $\mathbf{87.4\pm0.5}$ & $82.5\pm1.3$ & $\mathbf{87.3\pm0.5}$ \tabularnewline 
\midrule
CIFAR-100\\
2500 &$33.8\pm1.0$ & $34.6\pm1.7$ &$\mathbf{38.9\pm0.6}$ &$\mathbf{39.4\pm0.2}$ & $29.2\pm0.2$ & $30.4\pm1.2$ & $33.2\pm1.1$ & $\mathbf{35.0\pm0.5}$\tabularnewline
5000 &$45.2\pm0.9$ & $43.7\pm0.7$ & $\mathbf{49.9\pm0.2}$ &$\mathbf{49.5\pm0.4}$ & $37.0\pm0.8$ & $38.1\pm1.1$ & $39.4\pm1.3$ & $\mathbf{42.3\pm0.7}$ \tabularnewline
25000 &$67.8\pm0.2$ & $66.6\pm0.3$ & $\mathbf{69.3\pm0.3}$ &$68.2\pm0.3$ & $51.0\pm1.9$ & $\mathbf{56.5\pm0.8}$ & $49.6\pm1.0$ & $\mathbf{55.8\pm0.4}$ \tabularnewline
50000 &$74.4\pm0.2$ & $73.5\pm0.3$ & $\mathbf{75.2\pm0.2}$ & $73.8\pm0.3$ & $51.9\pm1.3$ & $\mathbf{62.1\pm0.5}$ & $50.0\pm3.0$ & $60.6\pm0.7$ \tabularnewline
\bottomrule
\end{tabular}
}
\end{sc}
\end{small}
\end{center}
\vskip -0.1in
\end{table*}

\begin{remark} 
By setting $\beta_n = 2 (1-\gamma_{(n,1)}) C_{\cZ}$, all the theorems presented in Section \ref{s:wdro_perturbed} apply to the Mixup (see Example \ref{exp:mixup}).
To make sure $\lim_{n \to \infty} \beta_n = 0$, we need $\lim_{n \to \infty} \gamma_{(n,1)} = 1$ and it can be satisfied as long as we do not perturb the original data too much as the sample size increases.
Similar arguments are applicable to Examples \ref{exp:denoise} and \ref{exp:adv}.
\end{remark}

\section{Numerical experiments}
\label{s:numerical_experiments}

In this section, we conduct numerical experiments to demonstrate robustness of the proposed method using image classification datasets.

\textbf{Methods}
We consider the four methods: (i) the empirical risk minimization, denoted by ERM, (ii) the proposed method based on \eqref{eq:prop_bound}, denoted by WDRO, (iii) the empirical risk minimization with the Mixup, denoted by MIXUP, and (iv) the proposed method with the Mixup based on \eqref{eq:prop_perturb_bound}, denoted by WDRO+MIX.

\textbf{Datasets}
We use the two image classification datasets: CIFAR-10 and CIFAR-100 \citep{Krizhevsky09}. 
For the training, we randomly select 2500, 5000, 25000, or 50000 images from the original datasets, keeping the number of images per class equal. 
For the testing, we use the original test datasets.

Further implementation details are available in the Supplementary Material and Tensorflow \citep{abadi2016}-based scripts are available at \url{https://github.com/ykwon0407/wdro_local_perturbation}.

\subsection{Accuracy comparison}\label{subsec:robustness_of_GP}
To evaluate robustness of the methods, we compute accuracy on both clean and contaminated datasets.
For the latter, we apply the salt and pepper noise to the clean images \citep{hwang1995}.
Figure \ref{fig:cifar_sample} displays an example of the clean and contaminated images used in our experiments.

\textbf{Experiment 1 }
In this experiment, we compare the accuracy of the four methods using the clean and contaminated datasets. 
For the contaminated datasets, we apply the salt and pepper noise to 1\% of pixels.
The training sample sizes vary as $2500$, $5000$, $25000$, and $50000$.
We repeatedly select samples and train models five times\footnote{Note that when the sample size is $50000$, the training set is fixed but trained models can vary due to the randomness of algorithms.}.

Table \ref{t:compare} compares accuracy of the four methods.
For the clean datasets, WDRO+MIX performs comparably with MIXUP and achieves significantly higher accuracy than ERM and WDRO when the sample sizes are 2500 and 5000.
When the sample sizes are 25000 and 50000, either WDRO or WDRO+MIX shows lower accuracy than MIXUP and ERM.
For the contaminated datasets, either WDRO or WDRO+MIX achieves significantly higher accuracy than ERM and MIXUP in all settings.
This shows that the proposed method is robust to contamination of data.

\begin{figure}[t]
\vskip 0.2in
\begin{center}
\centerline{
\includegraphics[width=0.24\columnwidth]{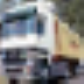}
\includegraphics[width=0.24\columnwidth]{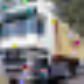}
\includegraphics[width=0.24\columnwidth]{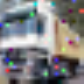}
\includegraphics[width=0.24\columnwidth]{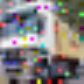}
}
\caption{An example of clean and contaminated images. We apply the salt and pepper noise to each pixel. The probabilities of noisy pixels are $0\%$, $1 \%$, $2\%$, and $4\%$ from left to right, respectively.}
\label{fig:cifar_sample}
\end{center}
\vskip -0.2in
\end{figure}

\textbf{Experiment 2 } 
In this experiment, we compare the reduction of the accuracy from using the clean datasets to the contaminated datasets. 
For the noise intensity, the probabilities of noisy pixels are set to $1\%$, $2\%$, and $4\%$.
We repeatedly train models five times using the original 50000 images.

Table \ref{t:noise_gap} shows accuracy reduction of the four methods.
The WDRO, with or without the Mixup, achieves a significantly lower reduction than ERM and MIXUP in every noise level and dataset.
For example, on CIFAR-10, the accuracy reduction in WDRO+MIX is $12.7\%$ on average, compared to $24.3\%$ in MIXUP when the probability of noisy pixels is $2\%$.
With the same noise level, on CIFAR-100, the accuracy reduction in WDRO+MIX is $29.7\%$ on average, compared to $45.9\%$ in MIXUP.
This result shows that the proposed method produces robust models against various noise levels.

\begin{table}[t]
\caption{The comparison of the accuracy reduction on various salt and pepper noise intensities. Other details are given in Table \ref{t:compare}.}
\label{t:noise_gap}
\vskip 0.15in
\begin{center}
\begin{small}
\begin{sc}
\resizebox{\columnwidth}{!}{
\begin{tabular}{lccccccccc}
\toprule
Probability of & \multirow{2}{*}{ERM} & \multirow{2}{*}{WDRO} & \multirow{2}{*}{MIXUP} & \multirow{2}{*}{WDRO+MIX} \\
noisy pixels &  &  & &  \tabularnewline 
\midrule
CIFAR-10 & & & &  \tabularnewline
1\% & $10.1\pm 0.9$ & $\mathbf{5.7\pm 0.4}$ & $12.4\pm1.2$ & $\mathbf{6.2\pm0.4}$ \tabularnewline
2\% & $21.1\pm1.9$ & $\mathbf{13.2\pm 0.5}$ & $24.3\pm1.4$ & $\mathbf{12.7\pm0.8}$ \tabularnewline
4\% & $39.7\pm2.9$ & $\mathbf{32.9\pm 2.5}$ & $43.5\pm1.8$ & $\mathbf{30.9\pm2.0}$ \tabularnewline
\midrule
CIFAR-100 & & & & \tabularnewline
1\% & $22.5\pm1.3$ & $\mathbf{11.4\pm0.4}$ & $25.2\pm2.5$ & $13.2\pm0.7$ \tabularnewline
2\% & $42.8\pm2.3$ & $\mathbf{26.5\pm1.0}$ & $45.9\pm3.4$ & $29.7\pm0.7$ \tabularnewline
4\% & $61.7\pm1.4$ & $\mathbf{50.0\pm0.9}$ & $63.9\pm2.0$ & $53.5\pm0.9$ \tabularnewline
\bottomrule
\end{tabular}
}
\end{sc}
\end{small}
\end{center}
\vskip -0.1in
\end{table}

\begin{figure}[t]
\vskip 0.2in
\begin{center}
\centerline{\includegraphics[width=0.9\columnwidth, height=2.6in]{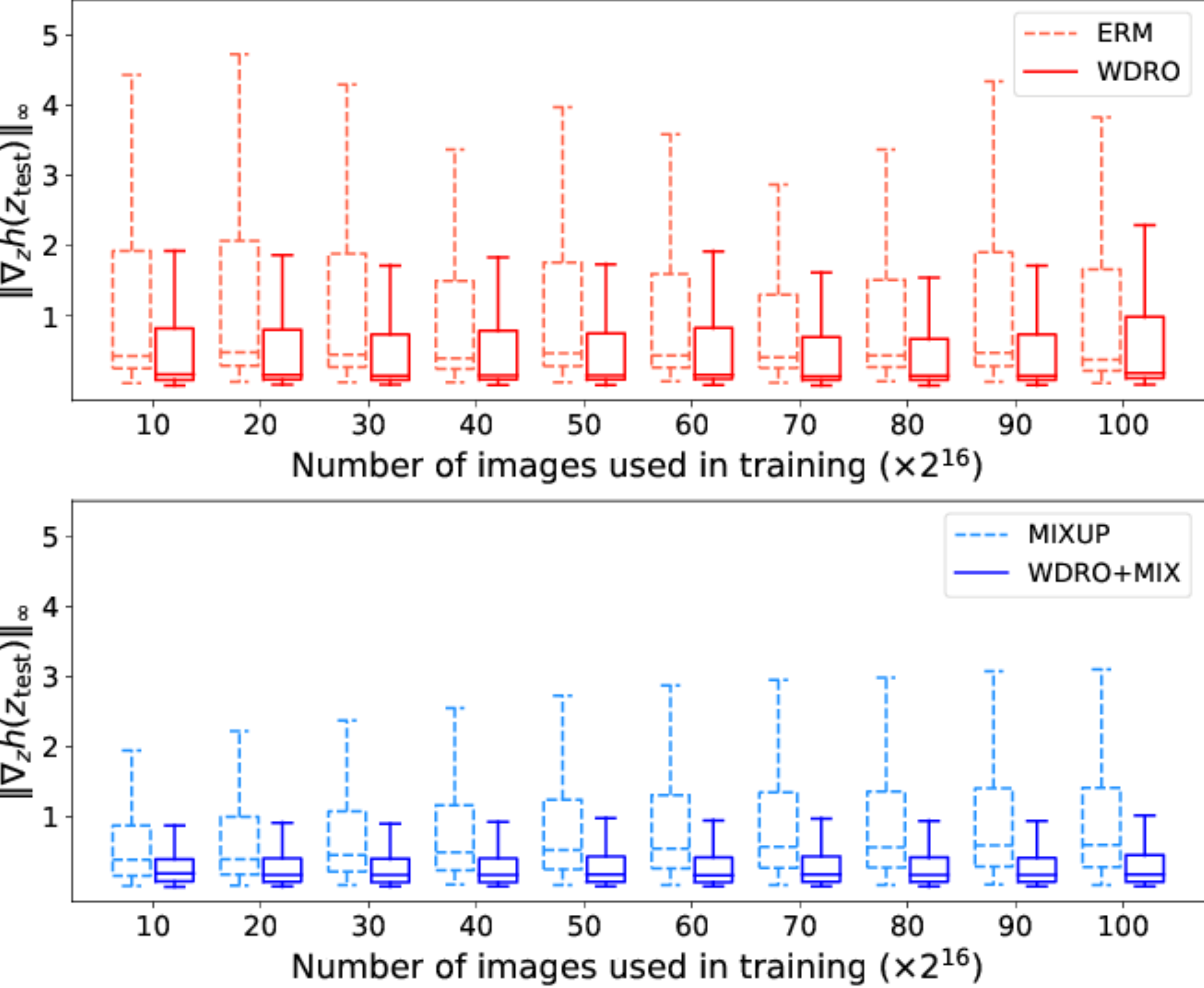}}
\caption[The distribution of the $\ell_{\infty}$-norm of the gradients]{The box plots of the $\ell_{\infty}$-norm of the gradients when the number of images used in training increases from $10 \times 2^{16}$ to $100 \times 2^{16}$. We use the original CIFAR-10 test images. The box plots on the top represent the gradient distribution of (dashed) ERM and (solid) WDRO, respectively, and the box plots on the bottom represent that of (dashed) MIXUP and (solid) WDRO+MIX, respectively.
}
\label{fig:acc_and_grad}
\end{center}
\vskip -0.2in
\end{figure}

\begin{figure}[t]
\vskip 0.2in
\begin{center}
\centerline{\includegraphics[width=0.95\columnwidth, height=2.8in]{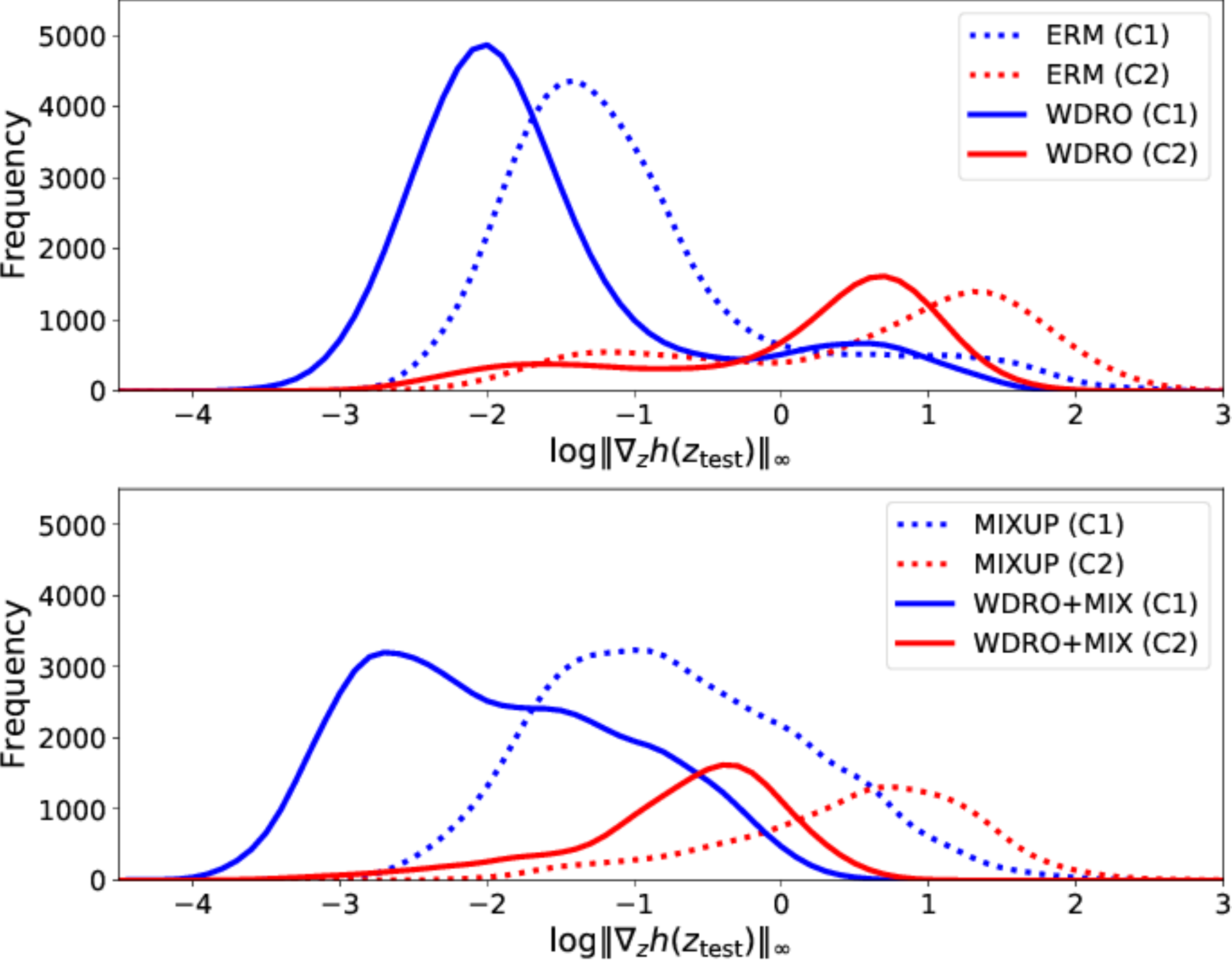}}
\caption{The smoothed histograms of gradients characterizing the two categories: (C1) the images that are correctly classified on both clean and contaminated state and (C2) the other images, respectively. The top panel shows the histograms of ERM (dotted) vs. WDRO (solid) and the bottom panel shows those of MIXUP (dotted) vs. WDRO+MIX (solid), respectively.}
\label{fig:grad_and_robustness}
\end{center}
\vskip -0.2in
\end{figure}

\subsection{Analysis of the gradient}\label{subsec:gradient_penalization}
In this subsection, we demonstrate robustness of our method by analyzing the distribution of the gradients of the loss.
We train models with randomly selected 5000 CIFAR-10 images and evaluate the gradients with the CIFAR-10 test images.
We consider the $\ell_{\infty}$-norm of the gradients $\norm{\nabla_z h(z_{\rm test})}_{\infty}$ for each test image $z_{\rm test}$.

\textbf{Experiment 3 }
In this experiment, we compare the gradients of ERM vs. WDRO and MIXUP vs. WDRO+MIX as the number of images used in training increases, respectively.
Figure \ref{fig:acc_and_grad} shows the box plots of the $\ell_{\infty}$-norm of the gradients.
Over the entire training phases, the first and third quartiles of the gradients of WDRO (\textit{resp.} WDRO+MIX) are smaller than those of the gradients of ERM (\textit{resp.} MIXUP).
This result empirically validates that the gradient penalties in \eqref{eq:prop_bound} and \eqref{eq:prop_perturb_bound} lead to small gradients and robustness of WDRO and WDRO+MIX.

\textbf{Experiment 4 } 
We visualize smoothed histograms of the gradients for the four methods.
We divide the test datasets into the following two categories: (C1) images that are correctly classified on both clean and contaminated state and (C2) images that are incorrectly classified on either clean or contaminated state.
In this experiment, the level of noise for the contaminated dataset is $1\%$. 

Figure \ref{fig:grad_and_robustness} shows the smoothed histograms of the gradients for (C1) and (C2).
The gradients for (C1) depicted in blue are smaller than those for (C2) depicted in red.
In both categories (C1) and (C2), the WDRO (\textit{resp.} WDRO+MIX), depicted in the solid line, has smaller gradients than ERM (\textit{resp.} MIXUP) depicted in the dotted line. 
Thus the proposed method tends to reduce the sizes of gradients in both categories (C1) and (C2), which leads to the robustness of WDRO and WDRO+MIX.

\section{Conclusion}
In this work, we have developed a principled and tractable statistical inference method for WDRO.
In addition, we formally present a locally perturbed data distribution (e.g., Mixup) and develop WDRO inference when data are locally perturbed.
Numerical experiments demonstrate robustness of the proposed method.

\section*{Acknowledgments}
Yongchan Kwon was supported by the National Research Foundation of Korea (NRF) grant funded by the Korea government (MSIT, No.2017R1A2B4008956).
Wonyoung Kim and Myunghee Cho Paik were supported by the NRF grant (MSIT, No.2020R1A2C1A0101195011) and Joong-Ho Won was supported by the NRF grant (MSIT, No.2019R1A2C1007126).
Wonyoung Kim was also supported by Hyundai Chung Mong-koo foundation.

\bibliography{ref}
\bibliographystyle{icml2020}

\appendix

\onecolumn
\icmltitle{Appendix: Principled Learning Method for Wasserstein Distributionally Robust Optimization with Local Perturbations}

\setcounter{equation}{8}
\setcounter{remark}{5}
\setcounter{theorem}{6}
\setcounter{proposition}{1}
\setcounter{corollary}{1}
\setcounter{table}{3}

\section{Proofs}
When $M = 0$ and $\beta_n = 0$ for all $n$, a $\beta_n$-locally perturbed data distribution is the empirical data distribution, \textit{i.e.},  $\bP_n ^{\prime}=\bP_n$.  
Therefore, Theorem \ref{thm:wdro_approx_sharp_bound} is a special case of Theorem \ref{thm:wdro_perturb_approx_sharp_bound}.
Also, in such cases, $R_{\alpha_n, p} ^{\mathrm{prop}} (\bP_{n}, h) = R_{(\alpha_n, \beta_n), p} ^{\mathrm{prop}} (\bP_{n}, h)$ and $\hat{h}_{\alpha_n,p} ^{\mathrm{prop}} = \hat{h}_{(\alpha_n, \beta_n),p} ^{\mathrm{prop}}$, and Theorems \ref{thm:wdro_excess_worst_risk} and thus \ref{thm:wdro_excess_risk} are a special case of Theorems \ref{thm:wdro_perturb_excess_worst_risk} and \ref{thm:wdro_perturb_excess_risk}, respectively. 
In this respect, we omit proofs for Theorems \ref{thm:wdro_approx_sharp_bound}, \ref{thm:wdro_excess_worst_risk}, and \ref{thm:wdro_excess_risk}.

\subsection{Proof of Proposition \ref{prop:wdro_perturb_approx_bound}}
\label{s:wdro_perturb_approx_bound_proof}
\begin{proof}[Proof of Proposition \ref{prop:wdro_perturb_approx_bound}]
Since $\bP_n \in \fM_{\alpha_n,p} (\bP_n)$, we have 
\begin{align*}
R(\bP_n, h) \leq R_{\alpha_n,p} ^{\mathrm{worst}}(\bP_n, h).
\end{align*}
Let $\bQ^*$ be such that 
$R(\bQ^*, h) =  \sup_{\bQ \in \fM_{\alpha_n,p} (\bP_n)} R(\bQ, h) = R_{\alpha_n,p} ^{\mathrm{worst}}(\bP_n, h)$.
Since $h$ is Lipschitz continuous, the Kantorovich-Rubinstein duality \citep[Remark 6.5]{villani2008} gives
\begin{align*}
R(\bQ^*, h) - R(\bP_n, h) &\leq \mLip(h) \cW_1 (\bQ^*, \bP_n) \\
&\leq \mLip(h) \cW_p (\bQ^*, \bP_n) \\
&\leq \mLip(h) \alpha_n.
\end{align*}
Here, the second inequality is due to $\cW_1 (\bQ^*, \bP_n) \leq \cW_p (\bQ^*, \bP_n)$ for $p \in [1, \infty)$ \citep[Remark 6.6]{villani2008}.
Thus, 
\begin{align}
\left| R(\bP_n, h) - R_{\alpha_n,p} ^{\mathrm{worst}}(\bP_n, h) \right| \leq \mLip(h) \alpha_n.
\label{ineq:diff_worst_emp}
\end{align}

Write $\bP_n ^{\prime} = \frac{1}{n} \sum_{i=1} ^n \delta_{z_i ^{\prime}}$ for some $\{z_1', \dotsc, z_n'\}$ such that $\norm{z_i ^{\prime} - z_i} \leq \beta_n$ for all $i \in [n]$.
Then, we have $z_i ^{\prime} \in \cZ + \mathcal{B}(M)$ and $h(z_i ^{\prime})$\rq{}s are well defined.
By the Lipschitz continuity of $h$ and the definition of $\bP_n ^{\prime}$, we have 
\begin{align*}
\left| R(\bP_n, h) - R(\bP_n ^{\prime}, h) \right| &= \left| \frac{1}{n} \sum_{i=1} ^n (h(z_i)-h(z_i ^{\prime})) \right| \\
&\leq \frac{1}{n} \sum_{i=1} ^n \mLip(h) \norm{z_i ^{\prime} - z_i} \\
&\leq \mLip(h) \beta_n.
\end{align*}
Therefore, we have
\begin{align*}
\left| R(\bP_n ^{\prime}, h) - R_{\alpha_n,p} ^{\mathrm{worst}}(\bP_n, h)  \right| \leq (\alpha_n + \beta_n  )\mLip(h).
\end{align*}
This concludes the proof.
\end{proof}

\subsection{Proof of Theorem \ref{thm:wdro_perturb_approx_sharp_bound}}
\label{s:wdro_perturb_approx_sharp_bound_proof}

\begin{proof}[Proof of Theorem \ref{thm:wdro_perturb_approx_sharp_bound}]
Write $\bP_n ^{\prime} = \frac{1}{n} \sum_{i=1} ^n \delta_{z_i ^{\prime}}$ for some $\{z_1', \dotsc, z_n'\}$ such that
$\norm{z_i ^{\prime} - z_i} \leq \beta_n$
for all $i \in [n]$.
Then, we have $z_i ^{\prime} \in \mathrm{Conv}(\cZ) + \mathcal{B}(M)$ and $h(z_i ^{\prime})$\rq{}s are well defined. 

[Step 1] In this step we first establish an upper bound for the local worst-case risk $R_{\alpha_n,p} ^{\mathrm{worst}}(\bP_n, h)$. 
Since $h$ is well defined and differentiable on $\mathrm{Conv}(\cZ) + \mathcal{B}(M)$, we can apply the mean value theorem.
Due to the $(C_{\mathrm{H}},k)$-H{\"o}lder continuity of $\nabla_z h$, for any $i \in [n]$ and $\tilde{z}_i \in \cZ$, we have
\begin{align*}
h(\tilde{z}_i) &= h(z_i ^{\prime}) + \langle \nabla_z h(c_i), \tilde{z}_i -z_i ^{\prime} \rangle \\
&= h(z_i ^{\prime}) + \langle \nabla_z h(z_i ^{\prime}),  \tilde{z}_i -z_i ^{\prime} \rangle + \langle \nabla_z h(c_i) - \nabla_z h(z_i ^{\prime}), \tilde{z}_i -z_i ^{\prime} \rangle \\
&\leq h(z_i ^{\prime}) + \norm{\nabla_z h(z_i ^{\prime})}_* \norm{ \tilde{z}_i -z_i ^{\prime} } + C_{\mathrm{H}} \norm{\tilde{z}_i - z_i ^{\prime}} ^{1+k},
\end{align*}
where $c_i = \tau_i z_i ^{\prime} + (1-\tau_i) \tilde{z}_i$ for some $\tau_i \in [0,1]$.
By the triangle inequality and Jensen's inequality, $(a+b)^{1+k} \leq 2^k (a^{1+k}+ b^{1+k})$ for any $a, b \geq 0$, we have
\begin{align*}
&h(z_i ^{\prime}) + \norm{\nabla_z h(z_i ^{\prime})}_* \norm{ \tilde{z}_i -z_i ^{\prime}} + C_{\mathrm{H}} \norm{\tilde{z}_i - z_i ^{\prime}} ^{1+k} \\
&\leq h(z_i ^{\prime}) + \norm{\nabla_z h(z_i ^{\prime})}_* (\beta_n + \norm{ \tilde{z}_i -z_i } )+ C_{\mathrm{H}} 2^k ( \norm{\tilde{z}_i - z_i } ^{1+k} + \beta_n ^{1+k}) \\
&= \beta_n\left( \norm{\nabla_z h(z_i ^{\prime})}_* + C_{\mathrm{H}} 2^k \beta_n ^k \right) + h(z_i ^{\prime}) + \norm{\nabla_z h(z_i ^{\prime})}_* \norm{ \tilde{z}_i -z_i } + C_{\mathrm{H}} 2^k \norm{\tilde{z}_i - z_i } ^{1+k}.
\end{align*}
To this end, we set $C_{\mathrm{H},k} := C_{\mathrm{H}} 2^k$ and  $t_i := \norm{ \tilde{z}_i -z_i }$.
By \citet[Lemma 2]{gao2017}, for any $\eta > 0$ and $\lambda \geq 0$, we have
\begin{align*}
&\norm{\nabla_z h(z_i ^{\prime})}_* t_i + C_{\mathrm{H},k} t_i ^{1+k} - \lambda t_i ^{p} \\
&\leq \left( \norm{\nabla_z h(z_i ^{\prime})}_* + \frac{p-k-1}{p-1} C_{\mathrm{H},k} \eta \right) t_i - \left(\lambda - \frac{k}{p-1}C_{\mathrm{H},k} \eta^{-\frac{p-k-1}{k}} \right) t_i ^{p}.
\end{align*}
By substituting $\eta$ with $\alpha_n ^k$, 
\begin{align}
&\norm{\nabla_z h(z_i ^{\prime})}_* t_i + C_{\mathrm{H},k} t_i ^{1+k} - \lambda t_i ^{p} \notag \\
&\leq \left(\norm{\nabla_z h(z_i ^{\prime})}_* + \frac{p-k-1}{p-1} C_{\mathrm{H}, k} \alpha_n ^k \right) t_i - \left(\lambda - \frac{k}{p-1}C_{\mathrm{H},k} \alpha_n^{-(p-k-1)}\right) t_i ^{p} \notag \\
&=:  h_{\alpha_n} \left(z_i ^{\prime}) t_i - (\lambda - C_{\alpha_n} \right) t_i ^p. \label{ineq:upper_part_1}
\end{align}
Since $\cZ$ is bounded, there exists a constant $D_{\cZ}$ such that $\sup_{z, \tilde{z} \in \cZ} \norm{z - \tilde{z}} \leq D_{\cZ}$. 
Then,
\begin{align*}
\sup_{0\leq t \leq D_{\cZ}} \{ h_{\alpha_n} (z_i ^{\prime}) t - (\lambda - C_{\alpha_n}) t^p \} = \begin{cases}
h_{\alpha_n} (z_i ^{\prime}) D_{\cZ} -(\lambda - C_{\alpha_n}) D_{\cZ}^p &\text{if } 0 \leq \lambda \leq C_{\alpha_n},\\
h_{\alpha_n} (z_i ^{\prime}) t_* (\lambda) -(\lambda - C_{\alpha_n}) t_* ^p (\lambda) &\text{if }  C_{\alpha_n} < \lambda ,
\end{cases}
\end{align*}
where $t_* (\lambda) = \min\left\{ \left(\frac{h_{\alpha_n} (z_i ^{\prime})}{(\lambda-C_{\alpha_n}) p} \right)^{1/(p-1)}, D_{\cZ}\right\}$.
Here,
\begin{align*}
\left(\frac{ h_{\alpha_n} (z_i ^{\prime})}{(\lambda-C_{\alpha_n}) p} \right)^{1/(p-1)} < D_{\cZ} \Leftrightarrow C_{\alpha_n} + \frac{h_{\alpha_n} (z_i ^{\prime})}{p D_{\cZ}^{p-1}} < \lambda.
\end{align*}
Thus, 
\begin{align*}
\sup_{0\leq t \leq D_{\cZ}} \{ h_{\alpha_n} (z_i ^{\prime}) t - (\lambda - C_{\alpha_n}) t^p \} = \begin{cases}
h_{\alpha_n} (z_i ^{\prime}) D_{\cZ} -(\lambda - C_{\alpha_n}) D_{\cZ}^p, &\text{if } 0 \leq  \lambda \leq C_{\alpha_n} + \frac{h_{\alpha_n} (z_i ^{\prime})}{p D_{\cZ}^{p-1}}, \\
p^{-p^*} (p-1) (\lambda - C_{\alpha_n})^{-\frac{1}{p-1}} \norm{ h_{\alpha_n}(z_i ^{\prime}) }_* ^{p^*}, & \text{if } C_{\alpha_n} + \frac{h_{\alpha_n} (z_i ^{\prime})}{p D_{\cZ}^{p-1}} < \lambda.
\end{cases}
\end{align*}
Note that $\norm{ h_{\alpha_n}(z_i ^{\prime}) }_* = h_{\alpha_n}(z_i ^{\prime})$.
Let $ \lambda_* :=  C_{\alpha_n} + \frac{ \max_{i \in [n]} \{ h_{\alpha_n} (z_i ^{\prime})\} }{p D_{\cZ}^{p-1}}$.
Using the triangle inequality and the H{\"o}lder continuity of $\nabla_z h$, for any $z\in \cZ$ and some point $z_0 \in \cZ$, we have
\begin{align*}
\norm{\nabla_z h (z)}_* &\leq \norm{\nabla_z h (z_0)}_* + \norm{\nabla_z h (z)- \nabla_ z h (z_0)}_* \\
&\leq  \norm{\nabla_z h (z_0)}_* + C_{\mathrm{H}} \norm{z - z_0}^{k} \\
&\leq  \norm{\nabla_z h (z_0)}_* + C_{\mathrm{H}} D_{\cZ}^{k}.
\end{align*}
This implies $\norm{\nabla_z h (z)}_{*} $ is bounded for all $z \in \mathrm{Conv}(\cZ) + \mathcal{B}(M)$. We denote the upper bound by $L_{\nabla}$, \textit{i.e.}, $\norm{\nabla_z h (z)}_{*} \leq L_{\nabla} <\infty$ for all $z \in \mathrm{Conv}(\cZ) + \mathcal{B}(M)$.
Then, we have
\begin{align}
\frac{ \max_{i \in [n]} \{ h_{\alpha_n} (z_i ^{\prime})\} }{p D_{\cZ}^{p-1}} \leq  \frac{ L_{\nabla} + \frac{p-k-1}{p-1} C_{\mathrm{H}, k} \alpha_n ^k}{p D_{\cZ}^{p-1}} < \infty.
\label{ineq:approx_upper_part_1}
\end{align}
At the same time, by the definition of $\norm{ h_{\alpha_n} }_{\bP_n ^{\prime}, 1}$, we have
\begin{align}
\frac{ 0 + \frac{p-k-1}{p-1} C_{\mathrm{H}, k} \alpha_n ^k}{p  \alpha_n ^{p-1}} \leq \frac{\norm{ h_{\alpha_n} }_{\bP_n ^{\prime}, 1}  }{ p \alpha_n ^{p-1}}, 
\label{ineq:approx_upper_part_2}
\end{align}
and the left-hand side diverges to infinity as $n$ increases due to $p >1+k$.
Since $\norm{ h_{\alpha_n} }_{\bP_n ^{\prime}, 1} \leq \norm{ h_{\alpha_n} }_{\bP_n ^{\prime}, p^*}$ and by the inequalities \eqref{ineq:approx_upper_part_1} and \eqref{ineq:approx_upper_part_2} give for a large enough $n$,
\begin{align*}
\lambda_* < C_{\alpha_n} + \frac{ \norm{ h_{\alpha_n} }_{\bP_n ^{\prime}, p^*}  }{p \alpha_n ^{p-1}}.
\end{align*}
Therefore, for a large enough $n$,
\begin{align}
\inf_{ \lambda_*< \lambda} \left\{ \lambda \alpha_n ^{p} + \frac{1}{n} \sum_{i=1} ^n \sup_{0\leq t \leq D_{\cZ}} \{ h_{\alpha_n} (z_i ^{\prime}) t - (\lambda-C_{\alpha_n}) t^p \} \right\} &=  C_{\alpha_n} \alpha_n ^{p}  + \alpha_n  \norm{ h_{\alpha_n} }_{\bP_n ^{\prime}, p^*} \notag \\
&\leq C_{\alpha_n} \alpha_n ^{p}  + \alpha_n \left\{ \norm{\nabla_z h}_{\bP_n ^{\prime}, p^*} + \frac{p-k-1}{p-1} C_{\mathrm{H},k} \alpha_n ^k \right\} \notag \\
&=\alpha_n \norm{\nabla_z h}_{\bP_n ^{\prime}, p^*} + C_{\mathrm{H},k} \alpha_n ^{1+k}.  \label{ineq:upper_part_2}
\end{align}
The inequality is due to the Minkowski inequality.
By arranging all the results, for a large enough $n$, we have
\begin{align*}
&R_{\alpha_n,p} ^{\mathrm{worst}}(\bP_n, h) - R(\bP_n ^{\prime}, h)  \\
&\stackrel{\eqref{eq:dro_duality}}{=} \min_{\lambda \geq 0} \Big\{ \lambda \alpha_n^p + \frac{1}{n} \sum_{i=1} ^n \sup_{\tilde{z} \in \cZ } \left\{ h(\tilde{z}) - h(z_i ^{\prime})- \lambda \norm{\tilde{z} - z_i}^p \right\} \Big\}  \\
&\leq \beta_n( \norm{\nabla_z h}_{\bP_n ^{\prime},1} + C_{\mathrm{H},k} \beta_n ^k ) + \min_{\lambda \geq 0} \Big\{ \lambda \alpha_n^p + \frac{1}{n} \sum_{i=1} ^n \sup_{\tilde{z} \in \cZ } \left\{  \norm{\nabla_z h(z_i ^{\prime})}_* \norm{ \tilde{z}_i -z_i } + C_{\mathrm{H}, k} \norm{\tilde{z}_i - z_i } ^{1+k} - \lambda \norm{\tilde{z} - z_i}^p \right\} \Big\}  \\
&\leq \beta_n( \norm{\nabla_z h}_{\bP_n ^{\prime},1} + C_{\mathrm{H},k} \beta_n ^k ) + \min_{\lambda \geq 0} \Big\{ \lambda \alpha_n^p + \frac{1}{n} \sum_{i=1} ^n \sup_{ 0 \leq t \leq D_{\cZ} } \left\{  \norm{\nabla_z h(z_i ^{\prime})}_* t + C_{\mathrm{H}, k} t ^{1+k} - \lambda t^p \right\} \Big\}  \\
&\stackrel{\eqref{ineq:upper_part_1}}{\leq} \beta_n( \norm{\nabla_z h}_{\bP_n ^{\prime},1} + C_{\mathrm{H},k} \beta_n ^k ) + \min_{\lambda \geq \lambda_* } \Big\{ \lambda \alpha_n^p + \frac{1}{n} \sum_{i=1} ^n \sup_{ 0 \leq t_i \leq D_{\cZ} } \left\{  h_{\alpha_n} \left(z_i ^{\prime}) t_i - (\lambda - C_{\alpha_n} \right) t_i ^p \right\} \Big\}  \\
&\stackrel{\eqref{ineq:upper_part_2}}{\leq} \beta_n( \norm{\nabla_z h}_{\bP_n ^{\prime},1} + C_{\mathrm{H},k} \beta_n ^k ) + \alpha_n \norm{\nabla_z h}_{\bP_n ^{\prime}, p^*} + C_{\mathrm{H},k} \alpha_n ^{1+k}  \\
&= O(\beta_n + \alpha_n ^{1+k}) + \alpha_n \norm{\nabla_z h}_{\bP_n ^{\prime}, p^*}.
\end{align*}
Thus, we have
\begin{align}
R_{\alpha_n,p} ^{\mathrm{worst}}(\bP_n, h) - R(\bP_n ^{\prime}, h) - \alpha_n \norm{\nabla_z h}_{\bP_n ^{\prime}, p^*} = O(\beta_n + \alpha_n ^{1+k}).
\label{ineq:approx_upper_bound}
\end{align}

[Step 2] In this step, we establish a lower bound for the local worst-case risk $R_{\alpha_n,p} ^{\mathrm{worst}}(\bP_n, h)$. 
By the definition of the Wasserstein ball $\fM_{\alpha_n,p} (\bP_n)$, we have
\begin{align*}
&R_{\alpha_n,p} ^{\mathrm{worst}}(\bP_n, h) - R(\bP_n ^{\prime}, h) \\
&\geq \sup_{\tilde{z}_i \in \cZ} \left\{ \frac{1}{n} \sum_{i=1} ^n \{ h(\tilde{z}_i) - h(z_i ^{\prime} ) \} \mid \left( \frac{1}{n} \sum_{i=1} ^n \norm{\tilde{z}_i - z_i}^p \right)^{1/p} \leq \alpha_n \right\}.
\end{align*}
Again, the mean value theorem and the H\"older continuity assumption on $\nabla_z h$ give \begin{align*}
h(\tilde{z}_i) &= h(z_i ^{\prime}) + \langle \nabla_z h(c_i), \tilde{z}_i -z_i ^{\prime} \rangle \\
&= h(z_i ^{\prime}) + \langle \nabla_z h(z_i ^{\prime}), \tilde{z}_i -z_i ^{\prime} \rangle + \langle \nabla_z h(c_i) - \nabla_z h(z_i ^{\prime}), \tilde{z}_i -z_i ^{\prime} \rangle \\
&\geq h(z_i ^{\prime}) + \langle \nabla_z h(z_i ^{\prime}), \tilde{z}_i -z_i ^{\prime} \rangle 
- C_{\mathrm{H}}\norm{\tilde{z}_i - z_i ^{\prime}} ^{1+k} \\
&\geq h(z_i ^{\prime}) + \langle \nabla_z h(z_i ^{\prime}),  (\tilde{z}_i -z_i ) + (z_i -z_i ^{\prime}) \rangle
- C_{\mathrm{H},k} \left(\norm{\tilde{z}_i - z_i } ^{1+k} + \beta_n^{1+k} \right) \\
&\geq h(z_i ^{\prime}) + \langle \nabla_z h(z_i ^{\prime}),  \tilde{z}_i -z_i \rangle
- \norm{\nabla_z h(z_i ^{\prime})}_* \beta_n - C_{\mathrm{H},k} \left(\norm{\tilde{z}_i - z_i } ^{1+k} + \beta_n^{1+k} \right),
\end{align*}
where $c_i = t z_i + (1-t) \tilde{z}_i$ for some $t \in [0,1]$.
Thus, we have
\begin{align*}
&R_{\alpha_n,p} ^{\mathrm{worst}}(\bP_n, h) - R(\bP_n ^{\prime}, h) \\
\geq& -\beta_n( \norm{\nabla_z h}_{\bP_n ^{\prime},1} + C_{\mathrm{H},k} \beta_n ^k ) \\
&+ \sup_{\tilde{z}_i \in \cZ} \Big\{ \frac{1}{n} \sum_{i=1} ^n \{ \langle \nabla_z h(z_i ^{\prime}), \tilde{z}_i -z_i \rangle - C_{\mathrm{H},k} \norm{\tilde{z}_i - z_i} ^{1+k} \} \mid \left( \frac{1}{n} \sum_{i=1} ^n \norm{\tilde{z}_i - z_i}^p \right)^{1/p} \leq \alpha_n \Big\} \\
\geq& -\beta_n( \norm{\nabla_z h}_{\bP_n ^{\prime},1} + C_{\mathrm{H},k} \beta_n ^k ) \\
& + \sup_{\tilde{z}_i \in \cZ} \left\{ \frac{1}{n} \sum_{i=1} ^n \langle \nabla_z h(z_i ^{\prime}), \tilde{z}_i -z_i \rangle \mid \left( \frac{1}{n} \sum_{i=1} ^n \norm{\tilde{z}_i - z_i}^p \right)^{1/p} \leq \alpha_n \right\} \\
&- \sup_{\tilde{z}_i \in \cZ} \left\{ \frac{1}{n} \sum_{i=1} ^n C_{\mathrm{H},k} \norm{\tilde{z}_i - z_i} ^{1+k}  \mid \left( \frac{1}{n} \sum_{i=1} ^n \norm{\tilde{z}_i - z_i}^p \right)^{1/p} \leq \alpha_n \right\} \\
=:& -\beta_n \left( \norm{\nabla_z h}_{\bP_n ^{\prime},1} + C_{\mathrm{H},k} \beta_n ^k \right) + S_1 - S_2.
\end{align*}
As for the term $S_1$, by the definition of the dual norm we have
\begin{align*}
S_1 &\leq \sup_{\tilde{z}_i \in \cZ} \left\{ \frac{1}{n} \sum_{i=1} ^n \norm{\nabla_z h(z_i ^{\prime})}_*  \norm{\tilde{z}_i -z_i} \mid \left( \frac{1}{n} \sum_{i=1} ^n \norm{\tilde{z}_i - z_i}^p \right)^{1/p} \leq \alpha_n \right\},
\end{align*}
and by the H{\"o}lder inequality, 
\begin{align*}
\frac{1}{n} \sum_{i=1} ^n \norm{\nabla_z h(z_i ^{\prime})}_*  \norm{\tilde{z}_i -z_i} &\leq \left( \frac{1}{n} \sum_{i=1} ^n \norm{\nabla_z h(z_i ^{\prime})}_* ^{p^*} \right)^{1/p^*} \left( \frac{1}{n} \sum_{i=1} ^n \norm{\tilde{z}_i - z_i}^p \right)^{1/p} \\
&\leq \alpha_n \norm{\nabla_z h }_{\bP_n ^{\prime}, p^*},
\end{align*}
where the inequalities hold with equalities when for all $i \in [n]$
\begin{align*}
\norm{\tilde{z}_i - z_i} = \alpha_n \left( \frac{ \norm{\nabla_z h (z_i ^{\prime})}_* ^{p^*} }{ \frac{1}{n} \sum_{j=1} ^n \norm{\nabla_z h (z_j ^{\prime})}_* ^{p^*} } \right)^{1/p}.
\end{align*}
Here,
\begin{align*}
\alpha_n \left( \frac{ \norm{\nabla_z h (z_i ^{\prime})}_* ^{p^*} }{ \frac{1}{n} \sum_{j=1} ^n \norm{\nabla_z h (z_j ^{\prime})}_* ^{p^*} } \right)^{1/p} =  \alpha_n \left( \frac{ \norm{\nabla_z h (z_i ^{\prime})}_*  }{ \norm{ \nabla_z h }_{\bP_n ^{\prime}, p^*} } \right)^{p^*/p} \leq \alpha_n \left( \frac{ \norm{\nabla_z h (z_i ^{\prime})}_*  }{ \norm{ \nabla_z h }_{\bP_n ^{\prime}, 1} } \right)^{p^*/p}.
\end{align*}

Since $\alpha_n$ vanishes and $\cZ$ is an open set, $\tilde{z}_i \in \cZ$ if the term $\frac{ \norm{\nabla_z h (z_i ^{\prime})}_*  }{ \norm{ \nabla_z h }_{\bP_n ^{\prime}, 1} }$ is bounded. 
That is, the boundedness of $\frac{ \norm{\nabla_z h (z_i ^{\prime})}_*  }{ \norm{ \nabla_z h }_{\bP_n ^{\prime}, 1} }$ is a sufficient condition to achieve $S_1 =\alpha_n \norm{\nabla_z h }_{\bP_n ^{\prime}, p^*}$.
It is noteworthy that the numerator $\norm{\nabla_z h (z_i ^{\prime})}_*$ is bounded by $L_{\nabla}$, and due to the local perturbation, we have
\begin{align*}
\norm{ \nabla_z h(z_i ^{\prime})}_* &\geq \norm{ \nabla_z h(z_i ^{\prime})}_* - \norm{ \nabla_z h(z_i ) - \nabla_z h(z_i ^{\prime})}_* \\
&\geq \norm{ \nabla_z h(z_i )}_* - C_{\mathrm{H}} \norm{z_i ^{\prime} -z_i}^{1+k} \\
&\geq \norm{ \nabla_z h(z_i )}_* - C_{\mathrm{H}} \beta_n^{1+k}.
\end{align*}
Thus it is enough to show that the denominator $\norm{ \nabla_z h }_{\bP_n, 1}$ has a lower bound.

By the assumption $\bE_{\mathrm{data}}(\norm{\nabla_z h}_*) \geq C_{\nabla}$ and the fact $\norm{\nabla_z h (z)}_* \leq L_{\nabla}$ for all $z \in \mathrm{Conv}(\cZ) + \mathcal{B}(M)$, the McDiarmid inequality \citep[pages 136-137]{devroye2013} implies that for a fixed $\delta >0$, the following holds with probability at least $1-\delta$.
\begin{align}
\norm{ \nabla_z h }_{\bP_n , 1} \geq \bE_{\mathrm{data}}(\norm{\nabla_z h}_*)  - L_{\nabla} \sqrt{\frac{2}{n}\log(\frac{1}{\delta})}.
\label{ineq:lower_bound_mcdiarmid}
\end{align}
Therefore, for a large enough $n$, $\norm{ \nabla_z h }_{\bP_n, 1}$ is strictly greater than zero with high probability, and this implies that $S_1 = \alpha_n \norm{\nabla_z h }_{\bP_n ^{\prime}, p^*}$ with high probability.

As for the term $S_2$, we note the fact $ ( \frac{1}{n} \sum_{i=1} ^n \norm{\tilde{z}_i - z_i} ^{1+k} )^{ \frac{1}{1+k}} \leq \left( \frac{1}{n} \sum_{i=1} ^n \norm{\tilde{z}_i - z_i}^p \right)^{1/p}$ as $p > 1+k$.  
Since the equality holds when $\norm{\tilde{z}_i - z_i} = \alpha_n$ for all $i \in [n]$, we have 
\begin{align*}
\sup_{\tilde{z}_i \in \cZ} \left\{ \frac{1}{n} \sum_{i=1} ^n C_{\mathrm{H},k} \norm{\tilde{z}_i - z_i} ^{1+k}  \mid \left( \frac{1}{n} \sum_{i=1} ^n \norm{\tilde{z}_i - z_i}^p \right)^{1/p} \leq \alpha_n \right\} \leq C_{\mathrm{H},k} \alpha_n ^{1+k}.
\end{align*}

Thus, combining the terms $S_1$ and $S_2$ shows that for a large enough $n$ and a fixed $\delta >0$, the following holds with probability at least $1-\delta$.
\begin{align}
R_{\alpha_n,p} ^{\mathrm{worst}}(\bP_n, h) - R(\bP_n ^{\prime}, h) - \alpha_n \norm{\nabla_z h }_{\bP_n ^{\prime}, p^*} \geq -\beta_n( \norm{\nabla_z h}_{\bP_n ^{\prime}, 1} + C_{\mathrm{H},k} \beta_n ^k )  - C_{\mathrm{H},k} \alpha_n ^{1+k}.
\label{ineq:approx_lower_bound}
\end{align}

[Step 3] By the inequalities \eqref{ineq:approx_upper_bound} and \eqref{ineq:approx_lower_bound}, we have the following.
\begin{align*}
\left| R(\bP_n ^{\prime}, h) +\alpha_n \norm{\nabla_z h }_{\bP_n ^{\prime}, p^*} - R_{\alpha_n,p} ^{\mathrm{worst}}(\bP_n, h) \right| = O_p (\beta_n + \alpha_n ^{1+k}).
\end{align*}
This concludes the proof.
\end{proof}

\begin{remark}
\label{remark:uniform_bound}
The inequality \eqref{ineq:lower_bound_mcdiarmid} shows that $\norm{ \nabla_z h }_{\bP_n, 1}$ has a lower bound with high probability. 
To appropriately use the result of Theorem \ref{thm:wdro_perturb_approx_sharp_bound} to Theorems \ref{thm:wdro_perturb_excess_worst_risk} and \ref{thm:wdro_perturb_excess_risk}, we need a uniform bound result of $\norm{ \nabla_z h }_{\bP_n, 1}$.
Note that the inequality \eqref{ineq:lower_bound_mcdiarmid} does not hold when the loss $h$ depends on data.
We use the same $\cH$ as in Theorems \ref{thm:wdro_perturb_excess_worst_risk} and \ref{thm:wdro_perturb_excess_risk} and give a uniform bound result in the following proposition.
\end{remark}

\begin{proposition}
Let $\cZ$ be an open and bounded subset of $\bR^d$.
For constants $C_{\mathrm{H}}, C_{\nabla}, L >0$, $k \in (0,1]$, and $M \geq \sup_{n \in \mathbb{N}} \beta_n$, we let $\cH$ be a uniformly bounded set of differentiable functions $h: \mathrm{Conv}(\cZ)+\mathcal{B}(M) \to \mathbb{R}$ such that its gradient $\nabla_z h$ is $(C_{\mathrm{H}}, k)$-H{\"o}lder continuous, $\bE_{\mathrm{data}}(\norm{\nabla_z h}_*) \geq C_{\nabla}$, and $\mLip(h) \leq L$.
Then, for $\delta > 0$ and a large enough $n$, the following holds with probability at least $1-\delta$.
\begin{align*}
\norm{ \nabla_z h }_{\bP_n, 1} \geq \bE_{\mathrm{data}}(\norm{\nabla_z h}_*) - 2 \sqrt{2} \left( LC_{\mathrm{H},k, 2} + \frac{k}{d L C_{\mathrm{H},k, 2}} \right) n^{-\frac{k}{2k+d}} - L \sqrt{\frac{2}{n}\log(\frac{2}{\delta})},
\end{align*}
for some constant $C_{\mathrm{H},k, 2} >0$.
\label{prop:uniform_bound}
\end{proposition}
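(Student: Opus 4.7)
The plan is to upgrade the pointwise McDiarmid bound \eqref{ineq:lower_bound_mcdiarmid} to a uniform statement over $\cH$ via a classical covering argument applied to the class of gradient-norm functions
\begin{align*}
\cG := \{g_h : \mathrm{Conv}(\cZ)+\mathcal{B}(M) \to \bR, \ g_h(z) = \norm{\nabla_z h(z)}_* \mid h \in \cH\}.
\end{align*}
The key observations are that (i) since $\mLip(h)\leq L$, every $g \in \cG$ is bounded by $L$, and (ii) since $\nabla_z h$ is $(C_{\mathrm{H}}, k)$-H{\"o}lder continuous and the dual norm $\norm{\cdot}_*$ is $1$-Lipschitz, each $g_h$ is $(C_{\mathrm{H}}, k)$-H{\"o}lder continuous on the bounded domain $\mathrm{Conv}(\cZ)+\mathcal{B}(M) \subseteq \bR^d$. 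In particular, $\cG$ is a uniformly bounded H{\"o}lder class on a bounded Euclidean domain.

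First, I would invoke the classical Kolmogorov--Tikhomirov covering estimate for H{\"o}lder classes: there exists a constant $A>0$, depending only on $C_{\mathrm{H}}, k, d$ and $\mathrm{diam}(\mathrm{Conv}(\cZ)+\mathcal{B}(M))$, such that
\begin{align*}
\log \cN(\epsilon, \cG, \norm{\cdot}_\infty) \leq A\epsilon^{-d/k} \qquad \text{for all } \epsilon > 0.
\end{align*}
Here $A$ is the source of the constant $C_{\mathrm{H},k,2}$ appearing in the statement. Second, for any fixed $g^\circ$ in an $\epsilon$-cover $\cG_\epsilon$, since $g^\circ \in [0, L]$, McDiarmid's inequality gives
\begin{align*}
\bP\bigl(\bE_{\mathrm{data}}(g^\circ) - \bP_n g^\circ > t\bigr) \leq \exp\bigl(-n t^2/(2L^2)\bigr).
\end{align*}
A union bound over $\cG_\epsilon$ then yields, with probability at least $1-\delta$, simultaneously for every $g^\circ \in \cG_\epsilon$,
\begin{align*}
\bP_n g^\circ \geq \bE_{\mathrm{data}}(g^\circ) - L \sqrt{\tfrac{2}{n}\bigl(A\epsilon^{-d/k} + \log(2/\delta)\bigr)}.
\end{align*}

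Third, for an arbitrary $h \in \cH$, I would select $g^\circ \in \cG_\epsilon$ with $\norm{g_h - g^\circ}_\infty \leq \epsilon$. The triangle inequality then controls both the empirical and population quantities up to $\epsilon$, producing
\begin{align*}
\norm{\nabla_z h}_{\bP_n, 1} \geq \bE_{\mathrm{data}}(\norm{\nabla_z h}_*) - 2\epsilon - L\sqrt{\tfrac{2}{n}A\epsilon^{-d/k}} - L\sqrt{\tfrac{2}{n}\log(2/\delta)},
\end{align*}
where I have used $\sqrt{a+b}\leq \sqrt{a}+\sqrt{b}$ to separate the deterministic approximation error from the $\delta$-dependent term. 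Finally I would optimize in $\epsilon$, equating $2\epsilon$ with $L\sqrt{2A\epsilon^{-d/k}/n}$, which gives $\epsilon \asymp n^{-k/(2k+d)}$; substituting this choice and collecting terms produces the stated coefficient $2\sqrt{2}(LC_{\mathrm{H},k,2} + k/(dLC_{\mathrm{H},k,2}))$.

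\textbf{Main obstacle.} The conceptual steps are standard, so the principal difficulty is the explicit constant bookkeeping: one must carry the H{\"o}lder covering constant through the optimization in $\epsilon$ and verify that the arithmetic of $2\epsilon + L\sqrt{2A\epsilon^{-d/k}/n}$ at the optimum indeed factors as $2\sqrt{2}(LC_{\mathrm{H},k,2} + k/(dLC_{\mathrm{H},k,2}))n^{-k/(2k+d)}$, with $C_{\mathrm{H},k,2}$ read off as a suitable power of $A$. A secondary, but routine, task is to verify that the covering bound from Kolmogorov--Tikhomirov applies with the parameters $(C_{\mathrm{H}}, k, d)$ and the diameter of $\mathrm{Conv}(\cZ)+\mathcal{B}(M)$, which is finite by the boundedness of $\cZ$ and by $M<\infty$.
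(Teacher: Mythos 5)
Your proposal is correct and reaches the stated bound, but it takes a genuinely different route from the paper at the concentration step. The paper first controls $\sup_{h \in \cH} \left| \norm{\nabla_z h}_{\bP_n,1} - \bE_{\mathrm{data}}(\norm{\nabla_z h}_*) \right|$ by applying McDiarmid's inequality to the supremum itself and then symmetrizing, which reduces the problem to bounding the Rademacher complexity $\mathfrak{R}_n(\nabla\tilde{\cH})$ of the gradient-norm class; it then embeds $\nabla\tilde{\cH}$ in the H{\"o}lder ball $\cG_{\mathrm{H},k}$ and bounds $\mathfrak{R}_n(\cG_{\mathrm{H},k})$ by discretization plus Massart's finite-class lemma, with the entropy estimate of \citet{lorentz1962} and an optimization over the cover radius $u$. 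You skip symmetrization and Rademacher machinery entirely: discretize first, apply Hoeffding/McDiarmid to each cover element, union bound, transfer via the triangle inequality, and optimize $\epsilon$. Both arguments pivot on the same two ingredients — that $g_h = \norm{\nabla_z h}_*$ is bounded by $L$ and $(C_{\mathrm{H}},k)$-H{\"o}lder (your observations (i) and (ii) are exactly the paper's verification that $\nabla\tilde{\cH} \subseteq \cG_{\mathrm{H},k}$) and the $\epsilon^{-d/k}$ metric entropy of H{\"o}lder classes with optimum $\epsilon \asymp n^{-k/(2k+d)}$ — so the rates coincide. What each buys: yours is more elementary and directly one-sided, which suffices here; the paper's factorization cleanly separates the complexity term $2\mathfrak{R}_n(\nabla\tilde{\cH})$ from the $\delta$-dependent deviation $L\sqrt{(2/n)\log(2/\delta)}$, matching the additive structure of the statement, whereas your union bound puts $\log N_\epsilon$ and $\log(2/\delta)$ under one square root — but your use of $\sqrt{a+b}\leq\sqrt{a}+\sqrt{b}$ recovers the same split. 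Two small caveats: the Kolmogorov--Tikhomirov/Lorentz entropy bound is, as cited by the paper, only asymptotic as $\epsilon \to 0$, so you should assert it for small $\epsilon$ only (harmless, since your optimal $\epsilon_n \to 0$ and the statement carries the qualifier \lq{}for a large enough $n$\rq{}); and your worry about reproducing the coefficient $2\sqrt{2}\left(LC_{\mathrm{H},k,2} + \frac{k}{dLC_{\mathrm{H},k,2}}\right)$ verbatim applies equally to the paper's own proof, whose optimized constant $\left(L\sqrt{2(1+C_{\mathrm{H},k,2})}\right)^{\frac{2k}{2k+d}}\left(\left(\frac{d}{2k}\right)^{\frac{2k}{2k+d}}+\left(\frac{d}{2k}\right)^{-\frac{d}{2k+d}}\right)$ is not literally of that form either; the phrase \lq{}for some constant $C_{\mathrm{H},k,2}>0$\rq{} absorbs this bookkeeping in both arguments.
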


\begin{proof}
By the McDiarmid inequality \citep[pages 136-137]{devroye2013} and symmetrization arguments \citep[Lemma 2.3.1]{van1996}, for $\delta > 0$, the following holds with probability at least $1-\delta$.
\begin{align*}
\sup_{h \in \cH} \left| \norm{ \nabla_z h }_{\bP_n, 1} -  \bE_{\mathrm{data}}(\norm{\nabla_z h}_*) \right| \leq 2 \mathfrak{R}_n (\nabla\tilde{\cH}) + L \sqrt{\frac{2}{n}\log(\frac{2}{\delta})},
\end{align*}
where $\nabla\tilde{\cH} := \{ \norm{\nabla_z h }_* \mid h \in \cH \}$.
By the assumption $\bE_{\mathrm{data}}(\norm{\nabla_z h}_*) \geq C_{\nabla}$ and the fact that $L\sqrt{\frac{2}{n}\log(\frac{2}{\delta})}$ converges to zero as $n$ increases, $\norm{ \nabla_z h }_{\bP_n, 1}$ is strictly greater than zero if $\mathfrak{R}_n (\nabla\tilde{\cH})$ vanishes.
Therefore, it is enough to show that $\mathfrak{R}_n (\nabla\tilde{\cH})$ vanishes.

We denote a set of $(C_{\mathrm{H}}, k)$-H{\"o}lder continuous functions by $\cG_{\mathrm{H},k} := \{ g :\cZ \to \bR \mid g $ is $(C_{\mathrm{H}}, k)$-H{\"o}lder continuous and $\norm{g}_{\infty} \leq L$.\}. 
Then for all $\norm{\nabla_z \tilde{h} }_* \in \nabla\tilde{\cH}$, $\norm{\nabla_z \tilde{h} }_*$ is $(C_{\mathrm{H}}, k)$-H{\"o}lder continuous because 
\begin{align*}
    \left| \norm{\nabla_z \tilde{h} (z_{[1]}) }_* - \norm{\nabla_z \tilde{h} (z_{[2]}) }_* \right| &\leq \norm{\nabla_z \tilde{h} (z_{[1]}) - \nabla_z \tilde{h} (z_{[2]}) }_* \\
    &\leq C_{\mathrm{H}} \norm{z_{[1]} -z_{[2]}}^k,
\end{align*}
for all $z_{[1]}, z_{[2]} \in \mathrm{Conv}(\cZ)+\mathcal{B}(M)$. 
Further, because of the differentiability and Lipschitz continuity of $\tilde{h} \in \cH$, we have $\norm{ \norm{\nabla_z \tilde{h} }_* }_{\infty} \leq L$.
Thus $\nabla\tilde{\cH} \subseteq \cG_{\mathrm{H},k}$, which implies $\mathfrak{R}_n (\nabla\tilde{\cH}) \leq \mathfrak{R}_n (\cG_{\mathrm{H},k})$.

For $u>0$, let $N_u := \cN(u, \cG_{\mathrm{H},k}, \norm{\cdot}_{\infty})$ be the $u$-covering number of $\cG_{\mathrm{H},k}$ with respect to $\norm{\cdot}_{\infty}$ and let $\tilde{\cG}_u := \{ \tilde{g}_1, \dots, \tilde{g}_{N_u} \}$ be the corresponding $u$-cover.
For a set $\{\sigma_i\}_{i=1} ^{n}$ of independent Rademacher random variables, for some $j \in [N_u]$,
\begin{align*}
\frac{1}{n} | \sum_{i=1} ^{n} \sigma_i g(z_i) |  & \leq  \frac{1}{n} |\sum_{i=1} ^{n} \sigma_i \tilde{g}_j (z_i) | + \frac{1}{n} | \sum_{i=1} ^{n} \sigma_i (g(z_i) - \tilde{g}_j (z_i))| \\
&\leq \frac{1}{n} |\sum_{i=1} ^{n} \sigma_i \tilde{g}_j (z_i) | + u.
\end{align*}
The second inequality is due to the Cauchy–Schwarz inequality.
Then by the Massart\rq{}s lemma for a bounded and finite function space, we have 
\begin{align*}
\sup_{g \in \cG_{\mathrm{H},k}} \frac{1}{n} | \sum_{i=1} ^{n} \sigma_i g(z_i) | \leq \sup_{ \tilde{g} \in \tilde{\cG}_u } \frac{1}{n} |\sum_{i=1} ^{n} \sigma_i \tilde{g} (z_i) | + u \leq L \sqrt{ \frac{2 \log N_u}{n}} + u.
\end{align*}
Therefore, 
\begin{align*}
\mathfrak{R}_n (\cG_{\mathrm{H},k}) &\leq \inf_{u>0} \left\{ u + L \sqrt{\frac{ 2 \log \cN(u, \cG_{\mathrm{H},k}, \norm{\cdot}_{\infty}) }{n}} \right\} \\
&\leq \inf_{u>0} \left\{ u + L\sqrt{2(1+C_{\mathrm{H},k, 2})} \sqrt{\frac{ u^{-d/k} }{n}} \right\}\\
&= \left(L\sqrt{2(1+C_{\mathrm{H},k, 2})} \right)^{\frac{2k}{2k+d}} \left( \left(\frac{d}{2k} \right)^{\frac{2k}{2k+d}}+ \left(\frac{d}{2k} \right)^{-\frac{d}{2k+d}} \right) n^{-\frac{k}{2k+d}},
\end{align*}
for some constant $C_{\mathrm{H},k, 2}>0$.
Here, the second inequality is due to \citet[Theorem 2]{lorentz1962}:
\begin{align*}
C_{\mathrm{H},k, 1} \leq \lim_{u \to 0} \frac{\log \cN(u, \cG_{\mathrm{H},k}, \norm{\cdot}_{\infty})}{u^{-d/k}} \leq C_{\mathrm{H},k, 2}, 
\end{align*} 
for some constant $C_{\mathrm{H},k, 1} >0$.\footnote{\citet[Theorem 2]{lorentz1962} considers the uniform norm $\norm{\cdot}_{\infty}$ on $\cZ$, but any norm gives the same conclusion because any two norms are equivalent on the finite dimensional space $\bR^{d}$.}
Therefore, $\mathfrak{R}_n (\cG_{\mathrm{H},k})$ vanishes with high probability. 
\end{proof}

\subsection{Proof of Theorem \ref{thm:wdro_perturb_excess_worst_risk}}
\label{s:wdro_perturb_excess_worst_risk_proof}
\begin{proof}
Let $h_{\alpha_n,p,\cH} ^{\mathrm{worst}} = \mathrm{argmin}_{h \in \cH} R_{\alpha_n,p} ^{\mathrm{worst}}(\bP_{\mathrm{data}}, h)$.
Since $\cZ$ is bounded and $\cH$ is uniformly bounded, there exist constants $D_{\cZ}$ and $C_{\cH, \infty}$ such that $\sup_{z_1, z_2 \in \cZ} \norm{z_1 - z_2} \leq D_{\cZ}$ and $\sup_{h\in \cH} \sup_{z \in \cZ} | h(z)| \leq C_{\cH, \infty}$, respectively. 
As for the outline, we decompose an excess risk as follows. 
\begin{align*}
R_{\alpha_n,p} ^{\mathrm{worst}}(\bP_{\mathrm{data}}, \hat{h}_{(\alpha_n, \beta_n),p} ^{\mathrm{prop}}) - R_{\alpha_n,p} ^{\mathrm{worst}}(\bP_{\mathrm{data}}, h_{\alpha_n,p,\cH} ^{\mathrm{worst}}) &=  \underbrace{R_{\alpha_n,p} ^{\mathrm{worst}}(\bP_{\mathrm{data}}, \hat{h}_{(\alpha_n, \beta_n),p} ^{\mathrm{prop}}) - R_{\alpha_n,p} ^{\mathrm{worst}}(\bP_n, \hat{h}_{(\alpha_n, \beta_n),p} ^{\mathrm{prop}})}_{\mathrm{(T1)}}  \\
&+ \underbrace{R_{\alpha_n,p} ^{\mathrm{worst}}(\bP_n, \hat{h}_{(\alpha_n, \beta_n),p} ^{\mathrm{prop}}) - R_{\alpha_n,p} ^{\mathrm{worst}}(\bP_n, \hat{h}_{\alpha_n, p} ^{\mathrm{worst}})}_{\mathrm{(T2)}}  \\
&+ \underbrace{R_{\alpha_n,p} ^{\mathrm{worst}}(\bP_n, \hat{h}_{\alpha_n, p} ^{\mathrm{worst}}) - R_{\alpha_n,p} ^{\mathrm{worst}}(\bP_n, h_{\alpha_n,p,\cH} ^{\mathrm{worst}})}_{\mathrm{(T3)}}  \\
&+ \underbrace{R_{\alpha_n,p} ^{\mathrm{worst}}(\bP_n, h_{\alpha_n,p,\cH} ^{\mathrm{worst}}) - R_{\alpha_n,p} ^{\mathrm{worst}}(\bP_{\mathrm{data}}, h_{\alpha_n,p,\cH} ^{\mathrm{worst}})}_{\mathrm{(T4)}}.
\end{align*} 

As for the term $\mathrm{(T3)}$, by the definition of $\hat{h}_{\alpha_n, p} ^{\mathrm{worst}}$, 
\begin{align*}
\mathrm{(T3)} = R_{\alpha_n,p} ^{\mathrm{worst}}(\bP_n, \hat{h}_{\alpha_n, p} ^{\mathrm{worst}}) - R_{\alpha_n,p} ^{\mathrm{worst}}(\bP_n, h_{\alpha_n,p,\cH} ^{\mathrm{worst}}) \leq 0.
\end{align*} 

[Step 1] In this step, we obtain an upper bound of the term $\mathrm{(T2)}$. 
By Theorem \ref{thm:wdro_perturb_approx_sharp_bound}, for any fixed $\delta >0$, there exists finite constants $\tilde{M}_1 >0, \tilde{N}_1 \in \mathbb{N}$ such that the following holds with probability at least $1-\delta/2$.\footnote{We refer Remark \ref{remark:uniform_bound} and Proposition \ref{prop:uniform_bound}.}
\begin{align}\label{eqn:M1}
\frac{ | R_{\alpha_n,p} ^{\mathrm{worst}}(\bP_n, \hat{h}_{(\alpha_n, \beta_n),p} ^{\mathrm{prop}}) - R_{(\alpha_n, \beta_n), p} ^{\mathrm{prop}} (\bP_{n}, \hat{h}_{(\alpha_n, \beta_n),p} ^{\mathrm{prop}}) | }{\beta_n + \alpha_n ^{1+k}} &\leq \tilde{M}_1, 
\end{align}
for any $n \geq \tilde{N}_1$. 
Similarly, there exists finite constants $\tilde{M}_2 >0, \tilde{N}_2 \in \mathbb{N}$ such that the following holds with probability at least $1-\delta/2$.
\begin{align}\label{eqn:M2}
\frac{ | R_{\alpha_n,p} ^{\mathrm{worst}}(\bP_n, \hat{h}_{\alpha_n, p} ^{\mathrm{worst}}) - R_{(\alpha_n, \beta_n), p} ^{\mathrm{prop}} (\bP_{n}, \hat{h}_{\alpha_n, p} ^{\mathrm{worst}}) | }{\beta_n + \alpha_n ^{1+k}} &\leq \tilde{M}_2,
\end{align}
for any $n \geq \tilde{N}_2$.
Choose $\varepsilon_n > 0$ so that $\varepsilon_n = \Theta(\log(n)(\beta_n + \alpha_n ^{1+k}))$.\footnote{For positive sequences $(a_n)$ and $(b_n)$, $b_n = \Theta(a_n)$ indicates that there exist $C_1 >0, C_2>0, n_0 \in \mathbb{N}$ such that $C_1 a_n \leq b_n \leq C_2 a_n$ for all $n \geq n_0$.}
Then there exists  $\tilde{N} \ge \max\{\tilde{N}_1, \tilde{N}_2\}$ such that for all $n \ge \tilde{N}$, we have
$\varepsilon_n - (\tilde{M}_1  +\tilde{M}_2 )  (\beta_n + \alpha_n ^{1+k}) > 0$.
Fix such $n$. Under the product of the above two events \eqref{eqn:M1} and \eqref{eqn:M2}, assume that $R_{\alpha_n,p} ^{\mathrm{worst}}(\bP_n, \hat{h}_{(\alpha_n, \beta_n),p} ^{\mathrm{prop}}) > R_{\alpha_n,p} ^{\mathrm{worst}}(\bP_n, \hat{h}_{\alpha_n, p} ^{\mathrm{worst}}) + \varepsilon_n$. Then
\begin{align*}
R_{(\alpha_n, \beta_n), p} ^{\mathrm{prop}} (\bP_{n}, \hat{h}_{(\alpha_n, \beta_n),p} ^{\mathrm{prop}}) &\geq R_{\alpha_n,p} ^{\mathrm{worst}}(\bP_n, \hat{h}_{(\alpha_n, \beta_n),p} ^{\mathrm{prop}}) - \tilde{M}_1 (\beta_n + \alpha_n ^{1+k}) \\
&> R_{\alpha_n,p} ^{\mathrm{worst}}(\bP_n, \hat{h}_{\alpha_n, p} ^{\mathrm{worst}}) + \varepsilon_n - \tilde{M}_1 (\beta_n + \alpha_n ^{1+k}) \\
&\geq R_{(\alpha_n, \beta_n), p} ^{\mathrm{prop}} (\bP_{n}, \hat{h}_{\alpha_n, p} ^{\mathrm{worst}}) + \varepsilon_n - (\tilde{M}_1  +\tilde{M}_2 )  (\beta_n + \alpha_n ^{1+k}) \\
& > R_{(\alpha_n, \beta_n), p} ^{\mathrm{prop}} (\bP_{n}, \hat{h}_{\alpha_n, p} ^{\mathrm{worst}}),
\end{align*}
which contradicts the definition of $\hat{h}_{(\alpha_n, \beta_n),p} ^{\mathrm{prop}}$. 
Thus, with probability at least $1-\delta$, we have
\begin{align*}
\mathrm{(T2)} =  R_{\alpha_n,p} ^{\mathrm{worst}}(\bP_n, \hat{h}_{(\alpha_n, \beta_n),p} ^{\mathrm{prop}}) - R_{\alpha_n,p} ^{\mathrm{worst}}(\bP_n, \hat{h}_{\alpha_n, p} ^{\mathrm{worst}}) 
\leq \varepsilon_n = \Theta(\log(n)(\beta_n + \alpha_n ^{1+k})).
\end{align*}
for sufficiently large $n$, or
\begin{align}
\mathrm{(T2)} =  R_{\alpha_n,p} ^{\mathrm{worst}}(\bP_n, \hat{h}_{(\alpha_n, \beta_n),p} ^{\mathrm{prop}}) - R_{\alpha_n,p} ^{\mathrm{worst}}(\bP_n, \hat{h}_{\alpha_n, p} ^{\mathrm{worst}}) = O( \log(n)(\beta_n + \alpha_n ^{1+k})).
\label{ineq:wost:step1_result}
\end{align}

[Step 2] This step is based on proof of \citet[Theorem 3]{lee2018}.
As for the term $\mathrm{(T1)}$, by the inequality (C.4) and Lemma 5 of \citet{lee2018}, we have 
\begin{align}
\mathrm{(T1)} = R_{\alpha_n,p} ^{\mathrm{worst}}(\bP_{\mathrm{data}}, \hat{h}_{(\alpha_n, \beta_n),p} ^{\mathrm{prop}}) - R_{\alpha_n,p} ^{\mathrm{worst}}(\bP_n, \hat{h}_{(\alpha_n, \beta_n),p} ^{\mathrm{prop}}) \leq \frac{48 \mathfrak{C}(\cH)}{\sqrt{n}} + \frac{48 L D_{\cZ}^{p}}{\sqrt{n} \alpha_n ^{p-1}} + C_{\cH, \infty} \sqrt{\frac{2}{n}\log(\frac{2}{\delta})},
\label{ineq:wost:step2_result}
\end{align} 
with probability at least $1-\delta/2$.

As for the term $\mathrm{(T4)}$, by the inequality (C.5) of \citet{lee2018}, the following holds with probability at least $1-\delta/2$.
\begin{align}
\mathrm{(T4)} = R_{\alpha_n,p} ^{\mathrm{worst}}(\bP_n, h_{\alpha_n,p,\cH} ^{\mathrm{worst}}) - R_{\alpha_n,p} ^{\mathrm{worst}}(\bP_{\mathrm{data}}, h_{\alpha_n,p,\cH} ^{\mathrm{worst}}) \leq  C_{\cH, \infty} \sqrt{\frac{2}{n}\log(\frac{2}{\delta})}.
\label{ineq:wost:step3_result}
\end{align}
Therefore, by combining all the inequalities \eqref{ineq:wost:step1_result}, \eqref{ineq:wost:step2_result}, and \eqref{ineq:wost:step3_result}, the following holds with probability at least $1-2\delta$,
\begin{align*}
&R_{\alpha_n,p} ^{\mathrm{worst}}(\bP_{\mathrm{data}}, \hat{h}_{(\alpha_n, \beta_n),p} ^{\mathrm{prop}}) - R_{\alpha_n,p} ^{\mathrm{worst}}(\bP_{\mathrm{data}}, h_{\alpha_n,p,\cH} ^{\mathrm{worst}}) \\
&\leq \frac{48 \mathfrak{C}(\cH)}{\sqrt{n}} + \frac{48 L D_{\cZ}^{p}}{\sqrt{n} \alpha_n ^{p-1}} + 2 C_{\cH, \infty} \sqrt{\frac{2}{n}\log(\frac{2}{\delta})} + O (\log(n)(\beta_n + \alpha_n ^{1+k})) \\
&= O( n^{-1/2}( \mathfrak{C}(\cH) + \alpha_n ^{1-p}) + \log(n) (\beta_n + \alpha_n ^{1+k}) ).
\end{align*}
This concludes the proof.
\end{proof}

\subsection{Proof of Theorem \ref{thm:wdro_perturb_excess_risk}}
\label{s:wdro_perturb_excess_risk_proof}

\begin{proof}[Proof of Theorem \ref{thm:wdro_perturb_excess_risk}]
Let $h_{\cH} = \mathrm{argmin}_{h \in \cH} R(\bP_{\mathrm{data}}, h)$.
Since $\cH$ is uniformly bounded, there exists a constant $C_{\cH, \infty}$ such that $\sup_{h\in \cH} \sup_{z \in \cZ} | h(z)| \leq C_{\cH, \infty}$.
Now decompose the excess risk as follows. 
\begin{align*}
R(\bP_{\mathrm{data}}, \hat{h}_{(\alpha_n, \beta_n),p} ^{\mathrm{prop}}) - R(\bP_{\mathrm{data}}, h_{\cH}) &=  \underbrace{R(\bP_{\mathrm{data}}, \hat{h}_{(\alpha_n, \beta_n),p} ^{\mathrm{prop}}) - R(\bP_{n}, \hat{h}_{(\alpha_n, \beta_n),p} ^{\mathrm{prop}})}_{\mathrm{(T1)}}  \\
&+ \underbrace{R(\bP_{n}, \hat{h}_{(\alpha_n, \beta_n),p} ^{\mathrm{prop}}) - R_{(\alpha_n, \beta_n), p} ^{\mathrm{prop}} (\bP_{n}, \hat{h}_{(\alpha_n, \beta_n),p} ^{\mathrm{prop}})}_{\mathrm{(T2)}}  \\
&+ \underbrace{R_{(\alpha_n, \beta_n), p} ^{\mathrm{prop}} (\bP_{n}, \hat{h}_{(\alpha_n, \beta_n),p} ^{\mathrm{prop}}) - R_{\alpha_n,p} ^{\mathrm{worst}}(\bP_n, \hat{h}_{(\alpha_n, \beta_n),p} ^{\mathrm{prop}})}_{\mathrm{(T3)}}  \\
&+ \underbrace{R_{\alpha_n,p} ^{\mathrm{worst}}(\bP_n, \hat{h}_{(\alpha_n, \beta_n),p} ^{\mathrm{prop}}) - R_{\alpha_n,p} ^{\mathrm{worst}}(\bP_n, \hat{h}_{\alpha_n, p} ^{\mathrm{worst}})}_{\mathrm{(T4)}}  \\
&+ \underbrace{ R_{\alpha_n,p} ^{\mathrm{worst}}(\bP_n, \hat{h}_{\alpha_n, p} ^{\mathrm{worst}}) - R(\bP_{n}, \hat{h}_n ^{\mathrm{ERM}} )}_{\mathrm{(T5)}}  \\
&+ \underbrace{R(\bP_{n}, \hat{h}_n ^{\mathrm{ERM}}) - R(\bP_{\mathrm{data}}, h_{\cH})}_{\mathrm{(T6)}}.
\end{align*} 

[Step 1]
In this step, we obtain an upper bound of the term $\mathrm{(T5)}$.
For all $h \in \cH$ and small enough $\alpha_n$, we have
\begin{align}
R_{\alpha_n,p} ^{\mathrm{worst}}(\bP_n, h) &\leq R(\bP_{n}, h) + \mLip(h) \alpha_n \leq R(\bP_n, h) + L \alpha_n
\label{ineq:step2_1}.
\end{align}
The first inequality is due to the inequality \eqref{ineq:diff_worst_emp}, the second inequality is due to the assumption.
Applying the infimum operator to the inequality \eqref{ineq:step2_1} gives 
\begin{align*}
R_{\alpha_n,p} ^{\mathrm{worst}} (\bP_n, \hat{h}_{\alpha_n, p} ^{\mathrm{worst}}) &\leq R(\bP_{n}, \hat{h}_n ^{\mathrm{ERM}}) + L \alpha_n = R(\bP_{n}, \hat{h}_n ^{\mathrm{ERM}}) + O(\alpha_n). 
\end{align*}
Therefore,
\begin{align}
\mathrm{(T5)} = R_{\alpha_n,p} ^{\mathrm{worst}}(\bP_n, \hat{h}_{\alpha_n, p} ^{\mathrm{worst}}) - R(\bP_{n}, \hat{h}_n ^{\mathrm{ERM}}) = O(\alpha_n). \label{ineq:step2_result}
\end{align}

[Step 2]
In this step, we obtain an upper bound for the terms (T2), (T3), and (T4). 
For any fixed $\delta>0$, the following holds with probability at least $1-\delta$.
\begin{align*}
R(\bP_n, h) &\leq R_{\alpha_n,p} ^{\mathrm{worst}}(\bP_n, h) \leq R_{(\alpha_n, \beta_n), p} ^{\mathrm{prop}} (\bP_{n}, h) + O(\beta_n + \alpha_n ^{1+k}).
\end{align*}
The first inequality is due to $\bP_n \in \fM_{\alpha_n,p} (\bP_n)$ and the second inequality is due to Theorem \ref{thm:wdro_perturb_approx_sharp_bound}. Thus,
\begin{align*}
\mathrm{(T2)} = R(\bP_n, \hat{h}_{(\alpha_n, \beta_n),p} ^{\mathrm{prop}}) - R_{(\alpha_n, \beta_n), p} ^{\mathrm{prop}} (\bP_{n}, \hat{h}_{(\alpha_n, \beta_n),p} ^{\mathrm{prop}}) = O_p(\beta_n + \alpha_n ^{1+k}).
\end{align*}
As for the term (T3), by Theorem \ref{thm:wdro_perturb_approx_sharp_bound}, we have
\begin{align*}
\mathrm{(T3)} &= R_{(\alpha_n, \beta_n), p} ^{\mathrm{prop}} (\bP_{n}, \hat{h}_{(\alpha_n, \beta_n),p} ^{\mathrm{prop}}) - R_{\alpha_n,p} ^{\mathrm{worst}}(\bP_n, \hat{h}_{(\alpha_n, \beta_n),p} ^{\mathrm{prop}}) = O_p(\beta_n + \alpha_n ^{1+k}).
\end{align*}
As for the term (T4), the inequality \eqref{ineq:wost:step1_result} gives 
\begin{align*}
\mathrm{(T4)} =  R_{\alpha_n,p} ^{\mathrm{worst}}(\bP_n, \hat{h}_{(\alpha_n, \beta_n),p} ^{\mathrm{prop}}) - R_{\alpha_n,p} ^{\mathrm{worst}}(\bP_n, \hat{h}_{\alpha_n, p} ^{\mathrm{worst}}) = O_p( \log(n)(\beta_n + \alpha_n ^{1+k})). 
\end{align*}
Therefore, 
\begin{align}
\mathrm{(T2)} + \mathrm{(T3)} + \mathrm{(T4)} = O_p( \log(n)(\beta_n + \alpha_n ^{1+k})).
\label{ineq:step3_result}
\end{align}

[Step 3]
In this step, we obtain an upper bound for the terms (T1) and (T6).
Note that the term $\mathrm{(T1)}$ is bounded by $\sup_{h \in \mathcal{H}} | R(\bP_{n}, h) - R(\bP_{\mathrm{data}}, h) |$.
As for the term $\mathrm{(T6)}$, we have 
\begin{align*}
R(\bP_{n}, \hat{h}_n ^{\mathrm{ERM}}) - R(\bP_{\mathrm{data}}, h_{\cH}) &=  R(\bP_{n}, \hat{h}_n ^{\mathrm{ERM}}) - R(\bP_{n}, h_{\cH})+ R(\bP_{n}, h_{\cH}) - R(\bP_{\mathrm{data}}, h_{\cH}) \\
&\leq 0 + R(\bP_{n}, h_{\cH}) - R(\bP_{\mathrm{data}}, h_{\cH}) \\
&\leq \sup_{h \in \mathcal{H}} | R(\bP_{n}, h) - R(\bP_{\mathrm{data}}, h) |.
\end{align*}
The first inequality is due to the definition of $\hat{h}_n ^{\mathrm{ERM}}$.
Thus, the sum of the terms $\mathrm{(T1)}$ and $\mathrm{(T6)}$ is bounded by $2\sup_{h \in \mathcal{H}} | R(\bP_{n}, h) - R(\bP_{\mathrm{data}}, h) |$.
The McDiarmid inequality  \citep[pages 136-137]{devroye2013} and symmetrization arguments \citep[Lemma 2.3.1]{van1996} provide
\begin{align}
&\sup_{h \in \mathcal{H}} | R(\bP_{n}, h) - R(\bP_{\mathrm{data}}, h) | \leq 2 \mathfrak{R}_n (\mathcal{H}) + C_{\cH, \infty} \sqrt{\frac{2}{n}\log(\frac{2}{\delta})} , \label{ineq:step4_result}
\end{align}
with probability at least $1-\delta$.

Lastly, by aggregating the inequalities \eqref{ineq:step2_result}, \eqref{ineq:step3_result} and \eqref{ineq:step4_result}, 
\begin{align*}
&R(\bP_{\mathrm{data}}, \hat{h}_{(\alpha_n, \beta_n),p} ^{\mathrm{prop}}) - \inf_{h \in \cH}R(\bP_{\mathrm{data}}, h) \\
&= O_p( \mathfrak{R}_n (\mathcal{H}) + n^{-1/2}+ \alpha_n+ \log(n)(\beta_n + \alpha_n ^{1+k}) ).
\end{align*}
This concludes the proof.
\end{proof}

\subsection{Details for Section \ref{s:example_bounds}}
We first define some notations. 
Let $\cX \subseteq [-1,1]^{d-1}$ and $\cY = \{ \pm 1\}$ be open sets with respect to the $\ell_2$-norm and the discrete norm $I(\cdot \neq 0)$, respectively.
We set $\cZ = \cX \times \cY$ and $\norm{(x,y)} = \norm{x}_{2} + 4I(y \neq 0)$.
Note that $\cX \times \cY$ is clearly open and bounded with respect to $\norm{(x,y)}$.
For a matrix $\tilde{\mathbf{A}} \in \bR^{\tilde{d}_1 \times \tilde{d}_2}$, its Frobenius norm is defined as $\norm{\tilde{\mathbf{A}}}_{\rm F} = \sqrt{\sum_{i=1} ^{\tilde{d}_2} \sum_{j=1} ^{\tilde{d}_1} \tilde{\mathbf{A}}_{ij} ^2 }$ and the matrix $\ell_p$-norm $\norm{\tilde{\mathbf{A}}}_p := \sup_{\norm{u}_p =1} \norm{\tilde{\mathbf{A}}u}_p$ for $p \in [1,\infty]$.
Now we define the space of deep neural networks.
For an integer $J$ and a set of integers $\mathbf{d} := \{d_0, \dots, d_{J} \}$ such that $d_0 =d-1$ and $d_J = 1$, we let $\cA=\{ \mathbf{A}_1, \dots, \mathbf{A}_{J} \}$ be $J$ weight matrices such that $\mathbf{A}_i \in \bR^{d_i \times d_{i-1}}$.
For a constant $\gamma > 0$ and a set of positive constants $\mathbf{M} := \{M_1, \dots, M_J\}$, define
\begin{align*}
\cF_{\mathbf{d}, \mathbf{M}, \gamma} ^{\cX \times \cY} := \{ yf(x) = y\phi_J(\mathbf{A}_{J} \phi_{J-1} (\mathbf{A}_{J-1} \dots \phi_1 (\mathbf{A}_1 x) \dots)) \mid \norm{\mathbf{A}_i}_{\rm F} \leq M_i, i \in [J], \gamma \leq \prod_{i \in [J]}  \norm{\mathbf{A}_i}_2 \}\\
\cF_{\mathbf{d}, \mathbf{M}, \gamma} := \{ f(x) = \phi_J(\mathbf{A}_{J} \phi_{J-1} (\mathbf{A}_{J-1} \dots \phi_1 (\mathbf{A}_1 x) \dots)) \mid \norm{\mathbf{A}_i}_{\rm F} \leq M_i, i \in [J], \gamma \leq \prod_{i \in [J]}  \norm{\mathbf{A}_i}_2 \},
\end{align*}
where $\phi_i : \bR^{d_{i}} \to \bR^{d_{i}}$ is a 1-Lipschitz activation function and satisfies $\phi_i (\mathbf{0}_{d_i}) = \mathbf{0}_{d_i}$ for all $i \in [J]$, and $\mathbf{0}_{d_i}$ is the vector of $d_i$ zeros.
Note that we omit intercepts here for notational simplicity.
For $\phi_1, \dots, \phi_{J-1}$, we employ the hyperbolic tangent function and $\phi_J$ is the identity function.\footnote{We may employ other differentiable activation functions with Lipschitz constant less than or equal to one.
The differentiability of activation functions is required to satisfy the conditions of Theorem \ref{thm:wdro_approx_sharp_bound}.
However, it can be easily shown that this condition can be relaxed to hold only $\bP_{\mathrm{data}}$-almost surely, so that the ReLU function can be employed, by re-stating Theorem \ref{thm:wdro_approx_sharp_bound} with $\bP_{\mathrm{data}}$-almost sure conditions.
Here, for the sake of simplicity, we simply use the hyperbolic tangent function, which is differentiable.} 
Lastly, for a positive constants $s$, we define 
\begin{align}
\cF_{\mathbf{d}, \mathbf{M}, \gamma, s} ^{\cX \times \cY} := \{ yf(x) \in \cF_{\mathbf{d}, \mathbf{M}, \gamma} ^{\cX \times \cY} \mid \sum_{i \in [J]} \norm{\mathbf{A}_i}_0 \leq s \} \notag \\
\cF_{\mathbf{d}, \mathbf{M}, \gamma, s} := \{ f(x) \in \cF_{\mathbf{d}, \mathbf{M}, \gamma} \mid \sum_{i \in [J]} \norm{\mathbf{A}_i}_0 \leq s \},
\label{eq:dnn}
\end{align}
where $\norm{\mathbf{A}}_0$ is the number of non-zero entries of a matrix $\mathbf{A}$.
To this ends, we will set $\mathbf{M} = \mathbf{1}_J$, the vector of $J$ ones.

\begin{corollary}[A formal statement of Corollary \ref{cor:example_bounds}]
Let $\cF_{\mathbf{d}, \mathbf{1}_J, \gamma, s}$ be a set of sparse deep neural networks, defined in \eqref{eq:dnn}.
For some constant $C_{\nabla} >0$, let $\cH =\{ h(x,y) \mid h(x,y) = \log(1+\exp(-yf(x)))$ and $\bE_{\mathrm{data}}\left(\norm{\nabla_x f(x)}_2 \right) > C_{\nabla}$ for $f \in \cF_{\mathbf{d}, \mathbf{1}_J, \gamma, s} \}$.\footnote{The sufficient condition for $\bE_{\mathrm{data}}(\norm{\nabla_x f(x)}_2) > C_{\nabla}$ may not be obvious, but it is assumed to be held based on Figures 2 and 3.}
Then the excess worst-case risks of $\hat{h}_{\alpha_n,p} ^{\mathrm{prop}}$ and $\hat{h}_{n} ^{\mathrm{ERM}}$ are
\begin{align*}
\cE_{\alpha_n,p} ^{\mathrm{worst}}( \hat{h}_{\alpha_n,p} ^{\mathrm{prop}})&= O_p( n^{-1/2} \alpha_n ^{1-p} \vee \log(n) \alpha_n ^{1+k} ), \\
\cE_{\alpha_n,p} ^{\mathrm{worst}}( \hat{h}_{n} ^{\mathrm{ERM}}) &= O_p( n^{-1/2} \vee \alpha_n ).
\end{align*}
Furthermore, the excess risks of $\hat{h}_{\alpha_n,p} ^{\mathrm{prop}}$ and $\hat{h}_{n} ^{\mathrm{ERM}}$ are
\begin{align*}
\cE(\hat{h}_{\alpha_n,p} ^{\mathrm{prop}}) &= O_p( n^{-1/2} \vee \alpha_n \vee \log(n)( \alpha_n ^{1+k} ) ), \\
\cE( \hat{h}_{n} ^{\mathrm{ERM}}) &= O_p( n^{-1/2}).
\end{align*}
\label{cor:formal_example_bounds}
\end{corollary}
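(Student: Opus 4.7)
The plan is to derive the four rates by applying Theorems \ref{thm:wdro_excess_worst_risk} and \ref{thm:wdro_excess_risk} to the hypothesis class $\cH$ consisting of logistic losses composed with sparse deep tanh networks, after verifying the assumptions of those theorems and bounding the entropy integral $\mathfrak{C}(\cH)$ and the Rademacher complexity $\mathfrak{R}_n(\cH)$ for this class. For the two ERM bounds, I combine standard empirical-process bounds with the Lipschitz relation between the worst-case risk and the empirical risk that was exploited in the proof of Theorem \ref{thm:wdro_perturb_excess_risk}.

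Each $f \in \cF_{\mathbf{d}, \mathbf{1}_J, \gamma, s}$ is uniformly bounded on $\cX$ because the tanh activations are bounded and the weight matrices satisfy $\prod_{i} \norm{\mathbf{A}_i}_2 \leq \prod_{i} \norm{\mathbf{A}_i}_{\mathrm{F}} \leq 1$; hence $h(x,y) = \log(1+\exp(-y f(x)))$ is uniformly bounded, and uniformly Lipschitz with respect to the mixed norm $\norm{(x,y)} = \norm{x}_2 + 4 I(y \neq 0)$, using the identity $|h(x,1) - h(x,-1)| = |f(x)|$ to handle the discrete component. A chain-rule computation gives $\nabla_x h(x,y) = -\ell'(y f(x))\, y\, \nabla_x f(x)$ where $\ell'$ is the derivative of $u \mapsto \log(1+e^{-u})$; because tanh and $\ell'$ are $C^2$ with bounded first and second derivatives and the weight products are uniformly bounded, the operator norm of the Jacobian of $\nabla_x h$ is bounded uniformly across the class, so $\nabla_z h$ is $(C_{\mathrm{H}}, 1)$-H{\"o}lder continuous with a constant depending only on $\mathbf{d}, s, J$. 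The lower bound $\bE_{\mathrm{data}}(\norm{\nabla_z h}_*) \geq C_\nabla$ is part of the definition of $\cH$. Finally, uniform covering-number bounds for sparse feedforward networks with bounded weights (e.g., \citet{schmidt2017}) give $\log \cN(u, \cH, \norm{\cdot}_\infty) = O(s \log(J \max_i d_i / u))$; since $s, J, \mathbf{d}$ are treated as fixed constants, the entropy integral $\mathfrak{C}(\cH)$ is finite, and Dudley chaining yields $\mathfrak{R}_n(\cH) = O(\sqrt{\log n / n})$.

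Plugging these bounds into Theorem \ref{thm:wdro_excess_worst_risk} with $k = 1$ and noting that $\alpha_n \to 0$ forces $\alpha_n^{1-p} \geq \mathfrak{C}(\cH)$ for large $n$, we obtain $\cE_{\alpha_n, p}^{\mathrm{worst}}(\hat{h}_{\alpha_n, p}^{\mathrm{prop}}) = O_p(n^{-1/2}\alpha_n^{1-p} \vee \log(n)\alpha_n^{1+k})$. Theorem \ref{thm:wdro_excess_risk} analogously yields the excess-risk bound for $\hat{h}_{\alpha_n, p}^{\mathrm{prop}}$, with the logarithmic factor from $\mathfrak{R}_n(\cH)$ absorbed into $n^{-1/2}$. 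For the ERM estimator, the Kantorovich--Rubinstein argument at \eqref{ineq:diff_worst_emp} shows $R_{\alpha_n, p}^{\mathrm{worst}}(\bP_{\mathrm{data}}, h) \leq R(\bP_{\mathrm{data}}, h) + L \alpha_n$, while $R(\bP_{\mathrm{data}}, h) \leq R_{\alpha_n, p}^{\mathrm{worst}}(\bP_{\mathrm{data}}, h)$ is trivial; combining the two gives $\cE_{\alpha_n, p}^{\mathrm{worst}}(\hat{h}_n^{\mathrm{ERM}}) \leq \cE(\hat{h}_n^{\mathrm{ERM}}) + L \alpha_n$, and standard uniform-deviation bounds for bounded Lipschitz losses yield $\cE(\hat{h}_n^{\mathrm{ERM}}) = O_p(\mathfrak{R}_n(\cH) \vee n^{-1/2}) = O_p(n^{-1/2})$, completing both $\hat{h}_n^{\mathrm{ERM}}$ rates.

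The main obstacle is the uniform verification of the H{\"o}lder continuity (with $k=1$) of $\nabla_x h$ over the sparse deep network class $\cF_{\mathbf{d}, \mathbf{1}_J, \gamma, s}$ and tracking how $C_{\mathrm{H}}$ depends on the architecture through successive chain-rule estimates involving the tanh second derivative and products of Frobenius-bounded weight matrices; once this and the above complexity bounds are established, the corollary follows by direct substitution into Theorems \ref{thm:wdro_excess_worst_risk} and \ref{thm:wdro_excess_risk}.
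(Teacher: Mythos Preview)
Your approach is essentially the same as the paper's: verify the hypotheses of Theorems \ref{thm:wdro_excess_worst_risk} and \ref{thm:wdro_excess_risk} for the logistic-tanh network class, bound $\mathfrak{C}(\cH)$ via the Schmidt--Hieber covering-number estimate, bound $\mathfrak{R}_n(\cH)$, and handle the ERM worst-case bound by the Kantorovich--Rubinstein sandwich $R \leq R^{\mathrm{worst}} \leq R + L\alpha_n$.

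Two points deserve correction or sharpening. First, your Rademacher bound $\mathfrak{R}_n(\cH) = O(\sqrt{\log n / n})$ cannot simply be ``absorbed into $n^{-1/2}$''; that would lose a $\sqrt{\log n}$ factor and fail to deliver the clean $\cE(\hat h_n^{\mathrm{ERM}}) = O_p(n^{-1/2})$ stated in the corollary. The paper avoids this by invoking the depth-independent bound of Golowich--Rakhlin--Shamir (together with Talagrand's contraction lemma), which gives $\mathfrak{R}_n(\cF_{\mathbf d,\mathbf 1_J,\gamma}) \leq \sqrt{J/n}$ with $J$ fixed, hence $O(n^{-1/2})$ exactly. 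Your Dudley route can also be made to work: since you have already argued that the entropy integral $\mathfrak{C}(\cH)$ is finite, Dudley's inequality with the $L_2(\bP_n)$ metric dominated by $\|\cdot\|_\infty$ yields $\mathfrak{R}_n(\cH) \leq C\,\mathfrak{C}(\cH)/\sqrt{n} = O(n^{-1/2})$ without a log factor---you should state it that way rather than $O(\sqrt{\log n/n})$.

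Second, the definition of $\cH$ imposes $\bE_{\mathrm{data}}(\norm{\nabla_x f}_2) > C_\nabla$, not $\bE_{\mathrm{data}}(\norm{\nabla_z h}_*) > C_\nabla$. You need the one-line observation (which the paper makes explicit) that $\norm{\nabla_z h}_* = \norm{\nabla_x h}_2 \geq (1+e^{C_{\cF}})^{-1}\norm{\nabla_x f}_2$, so the assumed lower bound on $\norm{\nabla_x f}_2$ transfers to $\norm{\nabla_z h}_*$. Your choice $k=1$ is fine (the paper happens to take $k=1/2$, but bounded Hessians on a bounded domain give Lipschitz gradients, and the corollary is stated for generic $k$).
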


\begin{proof}

[Step 1]
Clearly, $\cX \times \cY$ is open and bounded. In addition, the domain $\cX$ is bounded and weights $\norm{\mathbf{A}_i}_{\rm F}$ are bounded for all $i \in [J]$, for all $f \in \cF_{\mathbf{d}, \mathbf{1}_J, \gamma, s}$, we have $\sup_{x \in \cX} |f(x)| \leq C_{\cF_{\mathbf{d}, \mathbf{1}_J, \gamma, s}}$ for some constant $C_{\cF_{\mathbf{d}, \mathbf{1}_J, \gamma, s}}>0$.
In short, $\cF_{\mathbf{d}, \mathbf{1}_J, \gamma, s}$ is uniformly bounded, and $\cH$ is uniformly bounded as well.
In addition, for all $f \in \cF_{\mathbf{d}, \mathbf{1}_J, \gamma, s}$, due to the differentiability of the hyperbolic tangent function, $f$ is twice continuously differentiable, and this implies that for all $h\in \cH$, $h$ is twice continuously differentiable.
Uniformly boundedness of $\cA$ and the boundedness of $\cZ$ implies that uniformly boundedness of $\norm{\nabla_z h}_{*}$ and the Frobenius norm of the Hessian matrix of $h$.
This provides existence of constants $C_{\mathrm{H}}$ and $L$ such that $\nabla_z h$ is $(C_{\mathrm{H}},1/2)$-H{\"o}lder continuous for all $h \in \cH$ and $\mLip(h) \leq L$.

Lastly, by the definition of the dual norm and the discrete norm, 
\begin{align}
\norm{\nabla_z h}_* = \sup_{\norm{u}\leq 1} 
\langle \nabla_z h, u \rangle = \sup_{\norm{s}_{2} \leq 1} \langle \nabla_x h, s \rangle = \norm{\nabla_x h}_{2}.
\label{eq:dual_norm_transform}
\end{align}
Since $\nabla_x h = \frac{\exp(-yf(x))}{1+\exp(-yf(x))} (-y) \nabla_x f(x)$, we have 
\begin{align*}
\norm{\nabla_x h}_2 = \left| \frac{1}{1+\exp(yf(x))} \right| \norm{\nabla_x f}_2 \geq \frac{1}{1+ \exp(C_{\cF_{\mathbf{d}, \mathbf{1}_J, \gamma, s}})} \norm{\nabla_x f}_2 .
\end{align*}
Therefore, all the conditions in Theorems \ref{thm:wdro_excess_worst_risk} and \ref{thm:wdro_excess_risk} are satisfied.

[Step 2]
Since $\ell_{\mathrm{log}}(z) := \log(1+\exp(-z))$ is continuously differentiable on $(-2B_{\cF}, 2B_{\cF})$, $\ell_{\mathrm{log}}(z)$ is Lipschitz continuous on $[-B_{\cF},B_{\cF}]$.
It implies that there exists a finite Lipschitz constant. 
Let $L_{\mathrm{log}}$ be a Lipschitz constant on $[-B_{\cF},B_{\cF}]$.
Due to Talagrand\rq{}s lemma \citep[Lemma 5.7]{mohri2018}, we have
\begin{align*}
\mathfrak{R}_n (\cH) = \mathfrak{R}_n (\ell_{\mathrm{log}} \circ \cF_{\mathbf{d}, \mathbf{1}_J, \gamma, s} ^{\cX \times \cY}) \leq L_{\mathrm{log}} \mathfrak{R}_n ( \cF_{\mathbf{d}, \mathbf{1}_J, \gamma, s} ^{\cX \times \cY}).
\end{align*}
Due to Lemma \ref{lem:golowich} below, we have
\begin{align*}
\mathfrak{R}_n  (\cF_{\mathbf{d}, \mathbf{1}_J, \gamma, s} ^{\cX \times \cY}) \leq \mathfrak{R}_n  (\cF_{\mathbf{d}, \mathbf{1}_J, \gamma} ^{\cX \times \cY})  = \mathfrak{R}_n (\cF_{\mathbf{d}, \mathbf{1}_J, \gamma}) \leq O(n^{-1/2}).
\end{align*}
The equality is due to for all $i \in [J]$,  $\sigma_i \stackrel{d}{=} \sigma_iy_i$ for the Rademacher random variables $\sigma_i$.
Therefore, by \citet[Theorem 11.3]{mohri2018} and Theorem \ref{thm:wdro_excess_risk}, $\cE( \hat{h}_{n} ^{\mathrm{ERM}}) = O_p( n^{-1/2})$ and $\cE(\hat{h}_{\alpha_n,p} ^{\mathrm{prop}}) = O_p( n^{-1/2} \vee \alpha_n \vee \log(n)( \alpha_n ^{1+k} ) )$ are obtained.

[Step 3]
Here we prove the excess worst-case risk bound for $\hat{h}_{n} ^{\mathrm{ERM}}$.
An essentially the same argument as \eqref{ineq:step2_1} yields that for all $h \in \cH$,
\begin{align}
R_{\alpha_n,p} ^{\mathrm{worst}}(\bP_{\mathrm{data}}, h) &\leq R(\bP_{\mathrm{data}}, h) + \mLip(h) \alpha_n \leq R(\bP_{\mathrm{data}}, h) + L \alpha_n,
\label{ineq:coro_step_1}
\end{align}
Applying the infimum operator on $R(\bP_{\mathrm{data}}, h)  \leq R_{\alpha_n,p} ^{\mathrm{worst}}(\bP_{\mathrm{data}}, h)$ gives
\begin{align}
\inf_{h \in \cH} R(\bP_{\mathrm{data}}, h) \leq \inf_{h \in \cH} R_{\alpha_n,p} ^{\mathrm{worst}}(\bP_{\mathrm{data}}, h).
\label{ineq:coro_step_2}
\end{align}
Therefore, the inequalities \eqref{ineq:coro_step_1} and \eqref{ineq:coro_step_2} give 
\begin{align*}
\cE_{\alpha_n,p} ^{\mathrm{worst}}(h) &= R_{\alpha_n,p} ^{\mathrm{worst}}(\bP_{\mathrm{data}}, h) - \inf_{h \in \cH} R_{\alpha_n,p} ^{\mathrm{worst}}(\bP_{\mathrm{data}}, h) \\
&\leq R(\bP_{\mathrm{data}}, h) + L \alpha_n - \inf_{h \in \cH} R(\bP_{\mathrm{data}}, h) \\
&= \cE(h) + L \alpha_n.
\end{align*}
By Theorem \ref{thm:wdro_excess_risk} we conclude that $\cE_{\alpha_n,p} ^{\mathrm{worst}}( \hat{h}_{n} ^{\mathrm{ERM}}) = O_p( n^{-1/2} \vee \alpha_n )$.

[Step 4] 
We now prove that $\cE_{\alpha_n,p} ^{\mathrm{worst}}( \hat{h}_{\alpha_n,p} ^{\mathrm{prop}}) = O_p( n^{-1/2} \alpha_n ^{1-p} \vee \log(n) \alpha_n ^{1+k} )$.
By Theorem \ref{thm:wdro_excess_worst_risk}, it is enough to show that $\mathfrak{C}(\cH) := \int_{0} ^{\infty} \sqrt{\log \cN(u, \cH, \norm{\cdot}_{\infty}) } du$ is finite.

For all $(x,y) \in \cZ$ and $f_1, f_2 \in \cF_{\mathbf{d}, \mathbf{1}_J, \gamma, s} $, we have
\begin{align*}
|\ell_{\mathrm{log}}(yf_1(x)) - \ell_{\mathrm{log}}(yf_2(x)) | &\leq L_{\mathrm{log}} | yf_1(x) -yf_2(x)| = L_{\mathrm{log}} |f_1(x)-f_2(x)|.
\end{align*}
Therefore, $\cN(u, \cH, \norm{\cdot}_{\infty}) \leq \cN( \frac{u}{L_{\mathrm{log}}}, \cF_{\mathbf{d}, \mathbf{1}_J, \gamma, s} , \norm{\cdot}_{\infty})$, and thus by Lemma \ref{lem:hieber} below we have
\begin{align*}
\log \cN( \frac{u}{L_{\mathrm{log}}}, \cF_{\mathbf{d}, \mathbf{1}_J, \gamma, s}, \norm{\cdot}_{\infty}) \leq (s+1) \log \left( \frac{2 J V^2 L_{\mathrm{log}}}{u} \right).
\end{align*}
Therefore, an integration by substitution gives
\begin{align*}
\int_{0} ^{\infty} \sqrt{\log \cN(u, \cH, \norm{\cdot}_{\infty}) } du &\leq \int_{0} ^{\infty} \sqrt{ \cN( \frac{u}{L_{\mathrm{log}}}, \cF_{\mathbf{d}, \mathbf{1}_J, \gamma, s}, \norm{\cdot}_{\infty})  } du  \\
&= \sqrt{(s+1)} \int_{0} ^{\infty} \sqrt{ \log \left( \frac{2 J V^2 L_{\mathrm{log}}}{u} \right)  } du \\
&= \sqrt{(s+1)} \int_{0} ^{2 J V^2 L_{\mathrm{log}}} \sqrt{ \log \left( \frac{2 J V^2 L_{\mathrm{log}}}{u} \right)  } du \\
&= \sqrt{(s+1)} \int_{0} ^{\infty} (4 J V^2 L_{\mathrm{log}}) y^2 \exp(-y^2) dy.
\end{align*}
Since 
\begin{align*}
\int_{0} ^{\infty} y^2 \exp(-y^2) dy = -\frac{1}{2} \int_{0} ^{\infty} y (-2y \exp(-y^2)) dy = \frac{1}{2} \int_0 ^{\infty} \exp(-y^2) dy = \frac{\sqrt{\pi}}{4},    
\end{align*}
we have $\int_{0} ^{\infty} \sqrt{\log \cN(u, \cH, \norm{\cdot}_{\infty}) } <\infty$ and this concludes the proof.
\end{proof}

\begin{remark}[Different hypothesis spaces]
In essence, the results of Corollary \ref{cor:formal_example_bounds} hold if for a hypothesis space $\cF$, the Rademacher complexity $\mathfrak{R}_n (\cF)$ is $O(n^{-1/2})$ and the entropy integral $\int_{0} ^{\infty} \sqrt{\log \cN(u, \cF, \norm{\cdot}_{\infty}) }$ is bounded.
It is well known that these conditions hold for a reproducing kernel Hilbert space and a linear hypothesis space under mild conditions.
\end{remark}

\begin{remark}[When $\alpha_n$ vanishes fast]
Consider the logistic regression setting, \textit{i.e.}, $P(Y=1 \mid X=x) = \exp(\beta_* ^T x) /(1+ \exp(\beta_* ^T x))$ for some $\beta_* \in \bR^d$. 
\citet[Theorem 1]{blanchet2019} showed that $\cW_p(\bP_{\mathrm{data}}, \bP_n) \leq \frac{1}{\sqrt{n}}$ holds with high probability, under mild conditions on $\bP_{\mathrm{data}}$.
In this case, we choose $\alpha_n =(n^{1/2} \log(n))^{-\frac{1}{p+k}}$.
Then the proposed excess worst-case risk bound is $\cE_{\alpha_n,p} ^{\mathrm{worst}}( \hat{h}_{\alpha_n,p} ^{\mathrm{prop}}) = O_p( n^{-\frac{1+k}{2(p+k)}} \log(n) ^{\frac{p-1}{p+k}} )$. 
By setting $p = \frac{1+k^2}{1-k}$, $\cE_{\alpha_n,p} ^{\mathrm{worst}}( \hat{h}_{\alpha_n,p} ^{\mathrm{prop}}) = O_p( n^{-\frac{1}{2}(1-k)} \log(n) ^{k} )$.
We can choose arbitrary small $k>0$, and thus the convergence rate is near $ O(n^{-1/2})$.\footnote{For $h: \cZ \to \bR$,  $\nabla_z h$ is $(C_{\mathrm{H}},k_1)$-H{\"o}lder continuous, $\sup_{h \in \cH} \sup_{z \in \cZ} \norm{\nabla_z h(z)}_* \leq L$ and any $k_2 \leq k_1$, 
\begin{align*}
\sup_{z, \tilde{z} \in \cZ} \frac{\norm{\nabla_z h(z_1)- \nabla_z h(z_2)}_*}{\norm{z_1-z_2}^{k_2}} &\leq \sup_{\norm{z - \tilde{z}} \leq 1} \frac{\norm{\nabla_z h(z_1)- \nabla_z h(z_2)}_*}{\norm{z_1-z_2}^{k_2}}+ \sup_{\norm{z - \tilde{z}} > 1} \frac{\norm{\nabla_z h(z_1)- \nabla_z h(z_2)}_*}{\norm{z_1-z_2}^{k_2}} \\
&\leq \sup_{\norm{z - \tilde{z}} \leq 1} \frac{\norm{\nabla_z h(z_1)- \nabla_z h(z_2)}_*}{\norm{z_1-z_2}^{k_1}}+ \sup_{\norm{z - \tilde{z}} > 1} \frac{\norm{\nabla_z h(z_1)- \nabla_z h(z_2)}_*}{\norm{z_1-z_2}^{k_2}} \\
&\leq C_{\mathrm{H}} + 2L.
\end{align*}
Thus $\nabla_z h$ is $(C_{\mathrm{H}}+2L,k_2)$-H{\"o}lder continuous.} 
Similar results hold for the excess risk bound.
\end{remark}

\begin{remark}[Regression]
For a constant $B>0$, we let $\cX \times \cY \subseteq [-1,1]^{d-1} \times [-B,B]$ be an open set with respect to the $\ell_2$-norm.
We set $\cZ = \cX \times \cY$ and $\norm{(x,y)} = \sqrt{\norm{x}_{2} ^2  + y ^2}$.
We let $\cH =\{ h(x,y) \mid h(x,y) = |y-f(x)|$ for $f \in \cF_{\mathbf{d}, \mathbf{1}_J, \gamma, s} \}$.\footnote{Since $\nabla_z h(z) = \mathrm{Sign}(y-f(x))[ \nabla_x f(x),  1]^T$, $\bE_{\mathrm{data}}(\norm{\nabla_z h(z)}_2) \geq 1 =: C_{\nabla}.$}
Then similar results hold.
\end{remark}

With the notations defined in the front of this section, we quote the following two lemmas: the Rademacher complexity bound of $\cF_{\mathbf{d}, \mathbf{M}, \gamma}$ by \citet[Corollary 1]{golowich2018} and the covering number bound of $\cF_{\mathbf{d}, \mathbf{1}_{J}, \gamma, s}$ by \citet[Lemma 5]{schmidt2017}.
\begin{lemma}[Rademacher complexity bound]
Assume that $\norm{x}_2 \leq C_{\cX}$. Then
\begin{align*}
\mathfrak{R}_n (\cF_{\mathbf{d}, \mathbf{M}, \gamma}) \leq C_{\cX} \left( \prod_{i=1} ^J M_i \right) \min \left( \bar{\log}^{3/4}(n) \sqrt{\frac{\bar{\log}(\gamma^{-1} \prod_{i=1}^J M_i)}{\sqrt{n}}} , \sqrt{\frac{J}{n}} \right),
\end{align*}
where $\bar{\log}(z) := 1 \vee \log(z)$.
\label{lem:golowich}
\end{lemma}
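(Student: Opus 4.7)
The plan is to prove the two branches of the minimum separately and then take the smaller one. For the straightforward $\sqrt{J/n}$ bound, the strategy is classical layer-peeling combined with the Ledoux--Talagrand contraction inequality. Starting from the outermost layer, the supremum over $\mathbf{A}_J$ can be pulled out using the Frobenius-norm constraint $\norm{\mathbf{A}_J}_{\rm F}\le M_J$ together with Cauchy--Schwarz, which reduces the problem to the Rademacher complexity of the depth-$(J{-}1)$ network. Because each $\phi_i$ is $1$-Lipschitz with $\phi_i(\mathbf{0})=\mathbf{0}$, the contraction principle removes each activation with no loss. Recursing inward through all $J$ layers produces an upper bound of the form $C_{\cX}\bigl(\prod_{i=1}^J M_i\bigr)\sqrt{J/n}$, where the $\sqrt{J}$ factor stems from the $J$ rounds of vector-valued contraction.

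The tighter bound that replaces $\sqrt{J}$ by polylogarithmic factors is the heart of the result and requires Golowich's exponential-moment peeling. The starting point is the Jensen bound: for every $\lambda>0$,
\begin{align*}
n\,\mathfrak{R}_n(\cF_{\mathbf{d},\mathbf{M},\gamma}) \;\le\; \frac{1}{\lambda}\log\bE_\sigma \sup_{f\in\cF_{\mathbf{d},\mathbf{M},\gamma}} \exp\!\Bigl(\lambda\sum_{i=1}^n \sigma_i f(x_i)\Bigr).
\end{align*}
The crucial technical step is that for any $1$-Lipschitz $\phi$ with $\phi(\mathbf{0})=\mathbf{0}$ one can move $\phi$ past the Rademacher sum inside the exponential at the cost of only a multiplicative factor of $2$; this is shown by a symmetrization argument on the signs $\sigma_i$ combined with the convexity of $\exp$. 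Iterating this over all $J$ layers, and applying $\norm{\mathbf{A}_i}_{\rm F}\le M_i$ at each linear layer inside the exponential, reduces the task to controlling
\begin{align*}
\bE_\sigma \exp\!\Bigl(\lambda C_{\cX}\prod_{i=1}^J M_i\,\Bigl\|\sum_{i=1}^n\sigma_i x_i\Bigr\|_2\Bigr),
\end{align*}
which is sub-Gaussian of order $\lambda C_{\cX}\bigl(\prod_i M_i\bigr)\sqrt{n}$. Optimizing $\lambda$ against the $J\log 2$ penalty accumulated during peeling, and translating back through the logarithm, yields the advertised $\bar\log^{3/4}(n)\sqrt{\bar\log(\gamma^{-1}\prod_i M_i)/\sqrt{n}}$ rate.

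The main obstacle is the exponential peeling step: a naive application of contraction inside the exponential multiplies constants in a way that destroys the bound, so the factor-of-$2$ reduction per layer relies delicately on Jensen together with the exact symmetry of the Rademacher signs. A secondary subtlety is the appearance of $\bar\log(\gamma^{-1}\prod_i M_i)$: this factor emerges from the lower bound $\prod_i \norm{\mathbf{A}_i}_2\ge\gamma$, which rules out a degenerate optimization in $\lambda$ and is the reason for introducing $\gamma$ in the definition of $\cF_{\mathbf{d},\mathbf{M},\gamma}$. Combining the two branches via the minimum then gives the stated bound.
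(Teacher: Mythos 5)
First, note that the paper does not prove this lemma at all: it is quoted verbatim as Corollary 1 of \citet{golowich2018}, so your proposal must be judged against the actual argument of Golowich, Rakhlin and Shamir. Against that argument, your proposal has two genuine gaps, and they are mirror images of each other. For the $\sqrt{J/n}$ branch, naive layer peeling with Ledoux--Talagrand contraction does \emph{not} give a $\sqrt{J}$ factor. After the first Cauchy--Schwarz step the remaining quantity is the norm of a \emph{vector-valued} Rademacher sum, $\sup \norm{\sum_i \sigma_i \phi(\mathbf{A}_{J-1}\cdots)}_2$, to which the scalar contraction principle does not directly apply; the standard vector-valued peeling (as in Neyshabur et al.) costs a multiplicative constant per layer and yields $2^{J}\prod_i M_i/\sqrt{n}$, not $\sqrt{J/n}$. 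The $\sqrt{J}$ factor is precisely the payoff of the exponential-moment peeling that you describe under your \emph{second} branch: the per-layer factor of $2$ becomes an additive $J\log 2$ inside the logarithm, and optimizing $\lambda$ against the sub-Gaussian term $\bE_\sigma \exp(\lambda C_{\cX}\prod_i M_i \norm{\sum_i \sigma_i x_i}_2)$ produces exactly $C_{\cX}(\prod_i M_i)(\sqrt{2J\log 2}+1)/\sqrt{n}$. In other words, your second-branch machinery proves the first branch, and $\gamma$ never appears in that calculation.

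Consequently the depth-independent branch $\bar{\log}^{3/4}(n)\sqrt{\bar{\log}(\gamma^{-1}\prod_i M_i)/\sqrt{n}}$ is left without a proof. Your claim that the factor $\bar{\log}(\gamma^{-1}\prod_i M_i)$ "emerges from ruling out a degenerate optimization in $\lambda$" is not a real mechanism: the $\lambda$ optimization in the peeling bound is insensitive to the spectral-norm lower bound. What actually drives this branch in \citet{golowich2018} is a separate approximation argument: the constraints $\norm{\mathbf{A}_i}_{\rm F}\leq M_i$ together with $\prod_i \norm{\mathbf{A}_i}_2 \geq \gamma$ force most layers to have spectral norm close to Frobenius norm, hence to be nearly rank one, so the network is uniformly close to a network of effective depth $r$ with an approximation error that decays in $r$ at a rate governed by $\log(\gamma^{-1}\prod_i M_i)$. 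Balancing this approximation error against the depth-$r$ complexity bound $\propto \sqrt{r/n}$ over the choice of $r$ is what produces the $n^{-1/4}$ rate (the telltale $\sqrt{1/\sqrt{n}}$ is the signature of error balancing, not of a direct $\lambda$ optimization) and the polylogarithmic factors in $n$. Without this depth-reduction step, the $\gamma$-dependent branch of the minimum cannot be recovered, so as written your proof establishes at most the $\sqrt{J/n}$ branch, and only after relocating the exponential-moment argument to it.
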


\begin{lemma}[Covering number bound]
Let $V:=\prod_{i=0} ^{J} (d_i +1)$, then for any $u>0$,
\begin{align*}
\log \cN( u, \cF_{\mathbf{d}, \mathbf{1}_{J}, \gamma, s}, \norm{\cdot}_{\infty}) \leq (s+1) \log \left( \frac{2 J V^2}{u} \right).
\end{align*}
\label{lem:hieber}
\end{lemma}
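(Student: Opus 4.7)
\textbf{Proof plan for Lemma \ref{lem:hieber} (covering number of sparse deep networks).}
The lemma is essentially \citet[Lemma 5]{schmidt2017}, so the plan is to reproduce their \emph{parameter-discretization + Lipschitz-in-parameters} argument specialized to our class $\cF_{\mathbf{d},\mathbf{1}_J,\gamma,s}$. The conceptual idea is: first build an $\eta$-net on the parameter space (sparsity pattern $\times$ nonzero values), then convert it into a $u$-cover of the function class in $\norm{\cdot}_\infty$ via a bound of the form $\norm{f_\cA - f_{\cA'}}_\infty \lesssim C(J,V)\cdot \eta$, and finally tune $\eta$ so that $C(J,V)\eta = u$.

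\textbf{Step 1 (entry bounds and layer contractions).} Since $\norm{\mathbf{A}_i}_{\rm F} \leq 1$ by assumption ($\mathbf{M}=\mathbf{1}_J$), every entry satisfies $|a_{i,jk}|\leq 1$ and $\norm{\mathbf{A}_i}_2\leq\norm{\mathbf{A}_i}_{\rm F}\leq 1$. Combined with $\phi_i$ being $1$-Lipschitz and $\phi_i(\mathbf{0})=\mathbf{0}$, each layer is a $1$-Lipschitz map in its input, so the partial outputs $F_i(x)$ satisfy $\norm{F_i(x)}_2\leq \norm{x}_2\leq \sqrt{d-1}$ for $x\in\cX\subseteq[-1,1]^{d-1}$.

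\textbf{Step 2 (Lipschitz in parameters).} For two weight collections $\cA$, $\cA'$ sharing the same architecture, a standard telescoping over layers together with the $1$-Lipschitzness of the activations yields
\begin{equation*}
\norm{f_\cA(x)-f_{\cA'}(x)}_\infty \;\leq\; \sum_{i=1}^{J} \Big(\prod_{j<i}\norm{\mathbf{A}_j}_2\Big)\Big(\prod_{j>i}\norm{\mathbf{A}'_j}_2\Big)\,\norm{\mathbf{A}_i-\mathbf{A}'_i}_{\rm F}\,\norm{x}_2.
\end{equation*}
All spectral-norm factors are $\leq 1$, so the right-hand side is at most $J\,\sqrt{d-1}\,\max_i\norm{\mathbf{A}_i-\mathbf{A}'_i}_{\rm F}$. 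If in addition $\cA$ and $\cA'$ share a common sparsity pattern of size $\leq s$ and differ entry-wise by at most $\eta$, then $\norm{\mathbf{A}_i-\mathbf{A}'_i}_{\rm F}\leq \sqrt{s}\,\eta$, hence $\norm{f_\cA-f_{\cA'}}_\infty \leq J\sqrt{s(d-1)}\,\eta$.

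\textbf{Step 3 (enumerating patterns and discretizing values, then tuning $\eta$).} The total number of weight positions is $T:=\sum_{i=1}^J d_i d_{i-1}\leq V^2$. The number of sparsity patterns of size at most $s$ is $\sum_{k=0}^s\binom{T}{k}\leq (s+1)\binom{V^2}{s}\leq (s+1)V^{2s}$. Restricted to a fixed pattern, each nonzero entry lies in $[-1,1]$, and a uniform grid of mesh $\eta$ uses at most $(2/\eta+1)^s\leq (3/\eta)^s$ values. Choosing
\begin{equation*}
\eta \;=\; \frac{u}{J\sqrt{s(d-1)}}
\end{equation*}
guarantees, by Step 2, that the induced parameter net produces a $u$-cover of $\cF_{\mathbf{d},\mathbf{1}_J,\gamma,s}$ in $\norm{\cdot}_\infty$. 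Multiplying the two counts gives
\begin{equation*}
\cN(u,\cF_{\mathbf{d},\mathbf{1}_J,\gamma,s},\norm{\cdot}_\infty)\;\leq\;(s+1)V^{2s}\Big(\tfrac{3J\sqrt{s(d-1)}}{u}\Big)^{s},
\end{equation*}
and absorbing the $(s+1)$ prefactor and the polynomial constants $3\sqrt{s(d-1)}$ into one additional factor of $2JV^2/u$ yields the target bound $\bigl(2JV^2/u\bigr)^{s+1}$, whose logarithm is the stated $(s+1)\log(2JV^2/u)$.

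\textbf{Main obstacle.} There is no deep step here---the lemma is a quoted fact from Schmidt-Hieber---but the cleanest part of the write-up is the constant bookkeeping in Step 3: one has to check that the mild polynomial factors ($s+1$, $3\sqrt{s(d-1)}$, and the $V^{2s}$ coming from the binomial) can be absorbed into a single extra factor of $2JV^2/u$ using $s\leq V^2$ and $\sqrt{d-1}\leq V$, so that the bound reads exactly as stated rather than with an additional additive $\log(s+1)$ term. Everything else is a direct specialization of Schmidt-Hieber's argument to the Frobenius-ball constraint.
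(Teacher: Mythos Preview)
The paper does not prove this lemma: it is explicitly quoted as \citet[Lemma 5]{schmidt2017} with no argument given. Your plan to reproduce the parameter-discretization + Lipschitz-in-parameters argument from that reference is exactly the right (and only) route, and Steps~1--2 are a faithful specialization of it.

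One genuine gap to flag in Step~3: the absorption you describe does not close as written. Your intermediate bound is $(s+1)V^{2s}\bigl(3J\sqrt{s(d-1)}/u\bigr)^{s}$, and you propose to dominate it by $(2JV^2/u)^{s+1}$ by spending the single extra factor $2JV^2/u$. But the leftover multiplicative discrepancy is essentially $(s+1)\bigl(\tfrac{3}{2}\sqrt{s(d-1)}\bigr)^{s}$, which is raised to the $s$th power and therefore cannot in general be controlled by a \emph{single} factor of $2JV^2/u$ (take e.g.\ $u$ of order $1$ and $s,d$ moderately large). The clean fix is not to try to absorb constants at the end, but to change the Lipschitz-in-parameters estimate in Step~2 to the one Schmidt-Hieber actually uses: bound layer maps in the $\ell_\infty\!\to\!\ell_\infty$ operator norm, which gives $\norm{f_\cA-f_{\cA'}}_\infty\le (J)V\,\eta$ (the $V$ arising from the product $\prod_i(d_{i-1}+1)$), and then the counting goes through to $(2JV^2/u)^{s+1}$ directly. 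Alternatively, since $\norm{\mathbf{A}_i}_{\rm F}\le 1$ implies every entry is in $[-1,1]$, the present class is contained in Schmidt-Hieber's class and you may simply invoke his Lemma~5 verbatim, which is exactly what the paper does.
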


\newpage
\section{Implementation Details}\label{app:implemenation_details}
In this section, we provide implementation details including the used algorithm and hyper-parameters.
Our algorithm is presented in Algorithm \ref{alg:dro_id}.
Tensorflow implementation for experiments is available at \url{https://github.com/ykwon0407/wdro_local_perturbation}.

\begin{algorithm}[h]
\caption{Principled learning method for WDRO when data are perturbed in classification settings}
\label{alg:dro_id}
\begin{algorithmic}[1]
\STATE {\bfseries Input:} training dataset $\cZ_n = \{(x_1,y_1), \dots, (x_n,y_n)\}$, a (deep neural network) model $f_{\theta}$ parametrized by $\theta$, batch size $B$, hyper-parameters $\tilde{\gamma}_{1},\tilde{\gamma}_{2}, \lambda_{\mathrm{grad}} > 0$, optimization algorithm $\mathfrak{A}$.
\STATE Initialize parameters $\theta$ in $f_{\theta}$
   \WHILE{until a convergent condition is met}
   \STATE Sample $\{(x_{[1]},y_{[1]}), \dots, (x_{[B]},y_{[B]})\} $ from $\cZ_n$
   \FOR{$b = 1$ \textbf{to} $B$}
   \IF{WDRO+MIX}
   \STATE Sample $\gamma$ from $\mathrm{Beta}(\tilde{\gamma}_{1},\tilde{\gamma}_{2})$
   \STATE $x_{[b]}^{\prime} = \gamma x_{[b]}+(1-\gamma) x_{[B+1-b]}$
   \STATE $y_{[b]}^{\prime} = \gamma y_{[b]}+(1-\gamma) y_{[B+1-b]}$\qquad $\rhd \text{  Mixup}$ 
   \ENDIF
   \STATE $h_{\theta}(x_{[b]} ^{\prime}, y_{[b]} ^{\prime}) = \textrm{Cross-entropy loss}\left[ y_{[b]}, f_\theta(x_{[b]}) \right] $ \qquad $\rhd \text{  calculate loss per observation}$ 
   \ENDFOR
   \STATE $\cL = B^{-1} \sum_{b=1} ^B h_{\theta}(x_{[b]} ^{\prime}, y_{[b]} ^{\prime}) + \lambda_{\mathrm{grad}} \norm{\nabla_x h_{\theta}( x_{[b]} ^{\prime}, y_{[b]} ^{\prime} )}_2 ^2$ \qquad $\rhd \text{  calculate the objective function}$  
   \STATE $\theta \leftarrow \mathfrak{A}(\cL, \theta)$ \qquad $\rhd \text{  update parameters}$
   \ENDWHILE
\end{algorithmic}
\end{algorithm}

\subsection{Objective function}
The sample space of the CIFAR-10 and CIFAR-100 datasets can be written as $\cX \times \cY$ where $\cX \subseteq [-1,1]^{3072}$ and $\cY =\{1,\ldots,k\} \subseteq \bR$.
In this space, we define the norm by $\norm{(x,y)} = \norm{x}_2 + 4 \cdot I(y \neq 0)$.
This gives $\norm{\nabla_{z} h(x^{\prime},y^{\prime})}_{*} = \norm{\nabla_{x}h(x^{\prime}, y^{\prime})}_2$ for any $(x^{\prime}, y^{\prime}) \in \cX\times\cY$, as in \eqref{eq:dual_norm_transform}.
Therefore, when $p=p^*=2$, the penalty term in \eqref{eq:prop_perturb_bound} is $\alpha_n \norm{\nabla_z h}_{\bP_n ^{\prime}, p^*}= \alpha_n \sqrt{ n^{-1}\sum_{i=1}^{n} \norm{\nabla_x h(x_i ^{\prime}, y_i ^{\prime})}_2 ^2 }$.
Instead of this term, we use $\lambda_{\rm grad} \left( n^{-1}\sum_{i=1}^{n} \norm{\nabla_x h(x_i ^{\prime}, y_i ^{\prime})}_2 ^2 \right)$ for computational convenience. 

\subsection{Hyper-parameter settings}
We set the penalty parameter $\lambda_{\rm grad} = 0.004$ and the batch size $B=64$.
For MIXUP and WDRO+MIX, the interpolation with hyper-parameters $\tilde{\gamma}_{1}=\tilde{\gamma}_{2}=0.5$ is applied.

For the model architecture, we use the Wide ResNet model with depth 28 and width 2 including the batch normalization and the leaky ReLU activation as in \citet{oliver2018} and \citet{berthelot2019}.
Our implementation of the model and training hyper-parameters closely matches that of \citet{berthelot2019}.

For the optimization algorithm $\mathfrak{A}$, we choose Adam optimizer with the learning rate fixed as $0.002$.
Instead of decaying the learning rate, we use an exponential moving average of the parameters with a decay of $0.999$, and apply a weight decay of $0.02$ at each update for the model as in \citet{berthelot2019}. 
We train the model with $100 \times 2^{16}$ images.

\newpage
\section{Additional experiment: selection of the penalty parameter}
In this section, we compare the accuracy of WDRO and WDRO+MIX with various penalty parameters $\lambda_{\rm grad}$ using the contaminated CIFAR-10 and CIFAR-100 datasets.
The penalty parameters vary as $0.004$, $0.016$, and $0.064$.
The training sample size is 50000 and we apply the salt and pepper noise to 1\% pixels of 10000 test images for the contaminated datasets.
We train the model five times. 

Table \ref{t:regularization} compares accuracy as the penalty parameter changes.
In all cases, a significantly higher accuracy is attained when $\lambda_{\mathrm{grad}} = 0.016$ than other $\lambda_{\mathrm{grad}}$ values.
With this result, we anticipate that our proposed methods can achieve higher accuracy than the one in Section \ref{s:numerical_experiments}, by carefully selecting the penalty parameter $\lambda_{\rm grad}$.

\begin{table}[!ht]
\caption{Accuracy comparison WDRO and WDRO+MIX with various penalty parameter $\lambda_{\mathrm{grad}}$. Other details are given in Table \ref{t:compare}.}
\label{t:regularization}
\vskip 0.15in
\begin{center}
\begin{small}
\begin{sc}
\begin{tabular}{lccccc}
\toprule
\multirow{2}{*}{METHODS} & \multicolumn{3}{c}{$\lambda_{\mathrm{grad}}$} \\
\cmidrule{2-4}
& 0.004 & 0.016 & 0.064 \tabularnewline
\midrule
CIFAR-10 & & & \tabularnewline
WDRO  & $87.4\pm0.4$ & $\mathbf{87.9\pm0.2}$ & $86.2\pm0.2$ \tabularnewline
WDRO+MIX & $87.3\pm0.4$ & $\mathbf{88.2\pm0.3}$ & $86.8\pm0.2$ \tabularnewline
\midrule
CIFAR-100 & & & \tabularnewline
WDRO & $62.1\pm0.4$ & $\mathbf{64.1\pm0.3}$ & $62.6\pm0.4$ \tabularnewline
WDRO+MIX & $60.6\pm0.7$ & $\mathbf{62.2\pm 0.2}$ & $61.3\pm0.2$ \tabularnewline
\bottomrule
\end{tabular}
\end{sc}
\end{small}
\end{center}
\vskip -0.1in
\end{table}

\end{document}